\documentclass[11pt]{article}
\usepackage{times}


 \usepackage[letterpaper, left=1in, right=1in, top=1in, bottom=1in]{geometry}

\usepackage{parskip}

\usepackage[dvipsnames,table]{xcolor}
\usepackage[colorlinks=true, linkcolor=blue, citecolor=blue]{hyperref}
\usepackage{microtype}

\usepackage[utf8]{inputenc} 
\usepackage[T1]{fontenc}    
\usepackage{url}            
\usepackage{booktabs}       
\usepackage{amsfonts}       
\usepackage{nicefrac}       


\usepackage{natbib}
\bibliographystyle{plainnat}
\bibpunct{(}{)}{;}{a}{,}{,}

\usepackage{amsthm}
\usepackage{mathtools}
\usepackage{amsmath}
\usepackage{bbm}
\usepackage{amsfonts}
\usepackage{amssymb}

\usepackage{MnSymbol} 

\usepackage{xpatch}


\theoremstyle{definition}  

\newtheorem{lemma}{Lemma}

\newtheorem{proposition}{Proposition}
\newtheorem{fact}{Fact}
\newtheorem{assumption}{Assumption}

\theoremstyle{plain}

\newtheorem{theorem}{Theorem}
\newtheorem{definition}{Definition}

\xpatchcmd{\proof}{\itshape}{\normalfont\proofnameformat}{}{}
\newcommand{\proofnameformat}{\bfseries}


\usepackage{prettyref}
\newcommand{\pref}[1]{\prettyref{#1}}
\newcommand{\pfref}[1]{Proof of \prettyref{#1}}
\newcommand{\savehyperref}[2]{\texorpdfstring{\hyperref[#1]{#2}}{#2}}
\newrefformat{eq}{\savehyperref{#1}{\textup{(\ref*{#1})}}}
\newrefformat{eqn}{\savehyperref{#1}{Equation~\ref*{#1}}}
\newrefformat{tab}{\savehyperref{#1}{Table~\ref*{#1}}}
\newrefformat{lem}{\savehyperref{#1}{Lemma~\ref*{#1}}}
\newrefformat{def}{\savehyperref{#1}{Definition~\ref*{#1}}}
\newrefformat{thm}{\savehyperref{#1}{Theorem~\ref*{#1}}}
\newrefformat{fact}{\savehyperref{#1}{Fact~\ref*{#1}}}
\newrefformat{corr}{\savehyperref{#1}{Corollary~\ref*{#1}}}
\newrefformat{sec}{\savehyperref{#1}{Section~\ref*{#1}}}
\newrefformat{app}{\savehyperref{#1}{Appendix~\ref*{#1}}}
\newrefformat{ass}{\savehyperref{#1}{Assumption~\ref*{#1}}}
\newrefformat{ex}{\savehyperref{#1}{Example~\ref*{#1}}}
\newrefformat{fig}{\savehyperref{#1}{Figure~\ref*{#1}}}
\newrefformat{alg}{\savehyperref{#1}{Algorithm~\ref*{#1}}}
\newrefformat{rem}{\savehyperref{#1}{Remark~\ref*{#1}}}
\newrefformat{conj}{\savehyperref{#1}{Conjecture~\ref*{#1}}}
\newrefformat{prop}{\savehyperref{#1}{Proposition~\ref*{#1}}}
\newrefformat{proto}{\savehyperref{#1}{Protocol~\ref*{#1}}}
\newrefformat{prob}{\savehyperref{#1}{Problem~\ref*{#1}}}
\newrefformat{claim}{\savehyperref{#1}{Claim~\ref*{#1}}}

\DeclarePairedDelimiter{\abs}{\lvert}{\rvert} %
\DeclarePairedDelimiter{\brk}{[}{]}
\DeclarePairedDelimiter{\crl}{\{}{\}}
\DeclarePairedDelimiter{\prn}{(}{)}
\DeclarePairedDelimiter{\nrm}{\|}{\|}
\DeclarePairedDelimiter{\tri}{\langle}{\rangle}

\DeclarePairedDelimiter{\ceil}{\lceil}{\rceil}

\DeclareMathOperator{\En}{\mathbb{E}}

\DeclareMathOperator*{\argmin}{arg\,min} 

\newcommand{\ls}{\ell}
\newcommand{\ind}{\mathbbm{1}}    
\newcommand{\pmo}{\crl*{\pm{}1}}
\newcommand{\eps}{\epsilon}
\newcommand{\veps}{\varepsilon}

\newcommand{\ldef}{\vcentcolon=}
\newcommand{\rdef}{=\vcentcolon}

\newcommand{\mb}[1]{\boldsymbol{#1}}

\newcommand{\wt}[1]{\widetilde{#1}}
\newcommand{\wh}[1]{\widehat{#1}}

\def\ddefloop#1{\ifx\ddefloop#1\else\ddef{#1}\expandafter\ddefloop\fi}
\def\ddef#1{\expandafter\def\csname bb#1\endcsname{\ensuremath{\mathbb{#1}}}}
\ddefloop ABCDEFGHIJKLMNOPQRSTUVWXYZ\ddefloop
\def\ddefloop#1{\ifx\ddefloop#1\else\ddef{#1}\expandafter\ddefloop\fi}
\def\ddef#1{\expandafter\def\csname b#1\endcsname{\ensuremath{\mathbf{#1}}}}
\ddefloop ABCDEFGHIJKLMNOPQRSTUVWXYZ\ddefloop
\def\ddef#1{\expandafter\def\csname c#1\endcsname{\ensuremath{\mathcal{#1}}}}
\ddefloop ABCDEFGHIJKLMNOPQRSTUVWXYZ\ddefloop
\def\ddef#1{\expandafter\def\csname h#1\endcsname{\ensuremath{\widehat{#1}}}}
\ddefloop ABCDEFGHIJKLMNOPQRSTUVWXYZ\ddefloop
\def\ddef#1{\expandafter\def\csname hc#1\endcsname{\ensuremath{\widehat{\mathcal{#1}}}}}
\ddefloop ABCDEFGHIJKLMNOPQRSTUVWXYZ\ddefloop
\def\ddef#1{\expandafter\def\csname t#1\endcsname{\ensuremath{\widetilde{#1}}}}
\ddefloop ABCDEFGHIJKLMNOPQRSTUVWXYZ\ddefloop
\def\ddef#1{\expandafter\def\csname tc#1\endcsname{\ensuremath{\widetilde{\mathcal{#1}}}}}
\ddefloop ABCDEFGHIJKLMNOPQRSTUVWXYZ\ddefloop

\newcommand{\Holder}{H{\"o}lder}

\usepackage{enumitem}
\usepackage{xspace}
\usepackage{hhline}
\usepackage{amssymb}

\usepackage[scaled=.9]{helvet}


\newcommand{\sgn}{\mathrm{sgn}}

\newcommand{\xr}[1][n]{x_{1:#1}}

\newcommand{\yr}[1][n]{y_{1:#1}}
\newcommand{\zr}[1][n]{z_{1:#1}}

\newcommand{\W}{\mathcal{W}}

\newcommand{\Bspace}{\mathfrak{B}}

\newcommand{\grad}{\nabla}

\newcommand{\trn}{\intercal}

\renewcommand{\trn}{\dagger}



\newcommand{\poprisk}{L_{\cD}}
\newcommand{\emprisk}{\wh{L}_{n}}




\newcommand{\midsem}{\,;}

\newenvironment{manualtheorem}[1]{
  
  \manualtheoreminner
}{\endmanualtheoreminner}

\newcommand\numberthis{\addtocounter{equation}{1}\tag{\theequation}}

\renewcommand{\trn}{\top}

\newcommand{\glmtron}{\textsf{GLMtron}\xspace}

\newcommand{\pl}{Polyak-\L{}ojasiewicz\xspace}

\newcommand{\kl}{Gradient Domination\xspace}
\newcommand{\klshort}{GD\xspace}

\newcommand{\est}{\wh{w}}
\newcommand{\walg}{\wh{w}^{\mathrm{alg}}}
\newcommand{\wopt}{w^{\star}}
\newcommand{\optrisk}{L^{\star}}

\newcommand{\tightoverset}[2]{\mathop{#2}\limits^{\vbox to -.35ex{\kern-0.75ex\hbox{$#1$}\vss}}}

\newcommand{\rad}{\mathfrak{R}}

\newcommand{\radvec}{\tightoverset{\rightarrow}{\rad}}
\newcommand{\radnorm}{\rad_{\nrm*{\cdot}}}

\newcommand{\beps}{\mb{\eps}}

\renewcommand{\tW}{\wt{\cW}}

\newcommand{\eigmin}{\lambda_{\mathrm{min}}}
\newcommand{\eigminr}{\psi_{\mathrm{min}}}

\newcommand{\muhigh}{\mu_{\text{h}}}
\newcommand{\Chigh}{C_{\text{h}}}
\newcommand{\mulow}{\mu_{\text{l}}}
\newcommand{\Clow}{C_{\text{l}}}
\newcommand{\musparse}{\mu_{\text{s}}}
\newcommand{\Csparse}{C_{\text{s}}}
\newcommand{\tgamma}{\tilde{\gamma}}

\newcommand{\ones}{\mb{1}}
\newcommand{\vphi}{\varphi}

\newcommand{\softmargin}{$\phi$-soft-margin\xspace}

\newcommand{\empdist}{\wh{\cD}_n}

\setlength{\belowcaptionskip}{-10pt}
\setlength{\intextsep}{0pt}

\usepackage{color-edits}
\addauthor{df}{red}
\addauthor{as}{orange}
\addauthor{ks}{blue}


\title{\huge{Uniform Convergence of Gradients for\\  Non-Convex Learning and Optimization}}

\author{  
   Dylan J. Foster\qquad{}Ayush Sekhari\qquad{}Karthik Sridharan\\~\\
     {\normalsize \texttt{\{djfoster, sekhari, sridharan\}@cs.cornell.edu}}
}

\date{}

\begin{document}

\maketitle

\begin{abstract}
We investigate 1) the rate at which refined properties of the empirical risk---in particular, gradients---converge to their population counterparts in standard non-convex learning tasks, and 2) the consequences of this convergence for optimization. Our analysis follows the tradition of \emph{norm-based capacity control}. We propose vector-valued Rademacher complexities as a simple, composable, and user-friendly tool to derive \emph{dimension-free} uniform convergence bounds for gradients in non-convex learning problems.
As an application of our techniques, we give a new analysis of batch gradient descent methods for non-convex generalized linear models and non-convex robust regression, showing how to use any algorithm that finds approximate stationary points to obtain optimal sample complexity, even when dimension is high or possibly infinite and multiple passes over the dataset are allowed.

Moving to non-smooth models we show----in contrast to the smooth case---that even for a single ReLU it is not possible to obtain dimension-independent convergence rates for gradients in the worst case. On the positive side, it is still possible to obtain dimension-independent rates under a new type of distributional assumption.
\end{abstract}

\section{Introduction}
\label{sec:intro}

The last decade has seen a string of empirical successes for gradient-based algorithms solving large scale non-convex machine learning problems \citep{krizhevsky2012imagenet, he2016deep}. Inspired by these successes, the theory community has begun to make progress on understanding when gradient-based methods succeed for non-convex learning in certain settings of interest \citep{jain2017non}. The goal of the present work is to introduce learning-theoretic tools to---in a general sense---improve understanding of when and why gradient-based methods succeed for non-convex learning problems.

In a standard formulation of the non-convex statistical learning problem, we aim to solve
\[
\textrm{minimize} \quad \poprisk(w)\ldef{} \En_{(x,y)\sim{}\cD}\ls(w\midsem{}x,y),
\]
where $w\in\cW\subseteq{}\bbR^{d}$ is a parameter vector, $\cD$ is an unknown probability distribution over the instance space $\cX\times{}\cY$, and the loss $\ls$ is a potentially non-convex function of $w$. The learner cannot observe $\cD$ directly and instead must find a model $\est\in\cW$ that minimizes $\poprisk$ given only access to i.i.d. samples $(x_1,y_1),\ldots,(x_n,y_n)\sim{}\cD$. Their performance is quantified by the \emph{excess risk} $\poprisk(\est)-\optrisk$, where $\optrisk=\inf_{w\in\cW}\poprisk(w)$.

Given only access to samples, a standard (``sample average approximation'') approach is to attempt to minimize the \emph{empirical risk} $\emprisk(w)\ldef{}\frac{1}{n}\sum_{t=1}^{n}\ls(w\midsem{}x_t,y_t)$.
If one succeeds at minimizing $\emprisk$, classical statistical learning theory provides a comprehensive set of tools to bounds the excess risk of the procedure. The caveat is that when $\ls$ is non-convex, global optimization of the empirical risk may be far from easy. It is not typically viable to even \emph{verify} whether one is at a global minimizer of $\emprisk$. Moreover, even if the population risk $\poprisk$ has favorable properties that make it amenable to gradient-based optimization, the empirical risk may not inherit these properties due to stochasticity. In the worst case, minimizing $\poprisk$ or $\emprisk$ is simply intractable. However, recent years have seen a number of successes showing that for non-convex problems arising in machine learning, iterative optimizers can succeed both in theory and in practice (see \cite{jain2017non} for a survey). Notably, while minimizing $\emprisk$ might be challenging, there is an abundance of gradient methods that provably find approximate stationary points of the empirical risk, i.e. $\nrm*{\grad\emprisk(w)}\leq{}\veps$ \citep{nesterov2013introductory,ghadimi2013stochastic,reddi2016stochastic,allen2016variance,lei2017non}.
 In view of this, the present work has two aims: {\em First}, to provide a general set of tools to prove uniform convergence results for gradients
 , with the goal of bounding how many samples are required to ensure that with high probability over samples, simultaneously over all $w \in \W$, $\|\nabla \poprisk(w)\| \le \|\nabla \emprisk(w)\| + \veps$;~~{\em Second,} to explore concrete non-convex problems where one can establish that the excess risk $\poprisk(\est)-\optrisk$ is small a consequence of this gradient uniform convergence. Together, these two directions yield direct bounds on the convergence of non-convex gradient-based learning algorithms to low excess risk.

Our precise technical contributions are as follows:
\begin{itemize}
\item We bring vector-valued Rademacher complexities \citep{maurer2016vector} and associated vector-valued contraction principles to bear on the analysis of uniform convergence for gradients.  This approach enables \emph{norm-based capacity control}, meaning that the bounds are independent of dimension whenever the predictor norm and data norm are appropriately controlled. We introduce a ``chain rule'' for Rademacher complexity, which enables one to decompose the complexity of gradients of compositions into complexities of their components, and makes deriving dimension-independent complexity bounds for common non-convex classes quite simple.

\item We establish variants of the \emph{\kl{}} condition for the population risk in certain non-convex learning settings. The condition bounds excess risk in terms of the magnitude of gradients, and is satisfied in non-convex learning problems including generalized linear models and robust regression. As a consequence of the gradient uniform convergence bounds, we show how to use any algorithm that finds approximate stationary points for smooth functions in a black-box fashion to obtain optimal sample complexity for these models---both in high- and low-dimensional regimes. In particular, standard algorithms including gradient descent \citep{nesterov2013introductory}, SGD \citep{ghadimi2013stochastic}, Non-convex SVRG \citep{reddi2016stochastic,allen2016variance}, and SCSG \citep{lei2017non} enjoy optimal sample complexity, even when allowed to take multiple passes over the dataset.

\item  We show that for non-smooth losses dimension-independent uniform convergence is not possible in the worst case, but that this can be circumvented using a new type of margin assumption.
\end{itemize}

\paragraph{Related Work} 
This work is inspired by \cite{mei2016landscape}, who gave dimension-dependent gradient and Hessian convergence rates and optimization guarantees for the generalized linear model and robust regression setups we study. We move beyond the dimension-dependent setting by providing norm-based capacity control. Our bounds are independent of dimension whenever the predictor norm and data norm are sufficiently controlled (they work in infinite dimension in the $\ls_2$ case), but even when the norms are large we recover the optimal dimension-dependent rates.

Optimizing the empirical risk under assumptions on the population risk has begun to attract significant attention (e.g. \cite{gonen2017fast,jin2018minimizing}). Without attempting a complete survey, we remark that these results typically depend on dimension, e.g. \cite{jin2018minimizing} require $\mathrm{poly}(d)$ samples before their optimization guarantees take effect. We view these works as complementary to our norm-based analysis.
\paragraph{Notation}
For a given norm $\nrm*{\cdot}$, the dual norm is denoted $\nrm*{\cdot}_{\star}$. $\nrm*{\cdot}_{p}$ represents the standard $\ls_{p}$ norm on $\bbR^{d}$ and  $\nrm*{\cdot}_{\sigma}$ denotes the spectral norm. $\ones$ denotes the all-ones vector, with dimension made clear from context. For a function $f:\bbR^{d}\to\bbR$, $\grad{}f(x) \in \bbR^d$  and  $\grad^{2}f(x)\in\bbR^{d\times{}d}$ will denote the gradient and the Hessian of $f$ at $x$ respectively.  $f$ is said to be $L$-Lipschitz with respect to a norm $\nrm*{\cdot}$ if $\abs*{f(x)-f(y)}\leq{}L\nrm*{x-y}\;\forall{}x,y$. Similarly, $f$  is said to be H-smooth w.r.t norm $\nrm*{\cdot}$ if its gradients are H-Lipschitz with respect to $\nrm*{\cdot}$, i.e. $\nrm*{\grad{}f(x)-\grad{}f(y)}_{\star}\leq{}H\nrm*{x-y}$ for some $H$. 


\section{Gradient Uniform Convergence: Why and How}
\label{sec:tools}

\subsection{Utility of Gradient Convergence: The Why}
Before introducing our tools for establishing gradient uniform convergence, let us introduce a family of losses for which this convergence has immediate consequences for the design of non-convex statistical learning algorithms.
\begin{definition}
The population risk $\poprisk$ satisfies the $(\alpha,\mu)$-\kl{} condition with respect to a norm $\nrm*{\cdot}$ if there are constants $\mu>0$, $\alpha\in[1,2]$ such that
\begin{equation}
\label{eq:kl}\tag{\klshort}
\poprisk(w) - \poprisk(w^{\star}) \leq{} \mu\nrm*{\grad{}\poprisk(w)}^{\alpha} \quad\forall{}w\in\cW,
\end{equation}
where $w^{\star}\in\argmin{}_{w\in\cW}\poprisk(w)$ is any population risk minimizer.
\end{definition}

The case $\alpha=2$ is often referred to as the \pl{} inequality \citep{polyak1963gradient,karimi2016linear}. The general \klshort{} condition implies that all critical points are global, and is itself implied (under technical restrictions) by many other well-known conditions including one-point convexity \citep{li2017convergence}, star convexity and $\tau$-star convexity \citep{hardt2016gradient}, and so-called ``regularity conditions'' \citep{zhang2016median}; for more see \cite{karimi2016linear}. The \klshort{} condition is satsified---sometimes locally rather than globally, and usually under distributional assumptions--- by the population risk in settings including neural networks with one hidden layer \citep{li2017convergence}, ResNets with linear activations \citep{hardt2016identity}, phase retrieval \citep{zhang2016median}, matrix factorization \citep{liu2016quadratic}, blind deconvolution \citep{li2016rapid}, and---as we show here---generalized linear models and robust regression.

The \klshort{} condition states that to optimize the population risk it suffices to find a (population) stationary point. What are the consequences of the statement for the learning problem, given that the learner only has access to the empirical risk $\emprisk{}$ which itself may not satisfy \klshort{}? The next proposition shows, via gradient uniform convergence, that \klshort{} is immediately useful for non-convex learning even when it is only satisfied at the population level. \\

\begin{proposition}
\label{prop:kl_unif}
Suppose that $\poprisk$ satisfies the $(\alpha,\mu)$-\klshort{} condition. Then, for any $\delta > 0$, with probability at least $1 - \delta$ over the draw of the data $\crl*{(x_t,y_t)}_{t=1}^{n}$, every algorithm $\walg$ satisfies
\begin{equation}
\label{eq:kl_rademacher}
\poprisk(\walg) - \optrisk \leq{} 2\ \mu\prn*{\nrm*{\grad\emprisk(\walg)}^{\alpha} + \En\sup_{w \in \cW}\nrm*{\grad\emprisk(w) - \grad\poprisk(w)}^{\alpha} +  c \left(\frac{\log\left(\frac{1}{\delta}\right)}{n}\right)^{\frac{\alpha}{2}} } ~,
\end{equation}
where the constant $c$ depends only on the range of $\nrm*{\grad{}\ls}$.
\end{proposition}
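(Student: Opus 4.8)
The plan is to push the population-level \klshort{} guarantee onto the data-dependent output $\walg$ while paying only for a single uniform gradient deviation. First I would instantiate the $(\alpha,\mu)$-\klshort{} condition at the point $w=\walg$ returned by the algorithm, which gives $\poprisk(\walg)-\optrisk \le \mu\nrm*{\grad\poprisk(\walg)}^{\alpha}$. Since the algorithm only has access to $\emprisk$, I would then replace the population gradient by the empirical one through the triangle inequality, $\nrm*{\grad\poprisk(\walg)} \le \nrm*{\grad\emprisk(\walg)} + \nrm*{\grad\emprisk(\walg)-\grad\poprisk(\walg)}$, and bound the last term by the uniform quantity $Z \ldef \sup_{w\in\cW}\nrm*{\grad\emprisk(w)-\grad\poprisk(w)}$. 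The point of passing to the supremum is that $Z$ does not depend on which $w$ the algorithm selects, so this step remains valid even though $\walg$ is an arbitrary (and possibly multi-pass) function of the very data defining $\emprisk$; no sample splitting or independence is needed, which is exactly what lets the statement quantify over \emph{every} algorithm.

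Second, using $\alpha\in[1,2]$ together with the elementary convexity inequality $(a+b)^{\alpha}\le 2^{\alpha-1}(a^{\alpha}+b^{\alpha})\le 2(a^{\alpha}+b^{\alpha})$ for $a,b\ge 0$, I would obtain $\poprisk(\walg)-\optrisk \le 2\mu\prn*{\nrm*{\grad\emprisk(\walg)}^{\alpha}+Z^{\alpha}}$. This reduces the problem to controlling the single scalar random variable $Z^{\alpha}$ by its expectation plus a high-probability fluctuation, noting that $\En Z^{\alpha}=\En\sup_{w\in\cW}\nrm*{\grad\emprisk(w)-\grad\poprisk(w)}^{\alpha}$ is precisely the averaged term appearing in the target bound.

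Third, for the fluctuation I would apply the bounded-differences (McDiarmid) inequality to $Z$ itself. Letting $B$ denote the range of $\nrm*{\grad\ls}$, replacing a single example perturbs each empirical gradient $\grad\emprisk(w)$ by at most $2B/n$ in norm, uniformly in $w$, so $Z$ has bounded differences $2B/n$ and therefore $Z \le \En Z + B\sqrt{2\log(1/\delta)/n}$ with probability at least $1-\delta$. The deliberate choice here is to concentrate $Z$ rather than $Z^{\alpha}$: raising the $\sqrt{\log(1/\delta)/n}$ deviation of $Z$ to the power $\alpha$ is exactly what produces the advertised term $(\log(1/\delta)/n)^{\alpha/2}$, whereas a direct bounded-differences bound on $Z^{\alpha}$ would only deliver the slower, and hence insufficient, $\sqrt{\log(1/\delta)/n}$ rate. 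Exponentiating the concentration bound, splitting once more via $(a+b)^{\alpha}\le 2(a^{\alpha}+b^{\alpha})$, and invoking Jensen's inequality $(\En Z)^{\alpha}\le \En Z^{\alpha}$, I would reach $Z^{\alpha}\le \En Z^{\alpha}+c\,(\log(1/\delta)/n)^{\alpha/2}$ for a constant $c$ depending only on $B$ and $\alpha$; substituting into the second-step bound then completes the argument.

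I expect the main obstacle to be bookkeeping rather than ideas: the two successive uses of the power inequality nominally inflate the constant on the $\En Z^{\alpha}$ term beyond the clean factor $2$ displayed, so these constants—together with all $B$- and $\alpha$-dependence—must be folded into $c$ or the leading factor. The genuinely delicate conceptual points are only that the supremum makes the deviation bound \emph{uniform}, so that it applies to the data-dependent $\walg$, and that one must concentrate $Z$ before exponentiating in order to recover the $\alpha/2$ power rather than a $1/2$ power.
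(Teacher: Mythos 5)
Your proposal is correct and follows essentially the same route as the paper, whose written proof records only the McDiarmid concentration step for $Z=\sup_{w\in\cW}\nrm*{\grad\emprisk(w)-\grad\poprisk(w)}$ and leaves the deterministic combination (instantiating \klshort{} at $\walg$, triangle inequality, the power inequality, and Jensen) implicit; you supply exactly those missing steps. Your remark about the inflated constant from the two applications of $(a+b)^{\alpha}\le 2(a^{\alpha}+b^{\alpha})$ is fair, but immaterial: the paper itself absorbs such factors into an unspecified absolute constant when it invokes this proposition downstream.
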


Note that if $\cW$ is a finite set, then standard concentration arguments for norms along with the union bound imply that $\En\sup_{w \in \cW}\nrm*{\grad\emprisk(w) - \grad\poprisk(w)} \le O\left(\sqrt{\frac{\log|\cW|}{n}} \right)$. For smooth losses, if $\cW \subset \mathbb{R}^d$ is contained in a bounded ball, then by simply discretizing the set $\cW$ up to precision $\veps$ (with $O(\veps^{-d}$) elements), one can easily obtain a bound of $\En\sup_{w \in \cW}\nrm*{\grad\emprisk(w) - \grad\poprisk(w)} \le O\prn*{\sqrt{\frac{d}{n}}}$. This approach recovers the dimension-dependent gradient convergence rates obtained in \cite{mei2016landscape}.

Our goal is to go beyond this type of analysis and provide dimension-free rates that apply even when the dimension is larger than the number of examples, or possibly infinite. Our bounds take the following ``norm-based capacity control'' form: $\En\sup_{w \in \cW}\nrm*{\grad\emprisk(w) - \grad\poprisk(w)} \le O\left(\sqrt{\frac{\mathcal{C}(\cW)}{n}} \right)$ where $\mathcal{C}(\cW)$ is a norm-dependent, but dimension-independent measure of the size of $\cW$. Given such a bound, any algorithm that guarantees $\nrm*{\grad\emprisk(\walg)}\leq{}O\prn*{\frac{1}{\sqrt{n}}}$ for a $(\alpha,\mu)$-\klshort{} loss will obtain an overall excess risk bound of order $O\prn*{\frac{\mu}{n^{\alpha/2}}}$. For $(1,\mu_1)$-\klshort{} this translates to an overall $O\prn*{\frac{\mu_1}{\sqrt{n}}}$ rate, whereas $(2,\mu_2)$-\klshort{} implies a $O\prn*{\frac{\mu_2}{n}}$ rate. The first rate becomes favorable when $\mu_1 \le \sqrt{n}\cdot\mu_2$ which typically happens for very high dimensional problems. For the examples we study, $\mu_1$ is related only to the radius of the set $\cW$, while $\mu_2$ depends inversely on the smallest eigenvalue of the population covariance and so is well-behaved only for low-dimensional problems unless one makes strong distributional assumptions.

An important feature of our analysis is that we need to establish the \klshort{} condition only for the population risk; for the examples we consider this is easy as long as we assume the model is well-specified. Once this is done, our convergence results hold for \emph{any} algorithm that works on the dataset $\crl*{(x_t,y_t)}_{t=1}^{n}$ and finds an approximate first-order stationary point with $\nrm*{\grad\emprisk(\walg)}\leq{}\veps$. First-order algorithms that find approximate stationary points assuming only smoothness of the loss have enjoyed a surge of recent interest \citep{ghadimi2013stochastic, reddi2016stochastic,allen2016variance, allenzhu2017natasha2}, so this is an appealing proposition.

\subsection{Vector Rademacher Complexities: The How}
The starting point for our uniform convergence bounds for gradients is to apply the standard tool of symmetrization---a vector-valued version, to be precise. To this end let us introduce a \emph{normed} variant of Rademacher complexity.
\begin{definition}[Normed Rademacher Complexity]
Given a vector valued class of function $\cF$ that maps the space $\cZ$ to a vector space equipped with norm $\nrm*{\cdot}$, we define the normed Rademacher complexity for $\cF$ on instances $\zr[n]$ via
\begin{equation}
\label{eq:rad_norm}
\rad_{\nrm*{\cdot}}(\cF\midsem{}\zr[n]) \ldef  \En_{\eps}\sup_{f\in\cF}\nrm*{\sum_{t=1}^{n}\eps_tf(z_t)}.
\end{equation} 
\end{definition}
With this definition we are ready to provide a straightforward generalization of the standard real-valued symmetrization lemma.

\begin{proposition}
 \label{prop:kl_rademacher}
For any $\delta > 0$, with probability at least $1 - \delta$ over the data $\crl*{(x_t,y_t)}_{t=1}^{n}$,
\begin{equation}
 \En\sup_{w \in \cW}\nrm*{\grad\emprisk(w) - \grad\poprisk(w)} \le \frac{4}{n} \cdot \rad_{\nrm*{\cdot}}(\grad{}\ls\circ\cW\midsem{}\xr[n],\yr[n]) +  c \left(\frac{\log\left(\frac{1}{\delta}\right)}{n}\right),
\end{equation}
where the constant $c$ depends only on the range of $\nrm*{\grad{}\ls}$.
\end{proposition}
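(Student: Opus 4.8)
The plan is to prove this as a standard two-step uniform convergence argument: first a concentration/bounded-differences step to control the deviation of the supremum from its mean, and then a symmetrization step to bound that mean by the normed Rademacher complexity. I would begin by defining the random variable
\[
\Phi(\xr[n],\yr[n]) \ldef \sup_{w \in \cW}\nrm*{\grad\emprisk(w) - \grad\poprisk(w)},
\]
and observe that replacing a single sample $(x_t,y_t)$ changes $\emprisk$'s gradient only in its $t$-th summand, so $\grad\emprisk(w)$ shifts by at most $\frac{1}{n}\nrm*{\grad\ls(w;x_t,y_t) - \grad\ls(w;x_t',y_t')}_{\star}$ in the relevant norm. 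Since $\Phi$ is $1$-Lipschitz in each coordinate's contribution (the supremum of a family of functions each of which changes by a bounded amount changes by at most that amount), I would invoke McDiarmid's bounded-differences inequality with per-coordinate bound $O(1/n)$ times the range of $\nrm*{\grad\ls}$. This yields, with probability at least $1-\delta$,
\[
\Phi \le \En\,\Phi + c\left(\frac{\log(1/\delta)}{n}\right)^{1/2}\cdot\frac{1}{\sqrt n}\cdot(\text{const})
\]
— I would be careful to track the scaling so that the final additive term matches the claimed $c\,\log(1/\delta)/n$ form (the factor of $1/\sqrt n$ from the per-term bound combines with McDiarmid's $\sqrt{\sum c_t^2}$ to give the stated order).

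The second and more substantive step is the vector-valued symmetrization bounding $\En\,\Phi$. Here I would introduce a ghost sample $(x_t',y_t')$ and write $\grad\poprisk(w) = \En'[\grad\emprisk'(w)]$, so that by Jensen's inequality
\[
\En\sup_{w}\nrm*{\grad\emprisk(w)-\grad\poprisk(w)} \le \frac{1}{n}\,\En\,\En'\sup_{w}\nrm*{\sum_{t=1}^n\prn*{\grad\ls(w;x_t,y_t)-\grad\ls(w;x_t',y_t')}}.
\]
Because each summand is an antisymmetric difference of i.i.d. terms, its distribution is unchanged under multiplication by an independent Rademacher sign $\eps_t$; the key point is that this symmetrization holds verbatim in the vector-valued setting, since the norm $\nrm*{\cdot}$ is applied to the whole sum and multiplying the $t$-th vector-valued term by $\eps_t$ preserves the joint law. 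Introducing the signs and then splitting the sum into two copies via the triangle inequality, each copy being identically distributed, gives the factor $\frac{4}{n}$ and produces exactly $\rad_{\nrm*{\cdot}}(\grad\ls\circ\cW \midsem \xr[n],\yr[n])$ as defined in \pref{eq:rad_norm}.

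I expect the main obstacle to be purely bookkeeping rather than conceptual: ensuring the bounded-differences constants are tracked correctly so that the concentration term lands at the advertised order, and confirming that the symmetrization manipulations — which are classical in the scalar case — genuinely go through when $\nrm*{\cdot}$ is an arbitrary norm applied to a vector-valued sum. The latter is clean because the only properties used are the triangle inequality, homogeneity ($\nrm*{\eps_t v} = \nrm*{v}$ for $\eps_t = \pm 1$), and the fact that the Rademacher signs can be inserted without changing the distribution of each antisymmetric difference; none of these rely on scalar-valuedness, so the standard proof transfers directly. The symmetrization step is therefore the crux, but it is the crux only in the sense that one must state it carefully for the normed setting rather than because any new idea is required.
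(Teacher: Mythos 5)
There is a genuine gap here: the two steps you propose, even if executed perfectly, prove a different inequality from the one stated in \pref{prop:kl_rademacher}. Look at which side of the claimed bound is random. The left-hand side, $\En\sup_{w\in\cW}\nrm*{\grad\emprisk(w)-\grad\poprisk(w)}$, is a \emph{deterministic} quantity (the expectation is over the data), while the right-hand side contains $\rad_{\nrm*{\cdot}}(\grad\ls\circ\cW\midsem\xr[n],\yr[n])$, the \emph{empirical}, data-conditional Rademacher complexity, which is random; the ``with probability $1-\delta$'' refers to that randomness. Your argument runs in the opposite direction: McDiarmid controls the random supremum $\Phi$ by its mean $\En\Phi$, and ghost-sample symmetrization then bounds $\En\Phi$ by $\tfrac{2}{n}$ times the \emph{expected} Rademacher complexity. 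That yields ``with probability $1-\delta$, $\sup_w\nrm*{\grad\emprisk(w)-\grad\poprisk(w)}\le\tfrac{2}{n}\En\brk*{\rad}+c\sqrt{\log(1/\delta)/n}$,'' a perfectly standard statement, but not the one claimed. (In this paper, the McDiarmid step you describe is the content of \pref{prop:kl_unif}, which is where the $\sqrt{\log(1/\delta)/n}$-type term lives; it is not part of \pref{prop:kl_rademacher}.) The missing ingredient is a \emph{lower-tail} concentration bound for the conditional Rademacher complexity itself, of the form: with probability $1-\delta$, $\tfrac{1}{n}\En\brk*{\rad}\le\tfrac{2}{n}\rad_{\nrm*{\cdot}}(\grad\ls\circ\cW\midsem\xr[n],\yr[n])+c\,\tfrac{\log(1/\delta)}{n}$. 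Combining that with symmetrization is what produces the factor $4$ (rather than $2$) and the additive term of order $\log(1/\delta)/n$ (rather than $\sqrt{\log(1/\delta)/n}$). This multiplicative, Bernstein-type form cannot come from plain McDiarmid applied to the Rademacher average --- that would only give $\sqrt{\log(1/\delta)/n}$ --- and instead relies on the self-bounding property of Rademacher complexities. The paper obtains it by linearizing the norm over the dual ball, $\nrm*{v}=\sup_{\nrm*{u}_{\star}\le1}\tri*{v,u}$, so that the vector-valued class becomes a scalar class, and then invoking the Bartlett--Bousquet--Mendelson result (\pref{thm:uniform_convergence_general}, via \pref{lem:uniform_convergence}).

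Two smaller points. First, your McDiarmid bookkeeping is off even on its own terms: with per-coordinate variation $c_t=O(1/n)$ the deviation is $O\prn*{\sqrt{\sum_t c_t^2\,\log(1/\delta)}}=O\prn*{\sqrt{\log(1/\delta)/n}}$; the extra factor of $1/\sqrt{n}$ you insert to make the term land at order $\log(1/\delta)/n$ is not there. Second, your vector-valued symmetrization itself is sound --- as you say, only the triangle inequality and the sign-invariance of the antisymmetrized differences are needed --- but it terminates at the expected rather than the empirical Rademacher complexity, which is precisely where the proof of this proposition actually has to do its work.
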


To bound the complexity of the gradient class $\grad \ls \circ\cW$, we introduce a  chain rule for the normed Rademacher complexity that allows to easily control gradients of composition of functions.

\begin{theorem}[Chain Rule for Rademacher Complexity]
\label{thm:chain_rule}
Let sequences of functions $G_{t}:\bbR^{K}\to\bbR$ and $F_t:\bbR^{d}\to\bbR^{K}$ be given. Suppose there are constants $L_G$ and $L_F$ such that for all $1\leq{}t\leq{}n$,  $\nrm*{\grad{}G_t}_{2}\leq{}L_{G}$ and $\sqrt{\sum_{k=1}^{K}\nrm*{\grad{}F_{t,k}(w)}^{2}}\leq{}L_{F}$. Then,
\begin{equation}
\label{eq:chain_rule_vector}
\frac{1}{2}\En_{\eps}\sup_{w\in\cW}\nrm*{\sum_{t=1}^{n}\eps_{t}\grad{}(G_t(F_t(w)))}
\leq{} L_{F}\En_{\beps}\sup_{w\in\cW}\sum_{t=1}^{n}\tri*{\beps_{t},\grad{}G_t(F_t(w))}
 + L_{G}\En_{\beps}\sup_{w\in\cW}\nrm*{\sum_{t=1}^{n}\grad{}F_t(w)\beps_t},
\end{equation}
where $\grad{}F_t$ denotes the Jacobian of $F_t$, which lives in $\bbR^{d\times{}K}$, and $\beps\in\pmo^{K\times{}n}$ is a matrix of Rademacher random variables with $\beps_t$ denoting the $t$th column
\end{theorem}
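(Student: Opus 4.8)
The plan is to reduce the statement to a clean bilinear decoupling inequality for Rademacher sums and then establish that inequality by a peeling argument in the style of the Ledoux--Talagrand and Maurer contraction principles.

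First I would apply the ordinary chain rule to each summand, writing $\grad(G_t(F_t(w)))=\grad F_t(w)\,\grad G_t(F_t(w))$, where $\grad F_t(w)\in\bbR^{d\times{}K}$ has the component gradients $\grad F_{t,k}(w)$ as its columns and $\grad G_t(F_t(w))\in\bbR^{K}$. Next I would dualize the norm on the left-hand side: since $\nrm*{v}=\sup_{\nrm*{u}_{\star}\le1}\tri*{u,v}$, the quantity $\nrm*{\sum_t\eps_t\grad F_t(w)\grad G_t(F_t(w))}$ equals $\sup_{\nrm*{u}_{\star}\le1}\sum_t\eps_t\tri*{\grad F_t(w)^{\top}u,\grad G_t(F_t(w))}$. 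Writing $p_t\ldef\grad F_t(w)^{\top}u$ and $q_t\ldef\grad G_t(F_t(w))$, indexed by $\theta=(w,u)$, the hypotheses give exactly $\nrm*{p_t}_2\le L_F$ (because $\abs*{\tri*{u,\grad F_{t,k}(w)}}\le\nrm*{\grad F_{t,k}(w)}$ when $\nrm*{u}_{\star}\le1$, whence $\sum_k\tri*{u,\grad F_{t,k}}^2\le\sum_k\nrm*{\grad F_{t,k}}^2\le L_F^2$) and $\nrm*{q_t}_2\le L_G$. After this step the left side is a signed sum of \emph{inner products} of two vector-valued functions of $\theta$, each carrying a uniform Euclidean bound.

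The core of the argument is then the inequality
\[
\tfrac{1}{2}\En_{\eps}\sup_{\theta}\sum_{t=1}^{n}\eps_t\tri*{p_t(\theta),q_t(\theta)} \;\le\; L_G\,\En_{\beps}\sup_{\theta}\sum_{t=1}^{n}\tri*{\beps_t,p_t(\theta)} \;+\; L_F\,\En_{\beps}\sup_{\theta}\sum_{t=1}^{n}\tri*{\beps_t,q_t(\theta)},
\]
valid whenever $\nrm*{p_t}_2\le L_F$ and $\nrm*{q_t}_2\le L_G$. I would prove it by peeling off one Rademacher variable at a time: conditioning on $\eps_1,\dots,\eps_{n-1}$ and using the two-point identity for $\En_{\eps_n}$ introduces a ghost index $\theta'$, and the bilinear decomposition $\tri*{p_n(\theta),q_n(\theta)}-\tri*{p_n(\theta'),q_n(\theta')}=\tri*{p_n(\theta)-p_n(\theta'),q_n(\theta)}+\tri*{p_n(\theta'),q_n(\theta)-q_n(\theta')}$ combined with Cauchy--Schwarz charges the residual to $L_G\nrm*{p_n(\theta)-p_n(\theta')}_2+L_F\nrm*{q_n(\theta)-q_n(\theta')}_2$, i.e.\ an $F$-part weighted by $L_G$ and a $G$-part weighted by $L_F$. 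A vector-valued re-symmetrization (a contraction of Maurer type \citep{maurer2016vector}, whose constant does not depend on $K$) then converts these Euclidean residuals into the coordinate-wise signed sums $\sum_t\tri*{\beps_t,\cdot}$. Finally I would undo the dualization: $\sum_t\tri*{\beps_t,p_t}=\tri*{\sum_t\grad F_t(w)\beps_t,u}$, so that taking $\sup_{\nrm*{u}_{\star}\le1}$ turns the first term into $L_G\,\En_{\beps}\sup_{w}\nrm*{\sum_t\grad F_t(w)\beps_t}$, while the second term is already $L_F\,\En_{\beps}\sup_{w}\sum_t\tri*{\beps_t,\grad G_t(F_t(w))}$ because $q_t$ is independent of $u$.

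The main obstacle is the bilinear decoupling inequality, and two features of it in particular. First, any attempt to linearize \emph{both} factors simultaneously with a shared sign vector (via $\tri*{p_t,q_t}=\En_{\beps_t}\tri*{\beps_t,p_t}\tri*{\beps_t,q_t}$) loses a spurious $\sqrt{K}$ factor, since $\abs*{\tri*{\beps_t,p_t}}$ may be as large as $\sqrt{K}\,L_F$; the dimension-free conclusion therefore forces the peeling route, in which the Euclidean norms are preserved intact until the final, $K$-free contraction. Second, tracking the ghost index $\theta'$ through the induction and reconciling the constant $\tfrac{1}{2}$ against the factors produced by the two-point symmetrization and the vector-contraction comparison requires careful bookkeeping; the cleanest remedy is to carry an appropriately symmetrized inductive hypothesis so that the doubled suprema collapse back to a single supremum at each step.
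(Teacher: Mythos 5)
Your proposal is correct and follows essentially the same route as the paper: dualize the norm, apply the ordinary chain rule to write each summand as an inner product of a $G$-gradient factor (Euclidean norm at most $L_G$) and an $F$-Jacobian factor (Euclidean norm at most $L_F$), and then invoke a two-block vector contraction. Your ``bilinear decoupling inequality'' is exactly the paper's block-wise contraction lemma (\pref{lem:block_contraction}) specialized to $h(a,b)=\tri*{a,b}$, and the peeling/ghost-index/Cauchy--Schwarz/re-symmetrization argument you sketch is precisely how that lemma is established via the Maurer-type contraction (\pref{lem:contraction_vector_weighted}), so the only difference is that you re-derive the lemma rather than cite it.
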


The concrete learning settings we study---generalized linear models and robust regression---all involve composing non-convex losses and non-linearities or transfer functions with a linear predictor. That is, $\ell(w;x_t,y_t)$ can be written as $\ell(w;x_t,y_t) = G_t(F_t(w))$ where $G_t(a)$ is some $L$-Lipschitz function that possibly depends on $x_t$ and $y_t$  and $F_t(w) =  \tri*{w,x_t}$. In this case, the chain rule for derivatives gives us that $\nabla \ell(w;x_t,y_t) = G'_t(F_t(w)) \cdot  \nabla F_t(w) = G'_t( \tri*{w,x_t})) x_t$. Using the chain rule (with $K=1$), we conclude that 
\begin{align*}
\En_{\eps}\sup_{w\in\cW}\nrm*{\sum_{t=1}^{n}\eps_t\grad{}\ls(w\midsem{}x_t,y_t)}
&\leq  \mathbb{E}_{\epsilon}\left[\sup_{w \in \cW} \sum_{t=1}^n \epsilon_t G'_t( \tri*{w,x_t}))\right] + L\cdot\mathbb{E}_{\epsilon} \nrm*{\sum_{t=1}^{n}\epsilon_t x_t}.
\end{align*}
Thus, we have reduced the problem to controlling the Rademacher average for a real valued  function class of linear predictors and controlling the vector-valued random average $\mathbb{E}_{\epsilon} \nrm*{\sum_{t=1}^{n}\epsilon_t x_t}$. The first term is handled using classical Rademacher complexity tools. As for the second term, it is a standard result (\cite{Pisier75}; see \cite{kakade09complexity} for discussion in the context of learning theory) that for all smooth Banach spaces, and more generally Banach spaces of Rademacher type 2, one has
$\En_{\eps}\nrm*{\sum_{t=1}^{n}\eps_{t}x_t}=O(\sqrt{n})$; see \pref{app:preliminaries} for details.

The key tool used to prove \pref{thm:chain_rule}, which appears throughout the technical portions of this paper, is the  \emph{vector-valued Rademacher complexity} due to \cite{maurer2016vector}.
\begin{definition}
For a function class $\cG\subseteq{}\prn*{\cZ\to\bbR^{K}}$, the vector-valued Rademacher complexity is 
\begin{equation}
\radvec(g\midsem{}\zr[n]) \ldef \En_{\beps}\sup_{g\in\cG}\sum_{t=1}^{n}\tri*{\beps_t,g(z_t)}.
\end{equation}
\end{definition}

The vector-valued Rademacher complexity arises through an elegant contraction trick due to Maurer.
\begin{theorem}[Vector-valued contraction \citep{maurer2016vector}]
  \label{thm:contraction_vector}
  Let $\cG\subseteq{}(\cZ\to\bbR^{K})$, and let $h_{t}:\bbR^{K}\to\bbR$ be a sequence of functions for $t\in\brk*{n}$, each of which is $L$-Lipschitz with respect to $\ls_{2}$. Then
  \begin{equation}
  \En_{\eps}\sup_{g\in\cG}\sum_{t=1}^{n}\eps_th_t(g(z_t)) 
\leq{} \sqrt{2}L\cdot\radvec(\cG\midsem{}\zr[n]).
\end{equation}
\end{theorem}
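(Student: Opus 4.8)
The plan is to prove the inequality by a vector-valued analogue of the classical Ledoux--Talagrand peeling argument: I will replace the scalar summands $\eps_t h_t(g(z_t))$ one at a time by the ``linearized'' vector summands $\sqrt{2}L\tri*{\beps_t, g(z_t)}$, showing that each such replacement can only increase the expected supremum, and then induct over $t = 1,\dots,n$. Fixing an index $t$ and conditioning on all the other Rademacher variables, I absorb the part of the objective not involving coordinate $t$ into a single function $\Psi(g)$, so that it suffices to establish the one-step bound
\[
\En_{\eps_t}\sup_{g\in\cG}\brk*{\Psi(g) + \eps_t h_t(g(z_t))} \;\le\; \En_{\beps_t}\sup_{g\in\cG}\brk*{\Psi(g) + \sqrt{2}L\tri*{\beps_t, g(z_t)}},
\]
where $\beps_t\in\pmo^{K}$ is a fresh column of Rademacher variables. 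Integrating this pointwise inequality over the remaining randomness and iterating over $t$ collapses all the scalar terms into $\radvec(\cG\midsem\zr[n])$, which is exactly the claim.

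For the one-step bound I would first symmetrize in the single variable $\eps_t$: averaging over its two values turns the left-hand side into $\tfrac12\sup_{g,g'}\brk*{\Psi(g)+\Psi(g') + h_t(g(z_t)) - h_t(g'(z_t))}$ (renaming the maximizer of the $-\eps_t$ branch to $g'$). The $L$-Lipschitz property of $h_t$ with respect to $\ls_2$ then bounds the difference of $h_t$ values by $L\nrm*{g(z_t)-g'(z_t)}_2$, eliminating $h_t$ entirely and leaving only the Euclidean distance between the vectors $g(z_t)$ and $g'(z_t)$.

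The crux of the argument, and the step I expect to be the main obstacle, is converting this Euclidean norm into something linear in a fresh Rademacher vector while paying only a factor of $\sqrt{2}$. Here I would invoke the sharp lower bound in Khintchine's inequality (with the optimal constant $1/\sqrt{2}$, due to Szarek): for every $u\in\bbR^{K}$ one has $\nrm*{u}_2 \le \sqrt{2}\,\En_{\beps_t}\abs*{\tri*{\beps_t, u}}$. Applying this with $u = g(z_t)-g'(z_t)$ and pulling the expectation over $\beps_t$ outside the supremum (using $\sup\En\le\En\sup$) gives
\[
\tfrac12\sup_{g,g'}\brk*{\Psi(g)+\Psi(g') + L\nrm*{g(z_t)-g'(z_t)}_2} \;\le\; \tfrac12\,\En_{\beps_t}\sup_{g,g'}\brk*{\Psi(g)+\Psi(g') + \sqrt{2}L\abs*{\tri*{\beps_t, g(z_t)-g'(z_t)}}}.
\]

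Finally, I would remove the absolute value and recombine. For each fixed $\beps_t$ the bracketed expression is symmetric under swapping $g$ and $g'$, so the $\abs*{\cdot}$ can be dropped and the supremum factors as $\sup_g\brk*{\Psi(g)+\sqrt{2}L\tri*{\beps_t,g(z_t)}} + \sup_{g'}\brk*{\Psi(g')-\sqrt{2}L\tri*{\beps_t,g'(z_t)}}$. Taking $\tfrac12\En_{\beps_t}$ and using the sign symmetry $\beps_t\mapsto-\beps_t$ to equate the two suprema yields precisely the right-hand side of the one-step bound. The only genuinely delicate points are getting the constant right (which is what forces the use of the optimal Khintchine constant rather than a crude one) and the bookkeeping needed to justify peeling one coordinate at a time under the conditional expectations; the symmetrization and Lipschitz steps are otherwise routine.
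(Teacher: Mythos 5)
Your argument is correct, but note that the paper does not actually prove this statement: \pref{thm:contraction_vector} is imported verbatim from \citet{maurer2016vector}, and the only related in-paper proof is the one-line sketch for \pref{lem:contraction_vector_weighted}, which itself defers to ``the same proof as Theorem 3 in Maurer.'' So the comparison is really with Maurer's published argument. Your overall template coincides with his (and with the classical Ledoux--Talagrand contraction proof): peel one index at a time, condition on the remaining Rademacher variables so that everything else is absorbed into an arbitrary $\Psi$, symmetrize over the single $\eps_t$, apply the Lipschitz bound to get $L\nrm*{g(z_t)-g'(z_t)}_2$, and then recombine using the swap-symmetry of the two-variable supremum and the sign-symmetry of $\beps_t$; all of those steps check out, and the induction is legitimate because the one-step inequality holds for arbitrary $\Psi$. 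Where you genuinely diverge is the norm-to-linear conversion: you invoke the sharp lower Khintchine constant $\nrm*{u}_2\leq\sqrt{2}\,\En_{\beps}\abs*{\tri*{\beps,u}}$ (Szarek), whereas Maurer derives the same $\sqrt{2}$ from a short self-contained Hilbert-space lemma. Your route is valid but leans on a nontrivial external theorem, and the constant is the whole point here: the elementary moment-comparison proof of Khintchine ($\En{}X^4\leq{}3(\En{}X^2)^2$) only yields $\nrm*{u}_2\leq\sqrt{3}\,\En_\beps\abs*{\tri*{\beps,u}}$ and hence a final constant of $\sqrt{3}$ rather than $\sqrt{2}$, so you should cite Szarek (or Haagerup) explicitly rather than treat the $1/\sqrt{2}$ bound as folklore. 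With that citation in place the proof is complete.
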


We remark that while our applications require only gradient uniform convergence, we anticipate that the tools of this section will find use in settings where convergence of higher-order derivatives is needed to ensure success of optimization routines. To this end, we have extended the chain rule (\pref{thm:chain_rule}) to handle Hessian convergence; see \pref{app:structural}.


\section{Application: Smooth Models}
\label{sec:smooth}

In this section we instantiate the general gradient uniform convergence tools and the \klshort{} condition to derive optimization consequences for two standard settings previously studied by \cite{mei2016landscape}:  generalized linear models and robust regression.  

\paragraph{Generalized Linear Model}
\label{sec:glm}
We first consider the problem of learning a generalized linear model with the square loss.\footnote{\cite{mei2016landscape} refer to the model as ``binary classification'', since $\sigma$ can model conditional probabilities of two classes.}
Fix a norm $\nrm*{\cdot}$, take $\cX=\crl*{x\in\bbR^{d}\mid\nrm*{x}\leq{}R}$, $\cW=\crl*{w\in\bbR^d\mid{}\nrm*{w}_{\star}\leq{}B}$, and $\cY=\crl*{0,1}$. Choose a link function $\sigma{}:\bbR\to\brk*{0,1}$ and define the loss to be $\ls(w\midsem{}x,y) = \prn*{\sigma(\tri*{w,x})-y}^{2}$. Standard choices for $\sigma$ include the logistic link function $\sigma(s)=(1+e^{-s})^{-1}$ and the probit link function $\sigma(s) = \Phi(s)$, where $\Phi$ is the gaussian cumulative distribution function.

To establish the GD property and provide uniform convergence bounds, we make the following regularity assumptions on the loss.
\begin{assumption}[Generalized Linear Model Regularity]\label{ass:glm}~Let $\cS=\brk*{-BR,BR}$. \vspace{5pt}\\
\hspace*{1cm}(a) $\exists C_{\sigma}\geq{}1$ s.t. $\max\crl*{\sigma'(s), \sigma''(s)}\leq{}C_{\sigma}$ for all $s\in\cS$. \vspace{5pt}\\
\hspace*{1cm}(b) $\exists c_{\sigma}>0$ s.t. $\sigma'(s)\geq{}c_{\sigma}$ for all $s\in\cS$. \vspace{5pt}\\
\hspace*{1cm}(c) $\En\brk*{y\mid{}x}=\sigma\prn*{\tri*{w^{\star},x}}$ for some $w^{\star}\in\cW$.
\end{assumption}

Assumption (a) suffices to bound the normed Rademacher complexity $\radnorm(\grad{}\ls\circ\cW)$, and combined with (b) and (c) the assumption implies that $\poprisk$ satisfies three variants of \klshort{} condition, and this leads to three final rates: a dimension-independent ``slow rate'' that holds for any smooth norm, a dimension-dependent fast rate for the $\ls_2$ norm, and a sparsity-dependent fast rate that holds under an additional restricted eigenvalue assumption. This gives rise to a family of generic excess risk bounds. 

To be precise, let us introduce some additional notation: $\Sigma = \En_{x}\brk*{xx^{\trn}}$ is the data covariance matrix and  $\eigmin(\Sigma)$ denotes the minimum non-zero eigenvalue.  For sparsity dependent fast rates, define $\cC(S, \alpha)\ldef\crl*{\nu\in\bbR^{d}\mid{}\nrm*{\nu_{S^{C}}}_{1}\leq{}\alpha\nrm*{\nu_{S}}_{1}}$ and let $\eigminr(\Sigma)=\inf_{\nu\in\cC(S(w^{\star}),1)}\frac{\tri*{\nu,\Sigma{}\nu}}{\tri*{\nu,\nu}}$ be the restricted eigenvalue.\footnote{Recall that $S(w^{\star})\subseteq{}\brk*{d}$ is the set of non-zero entries of $w^{\star}$, and for any vector $w$, $w_{S}\in\bbR^{d}$ refers to the vector $w$ with all entries in $S^{C}$ set to zero (as in \citep{raskutti2010restricted}).}
Lastly, recall that a norm $\nrm{\cdot}$ is said to be $\beta$-smooth if the function $\Psi(x)=\frac{1}{2}\nrm*{x}^{2}$ has $\beta$-Lipschitz gradients with respect to $\nrm*{\cdot}$.

\begin{theorem}
\label{thm:glm_main} 
For the generalized linear model setting, the following excess risk inequalities each hold with probability at least $1 - \delta$ over the draw of the data $\crl*{(x_t,y_t)}_{t=1}^{n}$  for any algorithm $\walg$:\\
$\bullet$~ \textbf{Norm-Based/High-Dimensional Setup.}~~ When $\cX$ is the ball for $\beta$-smooth norm $\nrm*{\cdot}$ and $\cW$ is the dual ball,
\begin{equation*}
\poprisk(\walg) - \optrisk \leq  \mu_h\cdot\nrm*{\grad\emprisk(\walg)} + \frac{C_h}{\sqrt{n}}.
\end{equation*}
$\bullet$~ \textbf{Low-Dimensional $\ls_2/\ls_2$ Setup.}~~ When $\cX$ and $\cW$ are both $\ls_2$ balls:
\begin{equation*}
\poprisk(\walg) - \optrisk 
\leq  \frac{1}{\eigmin(\Sigma)}\prn*{\mu_l \cdot\nrm*{\grad\emprisk(\walg)}^{2} + \frac{C_l}{n}}.
\end{equation*}
$\bullet$~ \textbf{Sparse $\ls_{\infty}/\ls_1$ Setup.}~~  When $\cX$ is the $\ls_{\infty}$ ball, $\cW$ is the $\ls_1$ ball, and  $\nrm*{w^{\star}}_{1}=B$:\footnote{The constraint $\nrm*{w^{\star}}_{1}=B$ simplifies analysis of generic algorithms in the vein of the constrained LASSO \citep{tibshirani2015statistical}.} 
\begin{equation*}
\poprisk(\walg) - \optrisk 
\leq  \frac{\nrm*{w^{\star}}_{0}}{\eigminr(\Sigma)}\prn*{\mu_s\cdot\nrm*{\grad\emprisk(\walg)}^{2} + \frac{C_s}{n}}.
\end{equation*}
The quantities $C_h/C_l/C_s$ and $\mu_h/\mu_l/\mu_s$  are constants depending on $(B, R, C_\sigma, c_\sigma, \beta, \log(\delta^{-1}))$ but not explicitly on dimension (beyond logarithmic factors) or complexity of the class $\cW$ (beyond $B$ and $R$).
\end{theorem}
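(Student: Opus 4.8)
The plan is to reduce the theorem to two ingredients that are then combined through \pref{prop:kl_unif}: (i) establishing, for each of the three geometries, that the \emph{population} risk $\poprisk$ satisfies an appropriate $(\alpha,\mu)$-\klshort{} condition, and (ii) bounding the normed Rademacher complexity $\radnorm(\grad\ls\circ\cW)$ so as to control the uniform-convergence term $\En\sup_{w}\nrm*{\grad\emprisk(w)-\grad\poprisk(w)}^{\alpha}$. For the norm-based/high-dimensional bound I would use $\alpha=1$; for the two fast-rate bounds I would use $\alpha=2$. Given these, \pref{prop:kl_unif} yields a bound of the form $\poprisk(\walg)-\optrisk\lesssim\mu(\nrm*{\grad\emprisk(\walg)}^{\alpha}+(\text{uniform-convergence term})+(\log(1/\delta)/n)^{\alpha/2})$, so it only remains to insert the correct $\mu$ and complexity estimates.

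For the \klshort{} conditions, the key observation is that under well-specification (Assumption (c)) we have $\poprisk(w)-\optrisk=\En_x[(\sigma(\tri*{w,x})-\sigma(\tri*{\wopt,x}))^2]$ and $\grad\poprisk(w)=2\En_x[(\sigma(\tri*{w,x})-\sigma(\tri*{\wopt,x}))\sigma'(\tri*{w,x})\,x]$. Writing $u(x)=\sigma(\tri*{w,x})-\sigma(\tri*{\wopt,x})$ and applying the mean value theorem, $u(x)=\sigma'(\xi_x)\tri*{x,w-\wopt}$ for an intermediate $\xi_x$; combined with the two-sided derivative bounds (a) and (b) this produces the self-bounding inequality $\tri*{\grad\poprisk(w),w-\wopt}\ge 2\tfrac{c_\sigma}{C_\sigma}(\poprisk(w)-\optrisk)$. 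For the slow rate I would bound the left side by duality, $\tri*{\grad\poprisk(w),w-\wopt}\le\nrm*{\grad\poprisk(w)}\cdot 2B$, giving $(1,\mu_h)$-\klshort{} with $\mu_h\asymp C_\sigma B/c_\sigma$. For the $\ls_2$ fast rate I would instead use $\nrm*{\grad\poprisk(w)}_2\ge\tri*{\grad\poprisk(w),w-\wopt}/\nrm*{w-\wopt}_2$ together with $\poprisk(w)-\optrisk\ge c_\sigma^2(w-\wopt)^\top\Sigma(w-\wopt)\ge c_\sigma^2\eigmin(\Sigma)\nrm*{w-\wopt}_2^2$ (after projecting out any component of $w-\wopt$ in $\ker\Sigma$, which changes neither side), yielding $(2,\mu_l/\eigmin(\Sigma))$-\klshort{}. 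The sparse case is the same argument with two changes: the constraint $\nrm*{\wopt}_1=B\ge\nrm*{w}_1$ forces $\nu=w-\wopt$ into the cone $\cC(S(\wopt),1)$, so $\nrm*{\nu}_1\le 2\sqrt{\nrm*{\wopt}_0}\nrm*{\nu}_2$ with $\nrm*{\nu}_2^2\le(\poprisk(w)-\optrisk)/(c_\sigma^2\eigminr(\Sigma))$; pairing the self-bounding inequality with H\"older in $\ls_\infty/\ls_1$ then gives $(2,\nrm*{\wopt}_0\mu_s/\eigminr(\Sigma))$-\klshort{}.

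For the complexity term I would apply the GLM reduction already recorded in the text with $G_t(a)=(\sigma(a)-y_t)^2$ and $F_t(w)=\tri*{w,x_t}$, so that the chain rule (\pref{thm:chain_rule}, $K=1$) bounds $\radnorm(\grad\ls\circ\cW)$ by a scalar term $\En_\eps\sup_w\sum_t\eps_t G_t'(\tri*{w,x_t})$ plus $L\cdot\En_\eps\nrm*{\sum_t\eps_t x_t}$, where (a) controls $\abs*{G_t'}\le 2C_\sigma$ and the Lipschitz constant of $G_t'$. The first term is handled by scalar contraction followed by $\sup_{w\in\cW}\tri*{w,\sum_t\eps_t x_t}=B\nrm*{\sum_t\eps_t x_t}$, so both terms collapse to $\En_\eps\nrm*{\sum_t\eps_t x_t}$. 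For a $\beta$-smooth (hence Rademacher type-$2$) norm this is $O(R\sqrt{\beta n})$ by the Pisier-type bound cited in the text, giving the dimension-free estimate $O(R\sqrt{\beta/n})$ used in the norm-based and $\ls_2$ setups; in the $\ls_\infty/\ls_1$ setup the $\ls_\infty$ norm is not type $2$ and $\En_\eps\nrm*{\sum_t\eps_t x_t}_\infty=O(R\sqrt{n\log d})$, which is the sole (logarithmic) appearance of dimension.

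Finally I would substitute into \pref{prop:kl_unif}. For the slow rate ($\alpha=1$) this is immediate and gives the first display. For the two fast rates ($\alpha=2$) the remaining obstacle---the step I expect to require the most care---is that \pref{prop:kl_unif} asks for the \emph{second} moment $\En\sup_w\nrm*{\grad\emprisk(w)-\grad\poprisk(w)}^2$, whereas symmetrization controls only the first moment. I would close this gap with a bounded-differences argument: altering one sample perturbs $\sup_w\nrm*{\grad\emprisk(w)-\grad\poprisk(w)}$ by $O(1/n)$, so by McDiarmid this quantity concentrates within $O(\sqrt{\log(1/\delta)/n})$ of its mean and its second moment is $O((\text{first moment})^2+1/n)=O(1/n)$ up to the relevant constants. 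Collecting terms and folding all prefactors into $C_h,C_l,C_s$ and $\mu_h,\mu_l,\mu_s$ then produces the three stated inequalities.
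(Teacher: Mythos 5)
Your proposal is correct and follows essentially the same route as the paper: establish the three \klshort{} variants from the self-bounding inequality $\tri*{\grad\poprisk(w),w-w^{\star}}\gtrsim \tfrac{c_\sigma}{C_\sigma}(\poprisk(w)-\optrisk)$ (specialized via \Holder{}, Cauchy--Schwarz with a projection onto $\mathrm{span}(\Sigma)$, and the restricted-eigenvalue cone argument), bound $\radnorm(\grad\ls\circ\cW)$ via the chain rule plus scalar contraction and the type-2/smoothness bound on $\En_\eps\nrm*{\sum_t\eps_t x_t}$, and combine through \pref{prop:kl_unif} with McDiarmid handling the $\alpha=2$ case exactly as you describe. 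The only deviations are cosmetic (mean value theorem in place of monotonicity-plus-Lipschitzness, and attaching the quadratic-form lower bound to the excess risk rather than to the inner product, which changes constants by a factor of $C_\sigma/c_\sigma$ that the theorem statement absorbs).
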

Precise statements for the problem dependent constants in \pref{thm:glm_main} including dependence on the norms $R$ and $B$ can be found in \pref{app:smooth}.

\begin{table}[ht]
\begin{center}
\begin{tabular}{|c | l | l | l | c |}
  \hline
   Model & \multicolumn{1}{c|}{Algorithm} & \multicolumn{2}{c|}{Sample Complexity} \\ \cline{3-4}
   ~ &  & Norm-based/Infinite dim. & Low-dim. \\ \hhline{*4-}
  Generalized Linear \cellcolor{blue!0} & \pref{prop:glm_opt} & $O(\veps^{-2})$ \cellcolor{blue!5}& $O(d\veps^{-1})$ \cellcolor{blue!5}\\ \hhline{~|*3-}
  \cellcolor{blue!0} & \cite{mei2016landscape} Theorem 4 & n/a & $O(d\veps^{-1})$ \cellcolor{blue!5}\\ \hhline{~|*3-}
   \cellcolor{blue!0} & \glmtron{} \citep{kakade2011efficient} &  $O(\veps^{-2})$ \cellcolor{blue!5}&  n/a\\ \hhline{~|*3-}
  \hline \hline
    Robust Regression \cellcolor{blue!0} & \pref{prop:glm_opt} & $O(\veps^{-2})$ \cellcolor{blue!5}& $O(d\veps^{-1})$ \cellcolor{blue!5}\\ \hhline{~|*3-}
  \cellcolor{blue!0} & \cite{mei2016landscape} Theorem 6 & n/a & $O(d\veps^{-1})$ \cellcolor{blue!5}\\ \hhline{~|*3-}
  \hline
\end{tabular}
\label{tab:complexity}
\caption{Sample complexity comparison. Highlighted cells indicate optimal sample complexity.} 
\vspace{0.2in}
\end{center}
\end{table}

We now formally introduce the robust regression setting and provide a similar guarantee.
\paragraph{Robust Regression} Fix a norm $\nrm*{\cdot}$ and take $\cX=\crl*{x\in\bbR^{d}\mid\nrm*{x}\leq{}R}$, $\cW=\crl*{w\in\bbR^d\mid{}\nrm*{w}_{\star}\leq{}B}$, and $\cY=\brk*{-Y,Y}$ for some constant $Y$. We pick a potentially non-convex function $\rho{}:\bbR\to\bbR_{+}$ and define the loss via $\ls(w\midsem{}x,y) = \rho\prn*{\tri*{w,x}-y}$. Non-convex choices for $\rho$ arise in robust statistics, with a canonical example being Tukey's biweight loss.\footnote{For a fixed parameter $c>0$ the biweight loss is defined via 
$\rho(t) = \frac{c^{2}}{6}\cdot\left\{\begin{array}{ll} 
1 - (1-(t/c)^2)^{3},&\quad\abs*{t}\leq{}c.\\
1,&\quad\abs*{t}\geq{}c.
\end{array}\right.$} While optimization is clearly not possible for arbitrary choices of $\rho$, the following assumption is sufficient to guarantee that the population risk $\poprisk$ satisfies the \klshort{} property.

\begin{assumption}[Robust Regression Regularity]\label{ass:rr}~Let $\cS=\brk*{-(BR+Y),(BR+Y)}$. \vspace{5pt}\\
\hspace*{1cm}(a) $\exists C_{\rho}\geq{}1$ s.t. $\max\crl*{\rho'(s),\rho''(s)}\leq{}C_{\rho}$ for all $s\in\cS$. \vspace{5pt}\\
\hspace*{1cm}(b) $\rho'$ is odd with $\rho'(s)>0$ for all $s>0$ and $h(s)\ldef{}\En_{\zeta}\brk*{\rho'(s+\zeta)}$ has $h'(0)>c_{\rho}$.\vspace{5pt}\\
\hspace*{1cm}(c) There is $w^{\star}\in\cW$ such that $y=\tri*{w^{\star},x}+\zeta$, and  $\zeta$ is symmetric zero-mean given $x$.
\end{assumption}
Similar to the generalized linear model setup, the robust regression setup satisfies three variants of the \klshort{} depending on assumptions on the norm $\nrm*{\cdot}$ and the data distribution.
\begin{theorem}
\label{thm:rr_main} For the robust regression setting, the following excess risk inequalities each hold with probability at least $1 - \delta$ over the draw of the data $\crl*{(x_t,y_t)}_{t=1}^{n}$ for any algorithm $\walg$:\\
$\bullet$~ \textbf{Norm-Based/High-Dimensional Setup.}~~When $\cX$ is the ball for $\beta$-smooth norm $\nrm*{\cdot}$ and $\cW$ is the dual ball,
\begin{equation*}
\poprisk(\walg) - \optrisk \leq  \mu_h\cdot\nrm*{\grad\emprisk(\walg)} + \frac{C_h}{\sqrt{n}}.
\end{equation*}
$\bullet$~ \textbf{Low-Dimensional $\ls_2/\ls_2$ Setup.}~~ When $\cX$ and $\cW$ are both $\ls_2$ balls:
\begin{equation*}
\poprisk(\walg) - \optrisk 
\leq  \frac{1}{\eigmin(\Sigma)}\prn*{\mu_l \cdot\nrm*{\grad\emprisk(\walg)}^{2} + \frac{C_l}{n}}.
\end{equation*}
$\bullet$~ \textbf{Sparse $\ls_{\infty}/\ls_1$ Setup.}~~  When $\cX$ is the $\ls_{\infty}$ ball, $\cW$ is the $\ls_1$ ball, and  $\nrm*{w^{\star}}_{1}=B$:
\begin{equation*}
\poprisk(\walg) - \optrisk 
\leq  \frac{\nrm*{w^{\star}}_{0}}{\eigminr(\Sigma)}\prn*{\mu_s\cdot\nrm*{\grad\emprisk(\walg)}^{2} + \frac{C_s}{n}}.
\end{equation*}
The constants $C_h/C_l/C_s$ and $\mu_h/\mu_l/\mu_s$  depend on $(B, R, C_\rho, c_\rho, \beta, \log(\delta^{-1}))$, but not explicitly on dimension (beyond $\log$ factors) or complexity of the class $\cW$ (beyond the range parameters $B$ and $R$). \end{theorem}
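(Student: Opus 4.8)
The plan is to follow the same two-part template that underlies \pref{thm:glm_main}: first establish \emph{gradient uniform convergence} for the loss class via the chain rule, and second verify three variants of the \klshort{} condition for the population risk $\poprisk$; the two are then glued together by \pref{prop:kl_unif}. Throughout write $\Delta = w - \wopt$ and recall that under \pref{ass:rr}(c) we may substitute $y = \tri*{\wopt,x}+\zeta$, so that $\poprisk(w) = \En_{x,\zeta}\brk*{\rho(\tri*{\Delta,x}-\zeta)}$ and $\grad\poprisk(w) = \En_x\brk*{h(\tri*{\Delta,x})\,x}$, where $h(s)=\En_\zeta[\rho'(s+\zeta)]$ is the smoothed score of \pref{ass:rr}(b).

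For gradient uniform convergence I would invoke \pref{prop:kl_rademacher} and bound $\radnorm(\grad\ls\circ\cW)$ using \pref{thm:chain_rule} with $K=1$, taking $G_t(a)=\rho(a-y_t)$ and $F_t(w)=\tri*{w,x_t}$. Assumption \pref{ass:rr}(a) gives $\abs*{\rho'}\le C_\rho$ and that $\rho'$ is $C_\rho$-Lipschitz on $\cS$, so $L_G=C_\rho$ and $G_t'=\rho'(\cdot-y_t)$ is $C_\rho$-Lipschitz. The chain rule splits the complexity into (i) the real-valued Rademacher complexity of $\crl*{x\mapsto\rho'(\tri*{w,x}-y)}$, which by the real-valued contraction principle reduces to $C_\rho\cdot\En_\eps\sup_{w}\sum_t\eps_t\tri*{w,x_t}\le C_\rho B\,\En_\eps\nrm*{\sum_t\eps_t x_t}$, and (ii) the vector term $C_\rho\,\En_\eps\nrm*{\sum_t\eps_t x_t}$. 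For a $\beta$-smooth norm both are $O(R\sqrt{\beta n})$ by the type-$2$ (Pisier) bound quoted in the excerpt, specializing to $O(R\sqrt{n})$ in the $\ls_2$ case and $O(R\sqrt{n\log d})$ in the $\ls_\infty/\ls_1$ case. Dividing by $n$ as in \pref{prop:kl_rademacher} yields $\En\sup_w\nrm*{\grad\emprisk(w)-\grad\poprisk(w)}=O(n^{-1/2})$ with the stated constants; the $\alpha=2$ fast-rate cases additionally need $\En\sup_w\nrm*{\cdot}^2=O(1/n)$, which follows from the same bound plus a bounded-differences concentration step since $\nrm*{\grad\ls}$ is bounded.

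The heart of the argument is verifying \klshort{}. A second-order Taylor expansion of $\rho$ about $-\zeta$, using that $\rho'$ is odd and $\zeta$ symmetric (so the first-order term vanishes in expectation) together with $\abs*{\rho''}\le C_\rho$, gives the excess-risk \emph{upper} bound $\poprisk(w)-\optrisk\le\tfrac{C_\rho}{2}\tri*{\Delta,\Sigma\Delta}$. For the matching gradient \emph{lower} bound I would establish the key inequality
\[
\tri*{\grad\poprisk(w),\Delta}=\En_x\brk*{h(\tri*{\Delta,x})\tri*{\Delta,x}}\ge c'\,\tri*{\Delta,\Sigma\Delta}
\]
for a constant $c'>0$ depending only on $(B,R,c_\rho,C_\rho)$. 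Given these two facts the three cases follow exactly as for the GLM. For the slow rate, \Holder{} gives $\tri*{\grad\poprisk(w),\Delta}\le 2B\nrm*{\grad\poprisk(w)}$, so $\tri*{\Delta,\Sigma\Delta}\le(2B/c')\nrm*{\grad\poprisk(w)}$ and hence $(1,\mu_h)$-\klshort{} with $\mu_h=C_\rho B/c'$. For the $\ls_2/\ls_2$ fast rate, combining the lower bound with $\tri*{\Delta,\Sigma\Delta}\ge\eigmin(\Sigma)\nrm*{\Delta}_2^2$ and Cauchy--Schwarz yields $\tri*{\Delta,\Sigma\Delta}\le\tfrac{1}{c'^2\eigmin(\Sigma)}\nrm*{\grad\poprisk(w)}_2^2$, i.e. the $(2,\mu_l/\eigmin(\Sigma))$ form. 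The sparse case is identical after the standard LASSO cone argument: $\nrm*{w}_1\le B=\nrm*{\wopt}_1$ forces $\Delta\in\cC(S(\wopt),1)$, so $\nrm*{\Delta}_1\le 2\sqrt{\nrm*{\wopt}_0}\,\nrm*{\Delta}_2$ and $\tri*{\Delta,\Sigma\Delta}\ge\eigminr(\Sigma)\nrm*{\Delta}_2^2$, producing the $\nrm*{\wopt}_0/\eigminr(\Sigma)$ prefactor. Plugging each \klshort{} variant and the matching convergence rate into \pref{prop:kl_unif} gives the three displayed inequalities.

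The main obstacle is the gradient lower bound, which is exactly where non-convexity of $\rho$ bites: it amounts to showing $h(s)/s\ge c'$ uniformly for $s$ in the bounded range $[-2BR,2BR]$. Near $0$ this holds because $h(0)=0$ and $h'(0)>c_\rho$, but for a redescending score such as Tukey's biweight one must rule out $h$ vanishing before the edge of the range. I would argue that $h(s)>0$ for $s\in(0,2BR]$ using \pref{ass:rr}(b) ($\rho'$ odd with $\rho'>0$ on the positives, smoothed by the symmetric $\zeta$), whence $h(s)/s$ is continuous and strictly positive on the compact interval with positive limit $h'(0)$ at the origin and therefore attains a positive minimum $c'$; oddness of $h$ extends this to negative $s$. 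Making the positivity of $h$ across the full range precise under only the stated conditions on $\zeta$ is the delicate step, and is the robust-regression analogue of the implication \pref{ass:glm}(b)--(c) $\Rightarrow$ \klshort{} in the GLM proof.
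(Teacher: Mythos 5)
Your proposal follows essentially the same route as the paper's proof: gradient uniform convergence via \pref{prop:kl_rademacher}, \pref{thm:chain_rule}, and the type-2 smooth-norm bound (the paper's \pref{lem:rr_gradient}), glued by \pref{prop:kl_unif} to the same three \klshort{} variants, which the paper also derives from the Taylor-expansion upper bound $\poprisk(w)-\optrisk\le\tfrac{C_\rho}{2}\En\tri*{w-w^{\star},x}^2$ (first-order term killed by oddness of $\rho'$ and symmetry of $\zeta$), the gradient lower bound $\tri*{\grad\poprisk(w),w-w^{\star}}\ge c\,\En\tri*{w-w^{\star},x}^2$, and then \Holder{}, Cauchy--Schwarz with $\eigmin(\Sigma)$, and the cone/restricted-eigenvalue argument respectively (the paper's \pref{lem:rr_kl}). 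The only divergence is at the step you flag as delicate: the paper asserts the global bound $h(s)/s\ge c_\rho$ over the whole range directly from $h(0)=0$ and $h'(0)>c_\rho$, whereas you correctly observe that this local information alone does not control $h(s)/s$ away from the origin for a redescending score, so your compactness patch (yielding a possibly different constant $c'$, and requiring positivity of $h$ on the full range) is if anything the more careful treatment of the one genuinely subtle point.
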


\pref{thm:glm_main} and \pref{thm:rr_main} immediately imply that standard non-convex \emph{optimization} algorithms for finding stationary points can be converted to non-convex \emph{learning} algorithms with optimal sample complexity; this is summarized by the following theorem, focusing on the ``high-dimensional'' and ``low-dimensional'' setups above in the case of the $\ls_2$ norm for simplicity.
 
\begin{proposition}
\label{prop:glm_opt}
Suppose that $\Sigma\succeq{}\frac{1}{d}I$, $\nrm*{\cdot}=\ls_2$. Consider the following meta-algorithm for the non-convex generalized linear model (under \pref{ass:glm}) and robust regression (under \pref{ass:rr}) setting.
\begin{enumerate}
\item Gather $n=\frac{1}{\veps^{2}}\wedge\frac{d}{\veps}$ samples $\crl*{(x_t,y_t)}_{t=1}^{n}$.
\item  Find a point $\walg\in\cW$ with $\nrm*{\grad{}\emprisk(\walg)}\leq{} O\prn*{\frac{1}{\sqrt{n}}}$, which is guaranteed to exist.
\end{enumerate}
This meta-algorithm guarantees $\En\poprisk(\walg) - \optrisk\leq{}C\cdot{}\veps$, where $C$ is a problem-dependent but dimension-independent\footnote{Whenever $B$ and $R$ are constant.} constant.
\end{proposition}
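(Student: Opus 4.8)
The plan is to treat this as a corollary of \pref{thm:glm_main} and \pref{thm:rr_main}: I will first exhibit an approximate empirical stationary point, then feed its gradient norm into whichever of the two excess-risk bounds is favorable for the value of $n=\veps^{-2}\wedge d\veps^{-1}$ at hand, and finally pass from the high-probability guarantees of those theorems to an in-expectation guarantee.

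\emph{Existence of the stationary point.} First I would observe that the guarantee ``$\walg$ exists'' comes essentially for free from gradient uniform convergence. Under well-specification (\pref{ass:glm}(c), resp.\ \pref{ass:rr}(c)) the population minimizer $w^{\star}$ is a genuine stationary point: differentiating and using $\En\brk*{y\mid x}=\sigma(\tri*{w^{\star},x})$ (resp.\ the oddness of $\rho'$ together with the symmetry of $\zeta$) gives $\grad\poprisk(w^{\star})=0$. Hence, on the high-probability event where $\sup_{w\in\cW}\nrm*{\grad\emprisk(w)-\grad\poprisk(w)}=O(1/\sqrt n)$ --- which is exactly the (dimension-free, for $\ls_2$) content of \pref{prop:kl_rademacher} and the Rademacher estimates underlying \pref{thm:glm_main}/\pref{thm:rr_main} --- we get $\nrm*{\grad\emprisk(w^{\star})}=\nrm*{\grad\emprisk(w^{\star})-\grad\poprisk(w^{\star})}=O(1/\sqrt n)$. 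So $w^{\star}$ itself certifies the existence of an $O(1/\sqrt n)$-stationary point of $\emprisk$ in $\cW$; since $\emprisk$ is $H$-smooth (with $H$ depending only on $B,R,C_\sigma$ resp.\ $C_\rho$) and bounded below, any standard first-order method returns a point $\walg$ at least this good.

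\emph{Case split on $n$.} With $\nrm*{\grad\emprisk(\walg)}=O(1/\sqrt n)$ in hand I would split on which term attains the minimum. If $\veps\geq 1/d$ then $n=\veps^{-2}$, so $1/\sqrt n=\veps$; applying the Norm-Based/High-Dimensional bound (valid for $\ls_2$, which is $1$-smooth) gives $\poprisk(\walg)-\optrisk\leq \mu_h\cdot O(\veps)+C_h\veps=O(\veps)$. If instead $\veps<1/d$ then $n=d\veps^{-1}$, so $d/n=\veps$; here I invoke the hypothesis $\Sigma\succeq\tfrac1d I$, which yields $\eigmin(\Sigma)\geq 1/d$ and hence $1/\eigmin(\Sigma)\leq d$, and apply the Low-Dimensional $\ls_2/\ls_2$ bound to get $\poprisk(\walg)-\optrisk\leq d\cdot\prn*{\mu_l\cdot O(1/n)+C_l/n}=O(d/n)=O(\veps)$. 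Either way the excess risk is $O(\veps)$ with probability at least $1-\delta$.

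\emph{From high probability to expectation, and the main obstacle.} The remaining step, and the only place requiring care, is the passage to expectation. Since $\sigma,y$ (resp.\ $\rho$) are bounded, $\poprisk(\walg)-\optrisk$ is bounded by an absolute constant, while the $\delta$-dependence inside $C_h,C_l$ enters only through $\sqrt{\log(1/\delta)}$ (for $\alpha=1$) or $\log(1/\delta)$ (for $\alpha=2$), as in \pref{prop:kl_unif}. The resulting sub-Gaussian/sub-exponential tail integrates via $\En\brk*{\poprisk(\walg)-\optrisk}=\int_0^\infty\Pr\brk*{\poprisk(\walg)-\optrisk>t}\,dt$ to give $\En\poprisk(\walg)-\optrisk\leq C\veps$. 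The one thing I must double-check for the ``dimension-independent $C$'' claim is that neither the smoothness constant $H$ controlling existence nor the constants $C_h,C_l,\mu_h,\mu_l$ carry hidden dimension dependence; this holds because all of them are built from $(B,R,C_\sigma,c_\sigma)$ (resp.\ $(B,R,C_\rho,c_\rho)$) and the norm-based Rademacher bounds, and the only dimension factor, $1/\eigmin(\Sigma)\leq d$, is exactly cancelled by the choice $n=d\veps^{-1}$ in the low-dimensional regime.
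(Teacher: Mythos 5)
Your proposal is correct and follows essentially the same route as the paper's proof: certify existence of the approximate stationary point by showing $\grad\poprisk(w^{\star})=0$ under well-specification and then $\nrm*{\grad\emprisk(w^{\star})}=O(1/\sqrt{n})$ via the gradient uniform convergence bounds, plug the gradient norm into \pref{thm:glm_main}/\pref{thm:rr_main} in whichever regime $n$ selects, and pass to expectation using boundedness of the loss. Your explicit case split on $n=\veps^{-2}\wedge d\veps^{-1}$ and the cancellation $1/\eigmin(\Sigma)\leq d$ just spells out what the paper leaves implicit.
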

There are many non-convex optimization algorithms that provably find an approximate stationary point of the empirical risk, including gradient descent \citep{nesterov2013introductory}, SGD \citep{ghadimi2013stochastic}, and Non-convex SVRG \citep{reddi2016stochastic,allen2016variance}. Note, however, that these algorithms are not generically guaranteed to satisfy the constraint $\walg\in\cW$ a-priori. 
We can circumvent this difficulty and take advantage of these generic algorithms by instead finding stationary points of the \emph{regularized} empirical risk. We show that any algorithm that finds a (unconstrained) stationary point of the regularized empirical risk indeed the obtains optimal $O\prn*{\frac{1}{\veps^{2}}}$ sample complexity in the norm-based regime. 

\begin{manualtheorem}{\ref*{thm:regularized_glm}}[informal]Suppose we are in the generalized linear model setting or robust regression setting. Let $\emprisk^{\lambda}(w)=\emprisk(w) + \frac{\lambda}{2}\nrm*{w}_{2}^{2}$. For any $\delta>0$ there is a setting of the regularization parameter $\lambda$ such that any $\walg$ with $\grad{}\emprisk^{\lambda}(\walg)=0$
 guarantees $\poprisk(\walg) - \optrisk\leq{}\tilde{O}\prn*{
\sqrt{\frac{\log(\delta^{-1})}{n}}
}$ with probability at least $1-\delta$.
\end{manualtheorem}

See \pref{app:further_discussion} for the full theorem statement and proof.

Now is a good time to discuss connections to existing work in more detail.
\begin{enumerate}[label=\alph*)]
\item
The sample complexity $O\prn*{\frac{1}{\veps^{2}}\wedge{}\frac{d}{\veps}}$ for \pref{prop:glm_opt} is optimal up to dependence on Lipschitz constants and the range parameters $B$ and $R$ \citep{tsybakov2008introduction}.  The ``high-dimensional'' $O\prn*{\frac{1}{\veps^{2}}}$ regime is particularly interesting, and goes beyond recent analyses to non-convex statistical learning \citep{mei2016landscape}, which use arguments involving pointwise covers of the space $\cW$ and thus have unavoidable dimension-dependence. This highlights the power of the norm-based complexity analysis.

\item In the low-dimensional $O\prn*{\frac{d}{\veps}}$ sample complexity regime, \pref{thm:glm_main} and \pref{thm:rr_main} recovers the rates of \cite{mei2016landscape} under the same assumptions---see \pref{app:further_discussion} for details. Notably, this is the case even when the radius $R$ is not constant. Note however that when $B$ and $R$ are large the constants in \pref{thm:glm_main} and \pref{thm:rr_main} can be quite poor. For the logistic link it is only possible to guarantee $c_{\sigma}\geq{}e^{-BR}$, and so it may be more realistic to assume $BR$ is constant.

\item The \glmtron{} algorithm of \cite{kakade2011efficient} also obtains $O\prn*{\frac{1}{\veps^{2}}}$ for the GLM setting. Our analysis shows that this sample complexity does not require specialized algorithms; any first-order stationary point finding algorithm will do. \glmtron{} has no guarantees in the $O\prn*{\frac{d}{\veps}}$ regime, whereas our meta-algorithm works in both high- and low-dimensional regimes. A significant benefit of \glmtron, however, is that it does not require a lower bound on the derivative of the link function $\sigma$. It is not clear if this assumption can be removed from our analysis.

\item As an alternative approach, stochastic optimization methods for finding first-order stationary points can be used to directly find an approximate stationary point of the population risk $\nrm*{\poprisk(w)}\leq{}\veps$, so long as they draw a fresh sample at each step. In the high-dimensional regime it is possible to show that stochastic gradient descent (and for general smooth norms, mirror descent) obtains $O\prn*{\frac{1}{\veps^{2}}}$ sample complexity through this approach; this is sketched in \pref{app:further_discussion}. This approach relies on returning a randomly selected iterate from the sequence and only gives an in-expectation sample complexity guarantee, whereas \pref{thm:regularized_glm} gives a high-probability guarantee.

Also, note that many stochastic optimization methods can exploit the $(2,\mu)$-\klshort{} condition. Suppose we are in the low-dimensional regime with $\Sigma\succeq{}\frac{1}{d}I$. The fastest GD-based stochastic optimization method that we are aware of is SNVRG \citep{zhou2018stochastic}, which under the $(2,O(d))$-\klshort{} condition will obtain $\veps$ excess risk with $O\prn*{\frac{d}{\veps} + \frac{d^{3/2}}{\veps^{1/2}}}$ sample complexity.

\end{enumerate}
This discussion is summarized in \pref{tab:complexity}.


\section{Non-Smooth Models}
\label{sec:nonsmooth}
In the previous section we used gradient uniform convergence to derive immediate optimization and generalization consequences by finding approximate stationary points of smooth non-convex functions. In practice---notably in deep learning---it is common to optimize \emph{non-smooth} non-convex functions;  deep neural networks with rectified linear units (ReLUs) are the canonical example \citep{krizhevsky2012imagenet, he2016deep}. In theory, it is trivial to construct non-smooth functions for which finding approximate stationary points is intractable (see discussion in \cite{allenzhu2018how}), but it appears that in practice stochastic gradient descent can indeed find approximate stationary points of the empirical loss in standard neural network architectures \citep{zhang2016understanding}. It is desirable to understand whether gradient generalization can occur in this setting.

The first result of this section is a lower bound showing that even for the simplest possible non-smooth model---a single ReLU---it is impossible to achieve dimension-independent uniform convergence results similar to those of the previous section. On the positive side, we show that it \emph{is} possible to obtain dimension-independent rates under an additional margin assumption.

The full setting is as follows: $\cX\subseteq{}\crl*{x\in\bbR^{d}\mid\nrm*{x}_{2}\leq{}1}$, $\cW\subseteq{}\crl*{w\in\bbR^d\mid{}\nrm*{w}_{2}\leq{}1}$, $\cY=\crl*{-1,+1}$, and $\ls(w\midsem{}x,y) = \sigma(-\tri*{w,x}\cdot{}y)$, where $\sigma(s)=\max\crl*{s,0}$; this essentially matches the classical Perceptron setup. Note that the loss is not smooth, and so the gradient is not well-defined everywhere. Thus, to make the problem well-defined, we consider convergence for the following representative from the subgradient: $\grad\ls(w\midsem{}x,y)\ldef-y\ind\crl*{y\tri*{w,x}\leq{}0}\cdot{}x$.\footnote{For general non-convex and non-smooth functions one can extend this approach by considering convergence for a representative from the Clarke sub-differential \citep{borwein2010convex,clarke1990optimization}.} Our first theorem shows that gradient uniform convergence for this setup must depend on dimension, even when the weight norm $B$ and data norm $R$ are held constant.

\begin{theorem}
\label{thm:relu_lb_l2_l2}
Under the problem setting defined above, for all $n\in\bbN$ there exist a sequence of instances $\crl*{(x_t,y_t)}_{t=1}^{n}$ such that
\[
\En_{\eps}\sup_{w\in\cW}\nrm*{\sum_{t=1}^{n}\eps_{t}\grad{}\ls(w\midsem{}x_t,y_t)}_{2} = \Omega\prn*{\sqrt{dn}\wedge{}n}.
\]
\end{theorem}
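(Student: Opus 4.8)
The plan is to exhibit a single family of instances that covers both regimes at once via a ``selection-plus-alignment'' embedding. Reserve coordinate $d$ as an \emph{alignment} coordinate and coordinates $1,\dots,m$ as \emph{selection} coordinates, where $m \ldef \min(d-1,n)$ (assume $d\geq 2$; the case $d=1$ follows directly by taking all $x_t$ equal, which already gives $\En_{\eps}\abs*{\sum_t\eps_t}=\Omega(\sqrt n)=\Omega(\sqrt{dn}\wedge n)$). Partition $[n]$ into $m$ balanced slots via an assignment $s:[n]\to[m]$ with each slot receiving $n_i=\Theta(n/m)$ indices, and set $x_t = \tfrac{1}{\sqrt2}\prn*{e_{s(t)}+e_d}$ and $y_t=1$, so that $\nrm*{x_t}_2 = 1$ and $x_t\in\cX$. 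With this choice the representative gradient is $\grad\ls(w\midsem x_t,y_t) = -\ind\crl*{\tri*{w,x_t}\le 0}x_t$, and because $\tri*{w,x_t}=\tfrac{1}{\sqrt2}(w_{s(t)}+w_d)$, whether $t$ is active depends on $w$ only through its slot $s(t)$.

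The crucial observation is that $\sup_{w\in\cW}$ sits \emph{inside} $\En_\eps$, so the activation pattern may be chosen adaptively as a function of $\eps$. For a fixed sign vector $\eps$ let $S_i \ldef \sum_{t:\,s(t)=i}\eps_t$ be the Rademacher sum over slot $i$. I would choose $w$ with $w_d=0$, and for each selection coordinate $i\in[m]$ set $w_i=-1/\sqrt m$ if $S_i>0$ and $w_i=+1/\sqrt m$ otherwise; then $\nrm*{w}_2= 1$ and slot $i$ is activated (i.e.\ $\tri*{w,x_t}\le 0$ for $t$ with $s(t)=i$) precisely when $S_i>0$. A direct computation then gives
\begin{equation*}
\sum_{t=1}^n \eps_t \grad\ls(w\midsem x_t,y_t) = -\frac{1}{\sqrt2}\sum_{i:\,S_i>0} S_i\,(e_i+e_d),
\end{equation*}
and since the $e_d$-component is orthogonal to every $e_i$, I would lower bound the Euclidean norm by the magnitude of its alignment coordinate, obtaining $\nrm*{\sum_t \eps_t\grad\ls(w\midsem x_t,y_t)}_2 \ge \tfrac{1}{\sqrt2}\sum_{i=1}^m (S_i)_+$.

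Taking expectations and using $\En(S_i)_+ = \tfrac12\En\abs*{S_i}$ together with the lower Khintchine inequality $\En\abs*{S_i}\ge \tfrac{1}{\sqrt2}\sqrt{n_i}$ then yields
\begin{equation*}
\En_\eps\sup_{w\in\cW}\nrm*{\sum_{t=1}^n\eps_t\grad\ls(w\midsem x_t,y_t)}_2 \;\ge\; \frac{1}{\sqrt2}\sum_{i=1}^m \En(S_i)_+ \;=\; \Omega\prn*{m\sqrt{n/m}} \;=\; \Omega\prn*{\sqrt{mn}}.
\end{equation*}
Substituting $m=\min(d-1,n)=\Theta(\min(d,n))$ and using $\sqrt{\min(d,n)\cdot n}=\sqrt{dn}\wedge n$ gives the claimed bound.

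I expect the main conceptual obstacle to be recognizing that, since both the activation indicator $\ind\crl*{\tri*{w,x_t}\le 0}$ and the gradient direction are governed by the \emph{same} vector $x_t$, one cannot simultaneously select an arbitrary subset of points \emph{and} have that subset point in a common direction within a single coordinate block. The selection/alignment split is exactly what decouples these two roles: separate selection coordinates let $w$ switch slots on and off, while the single shared alignment coordinate $e_d$ forces all activated contributions to add coherently rather than cancel. Beyond this construction the only quantitative input is the standard lower Khintchine estimate, and the only bookkeeping is the balanced slot assignment together with the two extreme regimes $d\ge n$ (one index per slot, so $\sum_i\En(S_i)_+\asymp n$, giving $\Omega(n)$) versus $d<n$ (about $n/d$ indices per slot, giving $\Omega(\sqrt{dn})$).
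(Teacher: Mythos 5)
Your proposal is correct, and it takes a genuinely different construction from the paper's. The paper sets $x_t = \frac{1}{\sqrt{d}}(\ones - e_i)$ for $t$ in the $i$th block and exploits invertibility of the matrix $B = \frac{1}{\sqrt{d}}(\ones\ones^{\top} - I)$ to realize an arbitrary activation pattern; it then isolates the shared $\ones$-direction by a triangle inequality, which costs an additive $\frac{1}{\sqrt{d}}\En\sum_i\abs*{\vphi_i} = O(\sqrt{n})$ error that must be absorbed, and it treats the regimes $n \ge d$ and $d \ge n$ separately (the latter only in passing). Your selection-plus-alignment embedding $x_t = \frac{1}{\sqrt{2}}(e_{s(t)} + e_d)$ achieves the same decoupling of ``which slots fire'' from ``where the mass accumulates'' by pure orthogonality: the activation of slot $i$ is controlled exactly by the sign of $w_i$ (no invertibility argument needed), and reading off the $e_d$-coordinate gives the lower bound $\frac{1}{\sqrt{2}}\sum_i (S_i)_+$ with no error term to dominate. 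Setting $m = \min(d-1,n)$ then handles both regimes in one computation, and working with $\En(S_i)_+ = \frac{1}{2}\En\abs*{S_i}$ directly sidesteps the parity bookkeeping ($N$ odd) that the paper needs to replace $\ind\crl*{\sgn(\vphi_i)\ge 0}$ by $(1+\sgn(\vphi_i))/2$. Both arguments close with the lower Khintchine inequality and yield the same $\Omega(\sqrt{dn}\wedge n)$ rate; yours is somewhat cleaner and more self-contained, at the cost of using one coordinate ($e_d$) that cannot serve as a selector, which is immaterial for the asymptotics.
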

This result contrasts the setting where $\sigma$ is smooth, where the techniques from \pref{sec:tools} easily yield a dimension-independent $O(\sqrt{n})$ upper bound on the Rademacher complexity. This is perhaps not surprising since the gradients are discrete functions of $w$, and indeed VC-style arguments suffice to establish the lower bound. 

In the classical statistical learning setting, the main route to overcoming dimension dependence---e.g., for linear classifiers---is to assume a \emph{margin}, which allows one to move from a discrete class to a real-valued class upon which a dimension-independent Rademacher complexity bound can be applied \citep{shalev2014understanding}. Such arguments have recently been used to derive dimension-independent function value uniform convergence bounds for deep ReLU networks as well \citep{bartlett2017spectrally,golowich2018size}. However, this analysis relies on one-sided control of the loss, so it is not clear whether it extends to the inherently directional problem of gradient convergence. Our main contribution in this section is to introduce additional machinery to prove dimension-free gradient convergence under a new type of margin assumption.

\begin{definition}
\label{def:soft_margin}
Given a distribution $P$ over the support $\cX$ and an increasing function $\phi:\brk*{0,1}\to\brk*{0,1}$, any $w \in \cW$ is said to satisfy the \textbf{\softmargin condition with respect to P} if
\begin{equation}
\forall \gamma \in [0, 1], ~~~\displaystyle\En_{x \sim P} \brk*{ \ind \crl*{  \tfrac{\abs{ \tri{w, x}}} {\nrm{w}_2\nrm{x}_2}  \leq \gamma }}  \leq \phi(\gamma).
\end{equation}
\end{definition}
We call $\phi$ a \emph{margin function}. We define the set of all weights that satisfy the \softmargin{} condition with respect to a distribution $P$ via:
\begin{equation}
\label{eq:w_phi_class}
\cW(\phi, P) = \crl*{w\in\cW\;:\;   \forall \gamma \in [0, 1],\ \ \En_{x \sim P} \brk*{  \ind\crl*{ \tfrac{\abs*{\tri*{w,x}}}{\nrm*{w}_2\nrm*{x}_2} \leq{} \gamma } } \leq \phi(\gamma)}.
\end{equation}

Of particular interest is $\cW(\phi, \empdist)$, the set of all the weights that satisfy the \softmargin{} condition with respect to the empirical data distribution. That is, any $w \in \cW(\phi, \empdist)$  predicts with at least a $\gamma$ margin on all but a $\phi(\gamma)$ fraction of the data. 
The following theorem provides a dimension-independent uniform convergence bound for the gradients over the class $\cW(\phi,\empdist)$ for any margin function $\phi$ fixed in advance.

\begin{theorem}
\label{thm:relu_margin}
Let $\phi:\brk*{0,1}\to\brk*{0,1}$ be a fixed margin function.
With probability at least $1-\delta$ over the draw of the data $\crl*{(x_t,y_t)}_{t=1}^{n}$,
\[
\sup_{w\in\cW(\phi,\empdist)}\nrm*{\grad{}\poprisk(w)-\grad{}\emprisk(w)}_{2}\leq{}\tilde{O}\left(\inf_{\gamma>0}\crl*{\sqrt{\phi(4\gamma)} + \frac{1}{\gamma}\sqrt{\frac{\log\prn*{\frac{1}{\delta}}}{n}} +  \frac{1}{\gamma^{\frac{1}{2}} n^{\frac{1}{4}}}}\right),
\]
where $\tilde{O}(\cdot)$ hides $\log \log(\frac{1}{\gamma})$ and $\log{}n$ factors.
\end{theorem}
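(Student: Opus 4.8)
The plan is to replace the discontinuous mask $\ind\crl*{y\tri*{w,x}\le0}$ appearing in $\grad\ls$ by a Lipschitz ramp, apply the chain rule \pref{thm:chain_rule} to the resulting smooth surrogate, and use the \softmargin{} condition to absorb the error incurred by smoothing. First I would reduce to the unit sphere: since $\grad\ls(w\midsem x,y)=-y\ind\crl*{y\tri*{w,x}\le0}x$ and membership in $\cW(\phi,\empdist)$ both depend on $w$ only through $w/\nrm*{w}_2$, I may assume $\nrm*{w}_2=1$. Fix a scale $\gamma>0$ and let $\psi_\gamma:\bbR\to[0,1]$ be a $\tfrac1\gamma$-Lipschitz ramp with $\psi_\gamma(s)=\ind\crl*{s\le0}$ whenever $\abs*{s}\ge\gamma$, so $\abs*{\psi_\gamma(s)-\ind\crl*{s\le0}}\le\ind\crl*{\abs*{s}\le\gamma}$. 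Writing the normalized margin $m_t(w)=y_t\tri*{w,x_t}/\nrm*{x_t}_2$ (note $\nrm*{\grad_w m_t}_2=1$ on the sphere, \emph{independently} of $\nrm*{x_t}_2$), I define the surrogate gradient $\tilde{g}(w\midsem x_t,y_t)=-y_t\psi_\gamma(m_t(w))x_t$. The structural point enabling the chain rule is that $\tilde{g}$ is itself the gradient of a smooth composition $G_t(F_t(w))$ with $F_t(w)=m_t(w)$ and $G_t(s)=-\nrm*{x_t}_2\int_0^s\psi_\gamma(u)\,du$, whence $L_F=1$ and $L_G=\nrm*{x_t}_2\le1$.

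The main estimate decomposes $\grad\poprisk(w)-\grad\emprisk(w)$ into (A) $\En_\cD[\grad\ls-\tilde{g}]$, (B) $\En_\cD\tilde{g}-\tfrac1n\sum_t\tilde{g}$, and (C) $\tfrac1n\sum_t[\tilde{g}-\grad\ls]$. Term (C) is handled \emph{pointwise}: for every $w\in\cW(\phi,\empdist)$ its norm is at most $\tfrac1n\sum_t\ind\crl*{\abs*{m_t(w)}\le\gamma}\le\phi(\gamma)$ by the empirical \softmargin{} condition, so the data-dependent set never enters a Rademacher expectation. Term (B) I would bound by symmetrizing over the \emph{full} sphere $\cW$ (a fixed, data-independent superset of $\cW(\phi,\empdist)$), after which \pref{thm:chain_rule} with $K=1$ reduces $\tfrac1n\radnorm(\tilde{g}\circ\cW)$ to a real-valued Rademacher average of the $\tfrac1\gamma$-Lipschitz ramp, controlled by $O(\tfrac1{\gamma\sqrt n})$ via contraction, plus a $w$-independent vector average $\En_\eps\nrm*{\sum_t\eps_t\,y_tx_t/\nrm*{x_t}_2}_2=O(\sqrt n)$ from the type-2 property of $\ls_2$; this is dimension-free precisely because we never cover $w$-space. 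Term (A) requires transferring the margin guarantee from $\empdist$ to $\cD$: sandwiching $\ind\crl*{\abs*{m}\le\gamma}$ between Lipschitz ramps and applying uniform convergence over the sphere gives, for every $w\in\cW(\phi,\empdist)$, a bound $\Pr_\cD[\abs*{m(w)}\le\gamma]\le\phi(2\gamma)+O(\tfrac1{\gamma\sqrt n}+\sqrt{\log(1/\delta)/n})$.

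The main obstacle is to simultaneously (i) defeat the dimension dependence that \pref{thm:relu_lb_l2_l2} shows is unavoidable for the raw halfspace class, and (ii) cope with the data-dependence of $\cW(\phi,\empdist)$. Both are resolved by smoothing: the ramp turns the discrete indicator class into a $\tfrac1\gamma$-Lipschitz class whose complexity is norm- rather than dimension-controlled, and the margin enters only through the pointwise inequalities in (A) and (C), so the random set is never symmetrized. What remains is to tune $\gamma$: the approximation terms contribute $\approx\phi(\cdot)$, while the complexity and concentration terms contribute $\approx\tfrac1\gamma\cdot n^{-1/2}$, and balancing these through $\inf_\gamma$ yields a bound of the stated shape.

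I expect the delicate point in matching the precise form $\sqrt{\phi(4\gamma)}+\tfrac1\gamma\sqrt{\log(1/\delta)/n}+\tfrac1{\gamma^{1/2}n^{1/4}}$ to be the concentration step. Rather than a crude range-based (McDiarmid) deviation, one should use a variance-sensitive (Bernstein/Talagrand-type) inequality in which the \softmargin{} fraction controls the variance of the per-example surrogate deviation: the fluctuating mass sits in the margin band, which carries probability $O(\phi(\cdot))$, and a variance of this order produces the square root $\sqrt{\phi(4\gamma)}$ together with the residual $n^{-1/4}$ rate (the factor $4\gamma$ reflecting one doubling from the ramp width and one from the empirical-to-population transfer, and the $\tfrac1\gamma$ on the $\log(1/\delta)$ term reflecting the Lipschitz scaling of the smoothed class).
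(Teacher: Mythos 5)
Your proposal is correct, but it takes a genuinely different route from the paper's proof. The paper never smooths the gradient: it first establishes the sandwich $\cW(\phi,\empdist)\subseteq\cW(\phi_1,\cD)\subseteq\cW(\phi_2,\empdist)$ (its \pref{lem:phi_convergence}) to strip the data-dependence from the index set, then applies the chain rule to the \emph{raw} indicator class and bounds the resulting scalar Rademacher average $\En_{\eps}\sup_w\frac{1}{n}\sum_{t}\eps_t\ind\crl*{y_t\tri*{w,x_t}\leq{}0}$ combinatorially: low-margin points contribute $\phi_2(\tgamma)$ pointwise, and high-margin points are handled by the Massart finite-class lemma with a cardinality bound $n^{n\phi_2(\tgamma)}\cdot n^{O(1/\tgamma^{2})}$ obtained from Sauer--Shelah and the fat-shattering dimension of linear functionals at scale $\tgamma$. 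That counting step is exactly what produces the $\sqrt{\phi(4\gamma)}$ and $\gamma^{-1/2}n^{-1/4}$ terms in the statement. Your route---replace the indicator by a $\tfrac{1}{\gamma}$-Lipschitz ramp, bound the two approximation errors pointwise via the empirical and (after a margin-transfer step identical in content to \pref{lem:phi_convergence}) population soft-margin conditions, and control the surrogate's deviation over the full data-independent sphere by the chain rule plus scalar contraction---avoids the combinatorial machinery entirely and, tracking constants, yields the stronger bound $O\prn*{\phi(2\gamma)+\tfrac{1}{\gamma\sqrt{n}}+\sqrt{\log(1/\delta)/n}}$, which implies the theorem since $\phi\leq\sqrt{\phi}$ and $\tfrac{1}{\gamma\sqrt{n}}\leq\tfrac{1}{\sqrt{\gamma}\,n^{1/4}}$ in the non-vacuous regime $\gamma\geq{}n^{-1/2}$. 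For this reason your closing paragraph is a red herring: no Bernstein/Talagrand refinement is needed to ``match'' the $\sqrt{\phi(4\gamma)}$ and $n^{-1/4}$ terms, because those are artifacts of the paper's finite-class argument rather than of any concentration step, and your decomposition simply never produces them. The only housekeeping items left are the union bound over a geometric grid of $\gamma$ needed to place the $\inf_{\gamma}$ inside the high-probability event (already anticipated by the $\log\log(1/\gamma)$ factors hidden in the $\tilde{O}$) and a convention for $w=0$ when normalizing to the sphere.
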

As a concrete example, when $\phi(\gamma)=\gamma^{\frac{1}{2}}$ \pref{thm:relu_margin} yields a dimension-independent uniform convergence bound of $O(n^{-\frac{1}{12}})$, thus circumventing the lower bound of \pref{thm:relu_lb_l2_l2} for large values of $d$.


\section{Discussion}

We showed that vector Rademacher complexities are a simple and effective tool for deriving dimension-independent uniform convergence bounds and used these bounds in conjunction with the (population) \kl{} property to derive optimal algorithms for non-convex statistical learning in high and infinite dimension. We hope that these tools will find broader use for norm-based capacity control in non-convex learning settings beyond those considered here. Of particular interest are models where convergence of higher-order derivatives is needed to ensure success of optimization routines. \pref{app:structural} contains an extension of \pref{thm:chain_rule} for Hessian uniform convergence, which we anticipate will find use in such settings.

In \pref{sec:smooth} we analyzed generalized linear models and robust regression using both the $(1,\mu)$-\klshort property and the $(2,\mu)$-\klshort property. In particular, the $(1,\mu)$-\klshort property was critical to obtain dimension-independent norm-based capacity control. While there are many examples of models for which the population risk satisfies $(2,\mu)$-\klshort{}  property (phase retrieval \citep{sun2016geometric,zhang2016median}, ResNets with linear activations \citep{hardt2016identity}, matrix factorization \citep{liu2016quadratic}, blind deconvolution \citep{li2016rapid}), we do not know whether the $(1,\mu)$-\klshort property holds for these models. Establishing this property and consequently deriving dimension-independent optimization guarantees is an exciting future direction.

Lastly, an important question is to analyze non-smooth problems beyond the simple ReLU example considered in \pref{sec:nonsmooth}. See \cite{davis2018uniform} for subsequent work in this direction.

\paragraph{Acknowledgements} K.S acknowledges support from the NSF under grants CDS\&E-MSS 1521544 and NSF CAREER Award 1750575, and the support of an Alfred P. Sloan Fellowship. D.F. acknowledges support from the NDSEG PhD fellowship and Facebook PhD fellowship.

\bibliography{refs}
\appendix

\section{Preliminaries}
\label{app:preliminaries}

\subsection{Contraction Lemmas}

\begin{lemma}[e.g. \cite{LedouxTalagrand91}]
  \label{lem:scalar_contraction}
  Let $\cF$ be any scalar-valued function class and $\phi_{1},\ldots,\phi_{n}$ be any sequence of functions where $\phi_{t}:\bbR\to\bbR$ is $L$-Lipschitz. Then
  \begin{equation}
    \En_{\eps}\sup_{f\in\cF}\sum_{t=1}^{n}\eps_{t}\phi_{t}(f(x_t)) \leq{} L\cdot\En_{\eps}\sup_{f\in\cF}\sum_{t=1}^{n}\eps_{t}f(x_t).
  \end{equation}
\end{lemma}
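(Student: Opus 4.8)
The plan is to establish the result by the classical ``one coordinate at a time'' argument. First I would reduce to the case $L=1$: since each $\phi_t$ is $L$-Lipschitz, the rescaled map $\phi_t/L$ is $1$-Lipschitz, and both sides of the claimed inequality are homogeneous of degree one in the functions appearing, so it suffices to prove the statement for $1$-Lipschitz $\phi_t$. The heart of the argument is then an induction in which I replace $\phi_t(f(x_t))$ by $f(x_t)$ for one index $t$ at a time, conditioning on all the other Rademacher signs.

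Concretely, suppose inductively that the first $t-1$ functions have already been replaced, and fix the signs $\eps_1,\dots,\eps_{t-1},\eps_{t+1},\dots,\eps_n$. Writing $g(f)$ for the (fixed) contribution of all indices other than $t$, the remaining task is the single-coordinate statement
\[
\En_{\eps_t}\sup_{f\in\cF}\brk*{g(f)+\eps_t\,\phi_t(f(x_t))}\leq{}\En_{\eps_t}\sup_{f\in\cF}\brk*{g(f)+\eps_t\,f(x_t)}.
\]
To prove this, I would expand the expectation over $\eps_t\in\pmo$, so the left-hand side equals $\tfrac12\prn*{\sup_f\brk*{g(f)+\phi_t(f(x_t))}+\sup_f\brk*{g(f)-\phi_t(f(x_t))}}$. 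Taking (near-)maximizers $f_1,f_2$ of the two inner suprema, the quantity to bound becomes $\tfrac12\prn*{g(f_1)+g(f_2)+\phi_t(f_1(x_t))-\phi_t(f_2(x_t))}$. The $1$-Lipschitz property gives $\phi_t(f_1(x_t))-\phi_t(f_2(x_t))\leq{}\abs*{f_1(x_t)-f_2(x_t)}$, and a case split on the sign of $f_1(x_t)-f_2(x_t)$ lets me rewrite this upper bound as either $\brk*{g(f_1)+f_1(x_t)}+\brk*{g(f_2)-f_2(x_t)}$ or $\brk*{g(f_1)-f_1(x_t)}+\brk*{g(f_2)+f_2(x_t)}$. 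In either case each bracket is at most the corresponding supremum of $g(f)\pm f(x_t)$, which reassembles precisely into the right-hand side.

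Iterating this single-coordinate step over $t=1,\dots,n$, and taking expectations over the conditioned signs at each stage, yields the full inequality. The main subtlety I expect is purely technical: the suprema need not be attained, so $f_1,f_2$ should be taken as approximate maximizers and the bound derived up to an arbitrarily small slack that is then sent to zero. The only genuinely load-bearing idea is the sign case-analysis combined with Lipschitzness, which is exactly where the factor $L$---and no worse---enters; everything else is bookkeeping.
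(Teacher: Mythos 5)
Your argument is the classical coordinate-by-coordinate contraction proof and it is correct: the rescaling to $L=1$ is legitimate since the supremum commutes with positive scalar multiplication, and the single-coordinate step (expand over $\eps_t\in\pmo$, take approximate maximizers, apply Lipschitzness, and case-split on the sign of $f_1(x_t)-f_2(x_t)$) is exactly the standard Ledoux--Talagrand argument. The paper does not prove this lemma at all---it imports it by citation---so there is nothing to compare against beyond noting that your proof is the one the cited reference gives.
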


The following is a weighted generalization of the vector-valued Lipschitz contraction inequality.
\begin{lemma}
  \label{lem:contraction_vector_weighted}
  Let $\cF\subseteq{}(\cX\to\bbR^{K})$, and let $h_{t}:\bbR^{K}\to\bbR$ be a sequence of functions for $t\in\brk*{n}$. Suppose each $h_{t}$ is $1$-Lipschitz with respect to $\nrm*{z}_{A_{t}}\ldef\sqrt{\tri*{z,A_{t}z}}$, where $A_{t}\in\bbR^{K\times{}K}$ is positive semidefinite. Then
  \begin{equation}
  \En_{\eps}\sup_{f\in\cF}\sum_{t=1}^{n}\eps_th_t(f(x_t)) 
\leq{} \sqrt{2}\En_{\mb{\eps}}\sup_{f\in\cF}\sum_{t=1}^{n}\tri*{\mb{\eps}_{t}, A^{1/2}_{t}f(x_t)}.
\end{equation}
\end{lemma}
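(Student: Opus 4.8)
The plan is to reduce the weighted statement to Maurer's unweighted vector-valued contraction inequality (\pref{thm:contraction_vector}) by a linear change of variables that converts each $A_t$-geometry into the standard Euclidean one.

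First I would treat the case where every $A_t$ is positive definite (hence invertible). For each $t$ define $\tilde h_t:\bbR^K\to\bbR$ by $\tilde h_t(u)\ldef h_t(A_t^{-1/2}u)$. The point of this substitution is that $\tilde h_t$ is $1$-Lipschitz with respect to $\ls_2$: for any $u,u'$,
\[
\abs*{\tilde h_t(u)-\tilde h_t(u')}\leq \nrm*{A_t^{-1/2}(u-u')}_{A_t}=\nrm*{u-u'}_2,
\]
where the inequality is the hypothesized $1$-Lipschitzness of $h_t$ in $\nrm*{\cdot}_{A_t}$ and the equality uses $\nrm*{A_t^{-1/2}v}_{A_t}^2=\tri*{A_t^{-1/2}v,\,A_tA_t^{-1/2}v}=\tri*{v,v}=\nrm*{v}_2^2$.

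Next I would instantiate \pref{thm:contraction_vector} with the transformed function class $\cG\ldef\crl*{ t\mapsto A_t^{1/2}f(x_t):f\in\cF}$, viewing the index $t$ (which carries both $x_t$ and $A_t$) as the $t$-th instance. Since $h_t(f(x_t))=\tilde h_t(A_t^{1/2}f(x_t))$ and each $\tilde h_t$ is $1$-Lipschitz in $\ls_2$, the contraction inequality yields
\[
\En_{\eps}\sup_{f\in\cF}\sum_{t=1}^{n}\eps_t h_t(f(x_t)) \leq \sqrt{2}\,\En_{\mb{\eps}}\sup_{f\in\cF}\sum_{t=1}^{n}\tri*{\mb{\eps}_t,\,A_t^{1/2}f(x_t)},
\]
which is exactly the claim.

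The main obstacle is the degenerate case where some $A_t$ is only positive semidefinite, since then $A_t^{-1/2}$ does not exist. I would handle this in one of two ways. The cleaner-to-state option is a perturbation argument: replace each $A_t$ by $A_t+\delta I$ for $\delta>0$. Because $\nrm*{\cdot}_{A_t+\delta I}\geq\nrm*{\cdot}_{A_t}$ pointwise, any $h_t$ that is $1$-Lipschitz in $\nrm*{\cdot}_{A_t}$ is also $1$-Lipschitz in $\nrm*{\cdot}_{A_t+\delta I}$, so the positive-definite case applies and gives the bound with $A_t^{1/2}$ replaced by $(A_t+\delta I)^{1/2}$; letting $\delta\to0^+$ and invoking continuity of the matrix square root (together with the fact that the $\mb{\eps}$-expectation is a finite average over $\pmo^{K\times n}$) recovers the stated inequality, assuming the right-hand side is finite (else there is nothing to prove). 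Alternatively, one can give an exact argument using the pseudoinverse: define $\tilde h_t(u)=h_t(A_t^{+/2}u)$ on the range of $A_t^{1/2}$ and extend by precomposing with the orthogonal projection $P_t$ onto that range. The key observation making this work is that $1$-Lipschitzness in $\nrm*{\cdot}_{A_t}$ forces $h_t$ to be constant along $\ker A_t$ (those directions have zero $A_t$-seminorm), so $h_t(f(x_t))=h_t(P_tf(x_t))=\tilde h_t(A_t^{1/2}f(x_t))$, and the same Lipschitz computation as above goes through with $A_t^{+/2}$ in place of $A_t^{-1/2}$.
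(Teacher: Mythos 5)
Your proof is correct and rests on the same key observation as the paper's (one-line) proof, namely that $\nrm*{z}_{A_t}=\nrm*{A_t^{1/2}z}_{2}$; you package it as a black-box reduction to \pref{thm:contraction_vector} via the substitution $\tilde{h}_t = h_t\circ A_t^{-1/2}$ applied to the transformed class $t\mapsto A_t^{1/2}f(x_t)$, whereas the paper instead re-runs Maurer's argument with this observation inserted. Your handling of the degenerate positive-semidefinite case (in particular the pseudoinverse version, which correctly notes that $1$-Lipschitzness in the $A_t$-seminorm forces $h_t$ to be constant along $\ker A_t$) is more careful than the paper's sketch; the only minor caveat is that the $\delta\to 0^{+}$ perturbation route additionally needs $\sup_{f\in\cF}\nrm*{f(x_t)}_{2}<\infty$ to pass to the limit, a hypothesis the pseudoinverse argument avoids entirely.
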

\begin{proof}[Proof sketch for \pref{lem:contraction_vector_weighted}]
Same proof as Theorem 3 in \cite{maurer2016vector}, with the additional observation that $\nrm*{z}_{A_t}=\nrm*{A_t^{1/2}z}_{2}$.
\end{proof}

\begin{lemma}
  \label{lem:block_contraction}
  Let $\cG$ be a class of vector-valued functions whose output space forms $M$ blocks of vectors, i.e. each $g\in\cG$ has the form $g:\cZ\to\bbR^{d_1+d_2+\cdots+d_{M}}$, where $g(z)_{i}\in\bbR^{d_i}$ denotes the $i$th block. Let $h_{t}:\bbR^{d_1+d_2+\cdots+d_{M}}\to\bbR$, be a sequence of functions for $t\in\brk*{n}$ that satisfy the following block-wise Lipschitz property: For any assignment $a_{1},\ldots,a_{M}$ with each $a_{i}\in\bbR^{d_i}$, $h_{t}(a_1,\ldots,a_{M})$ is $L_{i}$-Lipschitz with respect to $a_i$ in the $\ls_{2}$ norm. Then
  \[
    \En_{\eps}\sup_{g\in\cG}\sum_{t=1}^{n}h_{t}(g_{1}(z_t),\ldots,g_{M}(z_t)) \leq{} \sqrt{2M}\sum_{i=1}^{M}L_{i}\En_{\mb{\eps}}\sup_{f\in\cF}\sum_{t=1}^{n}\tri*{\mb{\eps}_{t},g_{i}(z_t)}.
  \]
\end{lemma}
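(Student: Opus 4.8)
The plan is to collapse the block-wise Lipschitz structure into a single \emph{weighted} $\ell_2$-Lipschitz condition and then invoke the weighted vector contraction of \pref{lem:contraction_vector_weighted}. (I read the left-hand side as carrying the usual scalar Rademacher signs, i.e. $\En_\eps\sup_g\sum_t \eps_t h_t(g_1(z_t),\ldots,g_M(z_t))$, consistent with the preceding contraction lemmas; without the $\eps_t$ the two sides are not comparable, as taking $h_t$ constant would force a nonzero left side against a zero right side.) The first step is to convert block-wise Lipschitzness into joint Lipschitzness: by telescoping—changing the arguments of $h_t$ one block at a time and applying the $L_i$-Lipschitz bound in block $i$ at each swap—one obtains, for any $a=(a_1,\ldots,a_M)$ and $b=(b_1,\ldots,b_M)$, the aggregate estimate $\abs*{h_t(a)-h_t(b)}\le \sum_{i=1}^{M} L_i\nrm*{a_i-b_i}_2$.

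Next I would pick the quadratic form that makes \pref{lem:contraction_vector_weighted} applicable with the right constant. Take the block-diagonal PSD matrix $A$ whose $i$th diagonal block is $M L_i^2 I_{d_i}$, so that $\nrm*{z}_A=\bigl(\sum_i M L_i^2\nrm*{z_i}_2^2\bigr)^{1/2}$ and $A^{1/2}$ is block diagonal with blocks $\sqrt{M}\,L_i I_{d_i}$. Pairing the all-ones vector against $\bigl(L_i\nrm*{a_i-b_i}_2\bigr)_i$ in Cauchy--Schwarz gives $\sum_i L_i\nrm*{a_i-b_i}_2 \le \sqrt{M}\,\bigl(\sum_i L_i^2\nrm*{a_i-b_i}_2^2\bigr)^{1/2}=\nrm*{a-b}_A$, so each $h_t$ is $1$-Lipschitz with respect to $\nrm*{\cdot}_A$. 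The factor $\sqrt{M}$ is precisely the slack incurred in passing from the $\ell_1$-type block sum to the $\ell_2$-type weighted norm, and arranging the weights (the $\sqrt{M}$ scaling inside $A$) so that this constant lands exactly on $\sqrt{2M}$ is the one place demanding care—this reduction is, to my mind, the only genuine obstacle.

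Finally I would apply \pref{lem:contraction_vector_weighted} with $A_t=A$ for all $t$, yielding $\En_\eps\sup_g\sum_t \eps_t h_t(g(z_t)) \le \sqrt{2}\,\En_\beps\sup_g\sum_t \tri*{\beps_t, A^{1/2}g(z_t)}$. Writing $\beps_t=(\beps_t^{(1)},\ldots,\beps_t^{(M)})$ blockwise and using the block structure of $A^{1/2}$ gives $\tri*{\beps_t, A^{1/2}g(z_t)}=\sqrt{M}\sum_i L_i\tri*{\beps_t^{(i)}, g_i(z_t)}$. I would then split the supremum across blocks via $\sup_g\sum_i L_i x_i(g)\le \sum_i L_i\sup_g x_i(g)$ (valid since each $L_i\ge 0$) and push the expectation through by linearity; the $i$th resulting term is exactly $L_i$ times the vector-valued Rademacher complexity $\En_\beps\sup_g\sum_t\tri*{\beps_t,g_i(z_t)}$ of the $i$th block (the Rademacher variables of the other blocks being irrelevant to that term). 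Collecting factors produces the claimed bound $\sqrt{2M}\sum_{i=1}^{M}L_i\,\En_\beps\sup_g\sum_t\tri*{\beps_t,g_i(z_t)}$. Everything after the reduction step is routine linear-algebraic bookkeeping.
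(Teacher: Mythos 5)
Your proof is correct and follows essentially the same route as the paper, whose proof is the one-line remark that the lemma is an immediate consequence of \pref{lem:contraction_vector_weighted} together with sub-additivity of the supremum; your choice of the block-diagonal matrix $A$ with blocks $ML_i^2 I_{d_i}$ and the Cauchy--Schwarz step is exactly the right way to fill in those details and lands on the stated constant $\sqrt{2M}$. Your reading of the left-hand side as carrying the Rademacher signs $\eps_t$ (omitted in the statement as a typo) is also the correct one, consistent with how the lemma is invoked elsewhere in the paper.
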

\begin{proof}
Immediate consequence of \pref{lem:contraction_vector_weighted}, along with sub-additivity of the supremum.
\end{proof}

\subsection{Bound for Vector-Valued Random Variables}

\begin{definition}
\label{def:smooth}
  For any vector space $\cV$, a convex function $\Psi:\cV\to\bbR$ is $\beta$-smooth with respect to a norm $\nrm*{\cdot}$ if
  \[
    \Psi(x) \leq{} \Psi(y) + \tri*{\grad\Psi(y), x-y} + \frac{\beta}{2}\nrm*{x-y}^{2}\quad\forall{}x,y\in\cV.
  \]
A norm $\nrm{\cdot}$ is said to be $\beta$-smooth if the function $\Psi(x)=\frac{1}{2}\nrm*{x}^{2}$ is $\beta$-smooth with respect to $\nrm*{\cdot}$.
\end{definition}

\begin{theorem}
\label{thm:smooth_type}
Let $\nrm*{\cdot}$ be any norm for which there exists $\Psi$ such that $\Psi(x)\geq\frac{1}{2}\nrm*{x}^{2}$, $\Psi(0)=0$, and $\Psi$ is $\beta$-smooth with respect to $\nrm*{\cdot}$. Then
  \[
  \En_{\eps}\nrm*{\sum_{t=1}^{n}\eps_{t}x_{t}} \leq{}\sqrt{\beta\sum_{t=1}^{n}\nrm*{x_t}^{2}}.
  \]
\end{theorem}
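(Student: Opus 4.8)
The plan is to run a telescoping (martingale-style) argument on the partial sums, exploiting $\beta$-smoothness of $\Psi$ to control second moments and the symmetry of the Rademacher variables to annihilate the first-order terms. Write $S_0 = 0$ and $S_k = \sum_{t=1}^{k}\eps_t x_t$ for the partial sums, so that $S_k = S_{k-1} + \eps_k x_k$ and $S_n$ is the quantity of interest. The goal is to bound $\En\brk*{\Psi(S_n)}$ and then convert this back to a bound on $\En\nrm*{S_n}$ via the two defining properties $\Psi(x)\geq\tfrac12\nrm*{x}^2$ and $\Psi(0)=0$.

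First I would apply the $\beta$-smoothness inequality from \pref{def:smooth} with $x = S_k$ and $y = S_{k-1}$, giving
\[
\Psi(S_k) \leq{} \Psi(S_{k-1}) + \tri*{\grad\Psi(S_{k-1}),\, \eps_k x_k} + \frac{\beta}{2}\nrm*{\eps_k x_k}^{2}.
\]
The crucial observation is that $S_{k-1}$ depends only on $\eps_1,\ldots,\eps_{k-1}$, so conditioning on these and taking expectation over $\eps_k$ (which is mean-zero and symmetric) kills the linear term: $\En_{\eps_k}\brk*{\tri*{\grad\Psi(S_{k-1}),\eps_k x_k}} = \tri*{\grad\Psi(S_{k-1}),\En\brk*{\eps_k}\, x_k} = 0$. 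Since $\nrm*{\eps_k x_k} = \nrm*{x_k}$, taking the full expectation yields the one-step recursion $\En\brk*{\Psi(S_k)} \leq{} \En\brk*{\Psi(S_{k-1})} + \frac{\beta}{2}\nrm*{x_k}^{2}$. Iterating from $k=n$ down to $k=1$ and using $\Psi(S_0)=\Psi(0)=0$ gives $\En\brk*{\Psi(S_n)} \leq{} \frac{\beta}{2}\sum_{t=1}^{n}\nrm*{x_t}^{2}$.

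To finish, I would invoke the lower bound $\Psi(x)\geq\tfrac12\nrm*{x}^2$ to get $\tfrac12\En\brk*{\nrm*{S_n}^{2}} \leq{} \En\brk*{\Psi(S_n)} \leq{} \frac{\beta}{2}\sum_{t=1}^{n}\nrm*{x_t}^{2}$, hence $\En\brk*{\nrm*{S_n}^{2}}\leq{}\beta\sum_{t=1}^{n}\nrm*{x_t}^{2}$, and then apply Jensen's inequality $\En\nrm*{S_n}\leq{}\sqrt{\En\brk*{\nrm*{S_n}^{2}}}$ to reach the claimed bound. The argument is essentially routine once the smoothness step is set up; the only place requiring care is the conditional-expectation step, where one must be explicit that $S_{k-1}$ is independent of $\eps_k$ so that the inner product against $\grad\Psi(S_{k-1})$ genuinely vanishes in expectation rather than merely being bounded. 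No single step is a serious obstacle, so I would not expect any real difficulty beyond book-keeping the conditioning correctly.
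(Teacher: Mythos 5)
Your proof is correct and follows essentially the same argument as the paper's: both exploit the $\beta$-smoothness inequality along the partial sums, use the mean-zero Rademacher variables to annihilate the linear term, telescope down to $\Psi(0)=0$, and convert back via $\Psi(x)\geq\tfrac12\nrm*{x}^2$ and Jensen. The only cosmetic difference is that the paper applies Jensen's inequality first and telescopes inside the square root, whereas you telescope first and apply Jensen at the end; the bounds obtained are identical.
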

The reader may consult \cite{Pinelis94} for a high-probability version of this theorem.
    \begin{fact}
      \label{fact:smooth}
The following spaces and norms satisfy the preconditions of \pref{thm:smooth_type}: 
\begin{itemize}
\item $(\bbR^{d},\ls_{p})$ for $p\geq{}2$, with $\beta=p-1$ \citep{kakade09complexity}.
\item $(\bbR^{d},\ls_{\infty})$, with $\beta=O(\log{}d)$ \citep{kakade09complexity}.
\item $(\bbR^{d_1\times{}d_2},\nrm*{\cdot}_{\sigma})$, with $\beta=O(\log(d_1 + d_2))$ \citep{kakade2012regularization}.
\end{itemize}
\end{fact}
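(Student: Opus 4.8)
The plan is to exhibit, in each of the three cases, an explicit potential $\Psi$ with $\Psi(x)\geq\frac12\nrm*{x}^2$, $\Psi(0)=0$, and the stated $\beta$-smoothness, by reducing all three bullets to a single core fact together with elementary norm-equivalence. The core fact is that for $p\geq 2$ the squared $\ls_p$ norm $\frac12\nrm*{\cdot}_p^2$ is $(p-1)$-smooth with respect to $\nrm*{\cdot}_p$ (and its Schatten analogue). For the first bullet I would simply take $\Psi(x)=\frac12\nrm*{x}_p^2$, so that the lower bound and $\Psi(0)=0$ hold with equality and only smoothness is at issue. I would establish smoothness through the standard duality between smoothness and strong convexity: a convex $f$ is $\beta$-smooth with respect to $\nrm*{\cdot}$ exactly when its Fenchel conjugate $f^\star$ is $\beta^{-1}$-strongly convex with respect to the dual norm $\nrm*{\cdot}_\star$. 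Since $(\frac12\nrm*{\cdot}_p^2)^\star=\frac12\nrm*{\cdot}_q^2$ for the conjugate exponent $q=p/(p-1)\in(1,2]$, and since $1/(p-1)=q-1$, the desired $(p-1)$-smoothness of $\frac12\nrm*{\cdot}_p^2$ is equivalent to $(q-1)$-strong convexity of $\frac12\nrm*{\cdot}_q^2$ with respect to $\ls_q$. This last statement is the classical uniform-convexity inequality for $\ls_q$ with $1<q\leq 2$ (of Clarkson/Hanner type), which I would either cite or derive from the scalar two-point inequality summed over coordinates.

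For the $\ls_\infty$ bullet I would not use $\frac12\nrm*{\cdot}_\infty^2$, which is not smooth, but instead borrow the $\ls_p$ potential at a logarithmic exponent: set $\Psi(x)=\frac12\nrm*{x}_p^2$ with $p=2\log d$. Because $\nrm*{x}_\infty\leq\nrm*{x}_p$, the lower bound $\Psi(x)\geq\frac12\nrm*{x}_\infty^2$ is immediate, and $\Psi(0)=0$ is clear. To transfer smoothness from $\ls_p$ to $\ls_\infty$ I would use the reverse inequality $\nrm*{u}_p\leq d^{1/p}\nrm*{u}_\infty$, which inflates the $(p-1)$-smoothness constant by a factor $d^{2/p}$; the choice $p=2\log d$ gives $d^{2/p}=e$, so $\Psi$ is $e(p-1)=O(\log d)$-smooth with respect to $\ls_\infty$ (small $d$ being absorbed into the constant). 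The spectral-norm bullet is the exact matrix counterpart: writing $\nrm*{X}_\sigma$ and the Schatten norm $\nrm*{X}_{S_p}$ as the $\ls_\infty$ and $\ls_p$ norms of the singular-value vector, whose length is $r\leq d_1+d_2$, I would take $\Psi(X)=\frac12\nrm*{X}_{S_p}^2$ at $p=2\log(d_1+d_2)$ and run the identical reduction, yielding $\beta=O(\log(d_1+d_2))$.

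The main obstacle is the base smoothness fact at the matrix level. In the scalar case the underlying uniform-convexity inequality for $\ls_q$ is elementary and reduces to a coordinatewise two-point argument. Its noncommutative analogue---that $\frac12\nrm*{\cdot}_{S_p}^2$ is $(p-1)$-smooth with respect to the Schatten-$p$ norm with the \emph{same} constant---is considerably deeper, since it must handle non-commuting singular subspaces and cannot be reduced to the eigenvalues alone. Accordingly, I would invoke the Ball--Carlen--Lieb uniform smoothness inequality for Schatten classes rather than reprove it. Given that input, the Fenchel-duality reduction and the $d^{2/p}$ norm-equivalence step are routine, and together they furnish all three bullets with exactly the stated parameters $\beta=p-1$, $O(\log d)$, and $O(\log(d_1+d_2))$.
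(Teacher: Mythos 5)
Your proposal is correct, and it reconstructs essentially the argument contained in the references the paper cites for this fact (the paper itself offers no in-text proof, delegating entirely to \cite{kakade09complexity} and \cite{kakade2012regularization}). The one genuine difference is at the base case: the cited note establishes the $(p-1)$-smoothness of $\Psi(x)=\frac{1}{2}\nrm*{x}_{p}^{2}$ by a direct second-order computation (bounding $\tri*{h,\grad^{2}\Psi(x)h}\leq{}(p-1)\nrm*{h}_{p}^{2}$ via two applications of \Holder{}), whereas you route through Fenchel duality, trading it for $(q-1)$-strong convexity of $\frac{1}{2}\nrm*{\cdot}_{q}^{2}$ with $q-1=1/(p-1)$; both are standard and give the same constant. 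Your $\ls_{\infty}$ and spectral bullets---choosing $p=2\log{}d$ (resp. $p=2\log(d_{1}+d_{2})$) and using $\nrm*{x}_{\infty}\leq{}\nrm*{x}_{p}\leq{}d^{1/p}\nrm*{x}_{\infty}$ so that the smoothness constant inflates only by $d^{2/p}=e$---are exactly the constructions in the cited works, with Ball--Carlen--Lieb correctly invoked for the Schatten base case, which, as you note, cannot be reduced to a coordinatewise argument. One caveat: your parenthetical suggestion that the $(q-1)$-strong convexity of $\frac{1}{2}\nrm*{\cdot}_{q}^{2}$ could be derived ``from the scalar two-point inequality summed over coordinates'' is too quick even in the commutative case---summing coordinatewise inequalities controls $\nrm*{\cdot}_{q}^{q}$, not the squared norm, and passing from $q$-th-power uniform convexity to strong convexity of the square requires a further argument (this extra step is precisely what the Ball--Carlen--Lieb inequality, or the Hessian computation in the cited note, supplies). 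Since you offer the citation as the primary route, the proof stands as proposed.
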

\begin{proof}[\pfref{thm:smooth_type}]
Using Jensen's inequality and the upper bound property of $\Psi$ we have
\begin{align*}
\En_{\eps}\nrm*{\sum_{t=1}^{n}\eps_{t}x_{t}} &\leq{}\sqrt{\En_{\eps}\nrm*{\sum_{t=1}^{n}\eps_{t}x_{t}}^{2}}
\leq{}\sqrt{2}\cdot{}\sqrt{\En_{\eps}\Psi\prn*{\sum_{t=1}^{n}\eps_{t}x_{t}}} .
\end{align*}
Applying the smoothness property at time $n$, and using that $\eps_n$ is independent of $\eps_1,\ldots,\eps_{n-1}$:
\begin{align*}
\sqrt{\En_{\eps}\Psi\prn*{\sum_{t=1}^{n}\eps_{t}x_{t}}} 
&\leq{}\sqrt{\En_{\eps}\brk*{\Psi\prn*{\sum_{t=1}^{n-1}\eps_{t}x_{t}}
+ \tri*{\Psi\prn*{\sum_{t=1}^{n-1}\eps_{t}x_{t}}, \eps_n{}x_n}
+ \frac{\beta}{2}\nrm*{x_n}^{2}
}}
&=\sqrt{\En_{\eps}\Psi\prn*{\sum_{t=1}^{n-1}\eps_{t}x_{t}}
+ \frac{\beta}{2}\nrm*{x_n}^{2}
}.
\end{align*}
The result follows by repeating this argument from time $t=n-1$ to $t=1$.
\end{proof}

\section{Proofs from \pref{sec:tools}}
\label{app:tools}

\begin{theorem}[\cite{bartlett2005local}, Theorem A.2/Lemma A.5]
\label{thm:uniform_convergence_general}
Let $\cF\subseteq{}(\cZ\to\bbR)$ be a class of functions. Let $Z_{1},\ldots,Z_{n}\sim{}\cD$ i.i.d. for some distribution $\cD$. Then with probability at least $1-\delta$ over the draw of $Z_{1:n}$,
\begin{equation}
\label{eq:uniform_convergence_general}
\En\sup_{f\in\cF}\abs*{\En_{Z}f(Z)-\frac{1}{n}\sum_{t=1}^{n}f(Z_t)} \leq{} 4\En_{\eps}\sup_{f\in\cF}\frac{1}{n}\sum_{t=1}^{n}\eps_tf(Z_t) + 4\sup_{f\in\cF}\sup_{z\in\cZ}\abs*{f(Z)}\cdot\frac{\log\prn*{\frac{2}{\delta}}}{n}.
\end{equation}
\end{theorem}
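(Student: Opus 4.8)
The statement bounds a \emph{deterministic} quantity---the expected uniform deviation $\En\sup_{f\in\cF}\abs*{Pf - P_n f}$, where I abbreviate $Pf = \En_Z f(Z)$ and $P_n f = \frac1n\sum_{t=1}^n f(Z_t)$---by the \emph{random} empirical Rademacher complexity $\cR(Z_{1:n}) := \En_\eps\sup_f \frac1n\sum_t \eps_t f(Z_t)$ plus a lower-order remainder, the probability over the draw of $Z_{1:n}$ entering only through the right-hand side. The plan is the classical two-step route: symmetrization to pass from the expected deviation to the \emph{expected} Rademacher complexity, then a concentration inequality to replace the expected Rademacher complexity by its empirical counterpart.

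First I would symmetrize. Introducing an independent ghost sample and a vector of Rademacher signs $\eps_1,\dots,\eps_n$, the standard argument (Jensen over the ghost sample, then inserting signs) gives $\En\sup_f\abs*{Pf - P_n f} \le 2\,\En_Z \cR(Z_{1:n})$, i.e. twice the expected Rademacher complexity; here the range bound $b := \sup_{f}\sup_z\abs*{f(z)}$ guarantees all quantities are finite.

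Second I would concentrate $\cR(Z_{1:n})$ around its mean. Perturbing a single coordinate $Z_t$ changes $\cR$ by at most $2b/n$, and---more importantly---$\cR$ is nonnegative with variance governed by its own mean (a self-bounding structure). Consequently Bousquet's form of Talagrand's inequality yields $\En_Z \cR \le \cR + \sqrt{c\, b\,\En_Z \cR\cdot \tfrac{\log(1/\delta)}{n}} + c'\,\tfrac{b\log(1/\delta)}{n}$ with probability $1-\delta$; an AM--GM split of the middle term, $\sqrt{ab}\le \tfrac12 a + \tfrac12 b$, absorbs it into $\En_Z \cR$ and leaves $\En_Z \cR \le 2\cR + O\prn*{b\,\tfrac{\log(1/\delta)}{n}}$. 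Chaining this with the symmetrization bound produces the claimed inequality, the overall factor $4$ arising as the factor $2$ of symmetrization times the factor $2$ of this desymmetrization step.

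The crux---and the reason the remainder is $\tfrac{\log(2/\delta)}{n}$ rather than the $\sqrt{\tfrac{\log(1/\delta)}{n}}$ that a crude bounded-differences (McDiarmid) argument would produce---is precisely this second step: one must exploit that the empirical Rademacher complexity is a self-bounding functional, so its fluctuation is controlled by its mean rather than by the worst-case range $b$. Verifying the self-bounding/variance property and carrying out the absorption cleanly is the main technical obstacle; once the sharper Talagrand--Bousquet concentration is invoked in place of McDiarmid, the rest is routine symmetrization bookkeeping.
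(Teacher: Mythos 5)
Your outline is correct, and it matches the intended argument: the paper does not prove this statement itself but imports it from \cite{bartlett2005local}, where the proof is exactly the two steps you describe---symmetrization to pass to the expected Rademacher complexity, followed by concentration of the empirical Rademacher complexity around its mean via the self-bounding property (Bousquet/Talagrand rather than McDiarmid), with the factor $4$ arising as $2\times 2$ and the $O\prn*{\log(1/\delta)/n}$ remainder coming from the AM--GM absorption you indicate. You have also correctly identified the one genuinely non-routine point, namely that a bounded-differences argument would only give a $\sqrt{\log(1/\delta)/n}$ remainder, so nothing is missing beyond the standard bookkeeping.
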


\begin{lemma}[Uniform convergence for vector-valued functions]
  \label{lem:uniform_convergence}
Let $\cG\subseteq{}\crl*{g:\cZ\to\Bspace}$ for arbitrary set $\cZ$ and vector space $\Bspace$. Let $Z_{1},\ldots,Z_{n}\sim{}\cD$ i.i.d. for some distribution $\cD$. Let a norm $\nrm*{\cdot}$ over $\Bspace$ be fixed. Then with probability at least $1-\delta$ over the draw of $Z_{1:n}$,
\begin{equation}
  \label{eq:uniform_convergence}
  \En\sup_{g\in\cG}\nrm*{\En_{Z}g(Z)-\frac{1}{n}\sum_{t=1}^{n}g(Z_t)} \leq{} 4\En_{\eps}\sup_{g\in\cG}\nrm*{\frac{1}{n}\sum_{t=1}^{n}\eps_tg(Z_t)} + 4\sup_{g\in\cG}\sup_{Z\in\cZ}\nrm*{g(Z)}\cdot\frac{\log\prn*{\frac{2}{\delta}}}{n}
\end{equation}
for some absolute constant $c>0$.
\end{lemma}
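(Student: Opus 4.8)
The plan is to reduce the vector-valued statement to the scalar-valued uniform convergence bound of \pref{thm:uniform_convergence_general} by dualizing the norm. For any $v\in\Bspace$ one has the dual characterization $\nrm*{v}=\sup_{\nrm*{u}_{\star}\leq{}1}\tri*{u,v}$, and likewise $\sup_{\nrm*{u}_{\star}\leq1}\abs*{\tri*{u,v}}=\nrm*{v}$. Accordingly I would introduce the \emph{scalar}-valued function class indexed by pairs $(g,u)$ with $u$ in the dual unit ball,
\[
\tcF \ldef \crl*{z\mapsto\tri*{u,g(z)} \;:\; g\in\cG,\ \nrm*{u}_{\star}\leq{}1}.
\]
For each such pair the scalar deviation is $\En_{Z}\tri*{u,g(Z)}-\tfrac1n\sum_t\tri*{u,g(Z_t)}=\tri*{u,\,\En_{Z}g(Z)-\tfrac1n\sum_t g(Z_t)}$, so taking the supremum over $\tcF$ of the absolute deviation and applying the dual characterization inside (first over $u$, then over $g$) recovers exactly $\sup_{g\in\cG}\nrm*{\En_{Z}g(Z)-\tfrac1n\sum_t g(Z_t)}$, the left-hand side of the claim.

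Next I would apply \pref{thm:uniform_convergence_general} to $\tcF$ and translate its two right-hand-side terms back into the vector-valued quantities. For the Rademacher term, pulling the supremum over $u$ through the inner product gives
\[
\En_{\eps}\sup_{(g,u)}\tfrac1n\sum_t\eps_t\tri*{u,g(Z_t)}
= \En_{\eps}\sup_{g\in\cG}\sup_{\nrm*{u}_{\star}\leq1}\tri*{u,\tfrac1n\sum_t\eps_t g(Z_t)}
= \En_{\eps}\sup_{g\in\cG}\nrm*{\tfrac1n\sum_t\eps_t g(Z_t)},
\]
which is precisely the first term on the right of \pref{eq:uniform_convergence}. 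For the range term, \Holder{}'s inequality for dual norms yields $\abs*{\tri*{u,g(z)}}\leq\nrm*{u}_{\star}\nrm*{g(z)}\leq\nrm*{g(z)}$, so $\sup_{(g,u)}\sup_{z}\abs*{\tri*{u,g(z)}}\leq\sup_{g\in\cG}\sup_{z\in\cZ}\nrm*{g(z)}$, matching the second term. Substituting these three identities into the conclusion of \pref{thm:uniform_convergence_general} gives the stated bound verbatim.

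I expect the only delicate point to be technical rather than conceptual: the auxiliary class $\tcF$ is indexed by the (generally uncountable, and in the infinite-dimensional case infinite-dimensional) dual unit ball, so one must check that introducing the inner supremum over $u$ does not violate the measurability conventions implicit in \pref{thm:uniform_convergence_general}. This is handled in the standard way---the outer supremum over $g$ is already present in the vector-valued statement, and the inner dual supremum may be restricted to a countable norming subset of the dual ball (or justified via the usual pointwise-measurability conventions of the \cite{bartlett2005local} framework)---so no new hypotheses are needed. I anticipate this measurability bookkeeping, rather than any genuine inequality, to be the main thing requiring care.
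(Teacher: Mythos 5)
Your proposal is exactly the paper's proof: the paper reduces the lemma to \pref{thm:uniform_convergence_general} by applying it to the expanded scalar class $\crl*{Z\mapsto\tri*{g(Z),v}\mid g\in\cG,\ \nrm*{v}_{\star}\leq 1}$, which is precisely your $\tcF$, and the translation of the Rademacher and range terms via the dual characterization of the norm is the same. Your additional remarks on measurability are a reasonable elaboration but not a departure from the paper's argument.
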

\begin{proof}[\pfref{lem:uniform_convergence}]
This follows immediately by applying \pref{thm:uniform_convergence_general} to the expanded function class $\cF\ldef\crl*{Z\mapsto{}\tri*{g(Z),v}\mid{}g\in\cG, \nrm*{v}_{\star}\leq{}1}$.
\end{proof}

\begin{proof}[\pfref{prop:kl_unif}]
This is a direct consequence of McDiarmid's inequality. Consider any vector-valued function class of functions $\cG$. Let $Z_{1},\ldots,Z_{n}\sim{}\cD$ i.i.d. for some distribution $\cD$. Then McDiarmid's inequality implies that with probability at least $1-\delta$ over the draw of $Z_{1:n}$, 
\begin{equation}
\label{eq:mcdiarmid}
  \sup_{g\in\cG}\nrm*{\En_{Z}g(Z)-\frac{1}{n}\sum_{t=1}^{n}g(Z_t)} \leq{} \En\sup_{g\in\cG}\nrm*{\En_{Z}g(Z)-\frac{1}{n}\sum_{t=1}^{n}g(Z_t)} + c\cdot\sup_{g\in\cG}\sup_{Z\in\cZ}\nrm*{g(Z)}\cdot\sqrt{\frac{\log\prn*{\frac{2}{\delta}}}{n}}.
\end{equation}
\end{proof}

\begin{proof}[\pfref{prop:kl_rademacher}]
This follows by applying the uniform convergence lemma, \pref{lem:uniform_convergence}, to the class $\cG = \crl*{(x,y) \mapsto{}\grad{}\ls(w\midsem{}x,y)\mid{}w\in\cW}$.
\end{proof}

\begin{proof}[\pfref{thm:chain_rule}]
We write
\[
\En_{\eps}\sup_{w\in\cW}\nrm*{\sum_{t=1}^{n}\eps_{t}\grad{}(G_t(F_t(w)))}
= \En_{\eps}\sup_{w\in\cW}\sup_{v\in\Bspace^{\star}:\nrm*{v}_{\star}\leq{}1}
\sum_{t=1}^{n}\eps_{t}\tri*{\grad{}(G_t(F_t(w))),v},
\]
Using the chain rule for differentiation we have

\[
\tri*{\grad{}(G_t(F_t(w))),v} = \tri*{(\grad{}G_t)(F_t(w)), (\tri*{\grad{}F_{t,k}(w), v})_{k\in\brk*{K}}}.
\]
We now introduce new functions that relabel the quantities in this expression. Let $h:\bbR^{2K}\to\bbR$ be given by $h(a,b) = \tri*{a,b}$, let $f_1:\cW\to\bbR^{K}$ be given by $f_{1}(w) = (\grad{}G_t)(F_t(w))$ and $f_{2}$ be given by $f_2(w,v) = (\tri*{\grad{}F_{t,k}(w), v})_{k\in\brk*{K}}$. We apply the block-wise contraction lemma \pref{lem:block_contraction} with one block for $f_1$ and one block for $f_2$ to conclude
\begin{align*}
&\En_{\eps}\sup_{w\in\cW}\sup_{v\in\Bspace^{\star}:\nrm*{v}_{\star}\leq{}1}
\sum_{t=1}^{n}\eps_{t}h(f_1(w),f_2(w,v)) \\
&\leq{} 2L_{F}\En_{\eps}\sup_{w\in\cW}\sup_{v\in\Bspace^{\star}:\nrm*{v}_{\star}\leq{}1}
\sum_{t=1}^{n}\tri*{\beps_t,f_1(w)}
+ 2L_{G}\En_{\eps}\sup_{w\in\cW}\sup_{v\in\Bspace^{\star}:\nrm*{v}_{\star}\leq{}1}
\sum_{t=1}^{n}\tri*{\beps_t,f_2(w,v)},
\end{align*}
which establishes the result after expanding terms. All that must be verified is that the assumptions on the norm bounds for $\grad{}G_t$ and $\grad{}F_t$ in the theorem statement ensure the the Lipschitz requirement in the statement of \pref{lem:block_contraction} is met.
\end{proof}


\section{Proofs from \pref{sec:smooth}}
\label{app:smooth}

For all proofs in this section we adopt the notation $s\ldef\nrm*{w^{\star}}_{0}$, and use $c>0$ to denote an absolute constant whose precise value depends on context.

\subsection{Generalized Linear Models}
\begin{proof}[\pfref{thm:glm_main}]
To begin, we apply \pref{prop:kl_unif} and \pref{prop:kl_rademacher} to conclude that whenever $(\alpha,\mu)$-PL holds, with probability at least $1-\delta$ over the examples $\crl*{(x_t,y_t)}_{t=1}^{n}$, any learning algorithm $\walg\in\cW$ satisfies
\begin{equation}
\label{eq:glm_uniform}
\poprisk(\walg) - \optrisk \leq{} c\cdot{}\mu\prn*{\nrm*{\grad\emprisk(\walg)}^{\alpha} + \prn*{\frac{\rad_{\nrm*{\cdot}}(\grad{}\ls\circ\cW\midsem{}\xr[n],\yr[n])}{n} + 2C_{\sigma}R\sqrt{\frac{\log(1/\delta)}{n}}}^{\alpha}}.
\end{equation}
Here $c>0$ is an absolute constant and we have used that $\nrm*{\grad{}\ls(w\midsem{}x_t,y_t)}\leq{}2C_{\sigma}R$.

\paragraph{Smooth high-dimensional setup} For the general smooth norm pair setup in \pref{eq:glm_uniform}, \pref{lem:glm_kl} and \pref{lem:glm_gradient} imply

\begin{align*}
\poprisk(\walg) - \optrisk &\leq{} c\cdot{}\frac{BC_{\sigma}}{c_{\sigma}}\prn*{\nrm*{\grad\emprisk(\walg)} + \prn*{BR^{2}C_{\sigma}^{2}\sqrt{\frac{\beta{}}{n}} + 2C_{\sigma}R\sqrt{\frac{\log(1/\delta)}{n}}}} \\
&=  \muhigh\cdot\nrm*{\grad\emprisk(\walg)} + \frac{\Chigh}{\sqrt{n}}.
\end{align*}
where we recall $\Chigh=c\cdot\frac{B^{2}R^{2}C_{\sigma}^{3}\sqrt{\beta}+2C_{\sigma}^{2}BR\sqrt{\log(1/\delta)}}{c_{\sigma}}$ and $\muhigh=c\cdot{}\frac{BC_{\sigma}}{c_{\sigma}}$.

\paragraph{Low-dimensional $\ls_2/\ls_2$ setup} For the low-dimension $\ls_2/\ls_2$ pair setup in \pref{eq:glm_uniform},  \pref{lem:glm_kl} and \pref{lem:glm_gradient} imply

\begin{align*}
\poprisk(\walg) - \optrisk &\leq{} c\cdot{}\frac{C_{\sigma}}{4c_{\sigma}^{3}\eigmin\prn*{\Sigma}}\prn*{\nrm*{\grad\emprisk(\walg)}^{2} + \prn*{BR^{2}C_{\sigma}^{2}\sqrt{\frac{1}{n}} + 2C_{\sigma}R\sqrt{\frac{\log(1/\delta)}{n}}}^2} \\
&=  \frac{\mulow}{\eigmin(\Sigma)}\cdot\nrm*{\grad\emprisk(\walg)}^{2} + \frac{\Clow}{n\cdot\eigmin(\Sigma)},
\end{align*}
where we have used that the $\ls_2$ norm is $1$-smooth in \pref{lem:glm_gradient}. Recall that
$\Clow = c\cdot{}\frac{2C_{\sigma}^{5}R^{4}B^{2} + 8C_{\sigma}^{3}R^{2}\log(1/\delta)}{4c_{\sigma}^{3}}$ and $\mulow=c\cdot{}\frac{C_{\sigma}}{4c_{\sigma}^{3}}$.

\paragraph{Sparse $\ls_{\infty}/\ls_1$ setup} For the sparse $\ls_{\infty}/\ls_1$ pair setup in \pref{eq:glm_uniform},  \pref{lem:glm_kl} and \pref{lem:glm_gradient} imply
\begin{align*}
\poprisk(\walg) - \optrisk &\leq{} c\cdot{}\frac{C_{\sigma}s}{c_{\sigma}^{3}\eigminr\prn*{\Sigma}}\prn*{\nrm*{\grad\emprisk(\walg)}^{2} + \prn*{BR^{2}C_{\sigma}^{2}\sqrt{\frac{\log{}d}{n}} + 2C_{\sigma}R\sqrt{\frac{\log(1/\delta)}{n}}}^2} \\
&=  \frac{\musparse\cdot{}s}{\eigminr(\Sigma)}\cdot\nrm*{\grad\emprisk(\walg)}^{2} + \frac{s}{n}\cdot\frac{\Csparse}{\eigminr(\Sigma)},
\end{align*}
where we have used that the $\ls_{\infty}$ norm has the smoothness property with $\beta=O(\log(d))$ in \pref{lem:glm_gradient}. Recall that
$\Csparse = c\cdot{}\frac{2C_{\sigma}^{5}R^{4}B^{2}\log(d) + 8C_{\sigma}^{3}R^{2}\log(1/\delta)}{c_{\sigma}^{3}}$ and $\musparse=c\cdot{}\frac{C_{\sigma}}{c_{\sigma}^{3}}$.

\end{proof}
\begin{lemma}[\klshort{} condition for the GLM]
\label{lem:glm_kl}
Consider the generalized linear model setup of \pref{sec:glm}.
\begin{itemize}[leftmargin=*]
\item When $\nrm*{\cdot}/\nrm*{\cdot}_{\star}$ are any dual norm pair, we have $\prn*{1, \frac{BC_{\sigma}}{c_{\sigma}}}$-\klshort{}:
\begin{equation}
\poprisk(w) - \poprisk(w^{\star}) \leq{} \frac{BC_{\sigma}}{c_{\sigma}}\nrm*{\grad{}\poprisk(w)}\quad\forall{}w\in\cW.
\end{equation}
\item In the $\ls_{2}/\ls_{2}$ setup, we have $\prn*{2,\frac{C_{\sigma}}{4c_{\sigma}^{3}\eigmin\prn*{\Sigma}}}$-\klshort{}:
\begin{equation}
\poprisk(w) - \poprisk(w^{\star}) \leq{} \frac{C_{\sigma}}{4c_{\sigma}^{3}\eigmin\prn*{\Sigma}}\nrm*{\grad{}\poprisk(w)}_{2}^{2}\quad\forall{}w\in\cW.
\end{equation}
\item In the sparse $\ls_{\infty}/\ls_{1}$ setup, where $\nrm*{w^{\star}}_{0}\leq{}s$, we have $\prn*{2,\frac{C_{\sigma}s}{c_{\sigma}^{3}\eigminr\prn*{\Sigma}}}$-\klshort{}:
\begin{equation}
\poprisk(w) - \poprisk(w^{\star}) \leq{} \frac{C_{\sigma}s}{c_{\sigma}^{3}\eigminr\prn*{\Sigma}}\nrm*{\grad{}\poprisk(w)}_{\infty}^{2}\quad\forall{}w\in\cW.
\end{equation}

\end{itemize}

\end{lemma}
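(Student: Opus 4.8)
The plan is to reduce all three inequalities to a single gradient/excess-risk comparison and then specialize the norm estimates to each setting. First I would use well-specification, \pref{ass:glm}(c), to obtain closed forms for both quantities. Writing $\sigma_w \ldef \sigma(\tri*{w,x})$ and $\sigma_\star \ldef \sigma(\tri*{w^\star,x})$, expanding the square $(\sigma_w - y)^2 - (\sigma_\star - y)^2$ and taking the conditional expectation of $y$ given $x$ (which equals $\sigma_\star$) cancels the cross terms and yields $\poprisk(w) - \optrisk = \En_x[(\sigma_w - \sigma_\star)^2]$; the same substitution in the gradient of the square loss gives $\grad\poprisk(w) = 2\En_x[(\sigma_w - \sigma_\star)\sigma'(\tri*{w,x})\,x]$.

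Next, setting $\Delta \ldef w - w^\star$ and applying the mean value theorem to the scalar link $\sigma$, I would write $\sigma_w - \sigma_\star = \sigma'(\xi_x)\tri*{\Delta,x}$ for some $\xi_x$ between $\tri*{w,x}$ and $\tri*{w^\star,x}$; since both endpoints lie in $\cS = [-BR,BR]$, parts (a)–(b) give $\sigma'(\xi_x) \in [c_\sigma, C_\sigma]$. This lets me represent the gradient as $\grad\poprisk(w) = 2M\Delta$ with $M = \En_x[\sigma'(\xi_x)\sigma'(\tri*{w,x})\,xx^\top]$, and the excess risk as $\poprisk(w)-\optrisk = \Delta^\top \tilde{M}\Delta$ with $\tilde{M} = \En_x[\sigma'(\xi_x)^2\,xx^\top]$, both symmetric and PSD. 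The crucial step — the one that produces the exact constants rather than a lossier $C_\sigma^2/c_\sigma^4$ — is a \emph{pointwise} PSD comparison: because $\sigma'(\xi_x)\le C_\sigma$ while $\sigma'(\tri*{w,x})\ge c_\sigma$, the scalar weight in $\tilde M$ is at most $(C_\sigma/c_\sigma)$ times the weight in $M$ on each rank-one $xx^\top$, so $\tilde{M} \preceq \tfrac{C_\sigma}{c_\sigma}M$. Since $\tri*{\grad\poprisk(w),\Delta} = 2\Delta^\top M\Delta$, this gives the master inequality $\poprisk(w) - \optrisk \le \tfrac{C_\sigma}{c_\sigma}\Delta^\top M\Delta = \tfrac{C_\sigma}{2c_\sigma}\tri*{\grad\poprisk(w),\Delta}$.

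Finally I would specialize. For the slow rate, Hölder's inequality together with $\nrm*{\Delta}_\star \le 2B$ gives $\tri*{\grad\poprisk(w),\Delta} \le 2B\nrm*{\grad\poprisk(w)}$, which is exactly the $(1, BC_\sigma/c_\sigma)$ bound. For the $\ls_2$ fast rate I would instead bound $\Delta^\top M\Delta = \tri*{\Delta, M\Delta} \le \nrm*{\Delta}_2\nrm*{M\Delta}_2$ by Cauchy–Schwarz, combine with the lower bound $\Delta^\top M\Delta \ge c_\sigma^2\,\Delta^\top\Sigma\Delta \ge c_\sigma^2\,\eigmin(\Sigma)\nrm*{\Delta}_2^2$ — projecting $\Delta$ onto the column space of $\Sigma$ so the minimum \emph{non-zero} eigenvalue applies, which is legitimate because all of $\poprisk$, $\grad\poprisk$, and $M\Delta$ depend on $\Delta$ only through that projection — and solve the resulting quadratic inequality in $\sqrt{\Delta^\top M\Delta}$ to get $\Delta^\top M\Delta \le \tfrac{1}{4c_\sigma^2\eigmin(\Sigma)}\nrm*{\grad\poprisk(w)}_2^2$; multiplying by $C_\sigma/c_\sigma$ recovers the stated constant. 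The sparse case is identical after two extra observations: the constraint $\nrm*{w^\star}_1 = B$ forces $\Delta \in \cC(S(w^\star),1)$ via the standard decomposition of $\nrm*{w^\star+\Delta}_1$ on $S(w^\star)$, so the restricted eigenvalue $\eigminr(\Sigma)$ is valid and $\nrm*{\Delta}_1 \le 2\sqrt{s}\,\nrm*{\Delta}_2$; replacing Cauchy–Schwarz by Hölder, $\tri*{\Delta,M\Delta}\le\nrm*{\Delta}_1\nrm*{M\Delta}_\infty$, then introduces precisely the additional factor of $s$.

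The main obstacle is getting the constants exactly right: the naive route of bounding $\tilde{M}\preceq C_\sigma^2\Sigma$ and $M \succeq c_\sigma^2\Sigma$ separately loses a factor of $C_\sigma/c_\sigma$ and (since squaring is not operator monotone) is also delicate, so the pointwise comparison $\tilde{M}\preceq\tfrac{C_\sigma}{c_\sigma}M$ is what makes the bookkeeping come out cleanly. A secondary technical point is the careful treatment of $\ker\Sigma = \ker M$ in the fast-rate arguments and the verification of cone membership that legitimizes the restricted eigenvalue in the sparse case.
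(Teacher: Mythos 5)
Your proposal is correct and follows essentially the same route as the paper: establish the master inequality $\poprisk(w)-\optrisk\leq\frac{C_\sigma}{2c_\sigma}\tri*{\grad\poprisk(w),w-w^\star}$ from well-specification, then specialize via \Holder{} for the slow rate and via the quadratic lower bound $\tri*{\grad\poprisk(w),\Delta}\geq 2c_\sigma^2\Delta^\trn\Sigma\Delta$ (with the projection onto $\mathrm{span}(\Sigma)$, resp.\ the cone/restricted-eigenvalue argument) for the two fast rates, landing on identical constants. The only difference is cosmetic: you package the pointwise bounds into PSD matrices $M,\tilde M$ via the mean value theorem and compare them, whereas the paper manipulates the scalar weights directly using monotonicity and Lipschitzness of $\sigma$.
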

\begin{proof}[\pfref{lem:glm_kl}]~\\
\textbf{Upper bound for excess risk.} We first prove the following intermediate upper bound:
\begin{equation}
\label{eq:glm_kl_main}
\poprisk(w) - \poprisk(w^{\star}) \leq{} \frac{C_{\sigma}}{2c_{\sigma}}\tri*{\grad{}\poprisk(w),w-w^{\star}}.
\end{equation}

Letting $w\in\cW$ be fixed, we have
\begin{align*}
\tri*{\grad{}\poprisk(w),w-w^{\star}} &= 2\En_{(x,y)}\brk*{
(\sigma(\tri*{w,x}-y)\sigma'(\tri*{w,x})\tri*{w-w^{\star},x}
}. 
\intertext{Using the well-specified assumption:}
&= 2\En_{x}\brk*{
(\sigma(\tri*{w,x}-\sigma(\tri*{w^{\star},x}))\sigma'(\tri*{w,x})\tri*{w-w^{\star},x}
}.
\end{align*}
We now consider the term inside the expectation. Since $\sigma$ is increasing we have
\[
\sigma(\tri*{w,x}-\sigma(\tri*{w^{\star},x}))\sigma'(\tri*{w,x})\tri*{w-w^{\star},x}
= \abs*{\sigma(\tri*{w,x}-\sigma(\tri*{w^{\star},x}))}\cdot\abs*{\tri*{w-w^{\star},x}}\cdot\sigma'(\tri*{w,x})
\]
point-wise. We apply two lower bounds. First, $\sigma'(\tri*{w,x})>c_{\sigma}$ by assumption. Second, Lipschitzness of $\sigma$ implies
\[
\abs*{\sigma(\tri*{w,x}) - \sigma(\tri*{w^{\star},x})}\leq{} C_{\sigma}\abs*{\tri*{w-w^{\star},x}}.
\]
Combining these inequalities, we also obtain the following inequality in expectation over $x$:
\begin{align*}
\En_{x}\prn*{\sigma(\tri*{w,x})-\sigma(\tri*{w^{\star},x})}^{2} &\leq{} \frac{C_{\sigma}}{2c_{\sigma}}\tri*{\grad{}\poprisk(w),w-w^{\star}}.
\end{align*}
Lastly, since the model is well-specified we have
\[
\poprisk(w) - \poprisk(w^{\star}) = \En_{x}\prn*{\sigma(\tri*{w,x}) - \sigma(\tri*{w^{\star},x})}^{2},
\]
by a standard argument:
\begin{align*}
\poprisk(w) - \poprisk(w^{\star})
&= \En_{x,y}\brk*{\sigma^{2}(\tri*{w,x}) + y^{2} - 2\sigma(\tri*{w,x})y - \sigma^{2}(\tri*{w^{\star},x}) - y^{2} + 2\sigma(\tri*{w^{\star},x})y
} \\
&= \En_{x}\brk*{\sigma^{2}(\tri*{w,x}) - 2\sigma(\tri*{w,x})\sigma(\tri*{w^{\star},x})  +\sigma^{2}(\tri*{w^{\star},x})} \\
&= \En_{x}\prn*{\sigma(\tri*{w,x}) - \sigma(\tri*{w^{\star},x})}^{2}.
\end{align*}

\textbf{Proving the \klshort{} conditions.}
With the inequality \pref{eq:glm_kl_main} established the various \klshort{} inequalities follow in quick succession.
\begin{itemize}[leftmargin=*]
\item $\prn*{1, \frac{BC_{\sigma}}{c_{\sigma}}}$-\klshort{}: \\~\\
To prove this inequality, simply user \Holder{}'s inequality to obtain the upper bound,
\[
\tri*{\grad{}\poprisk(w),w-w^{\star}} \leq{} 2B\nrm*{\grad{}\poprisk(w)}.
\]
\item $\prn*{2,\frac{C_{\sigma}}{4c_{\sigma}^{3}\eigmin\prn*{\En\brk*{xx^{\trn}}}}}$-\klshort{}:\\~\\
Resuming from \pref{eq:glm_kl_main} we have
\begin{align*}
\poprisk(w) - \poprisk(w^{\star}) &\leq{} \frac{C_{\sigma}}{2c_{\sigma}}\tri*{\grad{}\poprisk(w),w-w^{\star}}.
\intertext{Let $P_{\cX}$ denote the orthogonal projection onto $\mathrm{span}(\En\brk*{xx^{\trn}})$. Note that $\grad{}\ls(w\midsem{}x,y)$ is parallel to $x$, we can thus introduce the projection matrix $P_{\cX}$ while preserving the inner product}
&= \frac{C_{\sigma}}{2c_{\sigma}}\tri*{P_{\cX}\grad{}\poprisk(w),P_{\cX}\prn*{w-w^{\star}}}
\intertext{Applying Cauchy-Schwarz:}
&\leq{} \frac{C_{\sigma}}{2c_{\sigma}}\nrm*{\grad{}\poprisk(w)}_{2}\cdot\nrm*{P_{\cX}\prn*{w-w^{\star}}}_{2}. \numberthis\label{eq:glm_norm}
\end{align*}
What remains is to relate the gradient norm to the term $\nrm*{P_{\cX}\prn*{w-w^{\star}}}_{2}$. We proceed with another lower bound argument similar to the one used to establish \pref{eq:glm_kl_main}, 
\begin{align*}
\tri*{\grad{}\poprisk(w),w-w^{\star}} &= 2\En_{(x,y)}\brk*{
(\sigma(\tri*{w,x}-y)\sigma'(\tri*{w,x})\tri*{w-w^{\star},x}
}. 
\intertext{Using the well-specified assumption once more:}
&= 2\En_{x}\brk*{
(\sigma(\tri*{w,x}-\sigma(\tri*{w^{\star},x}))\sigma'(\tri*{w,x})\tri*{w-w^{\star},x}
}.
\intertext{Monotonicity of $\sigma$, implies the argument to the expectation is non-negative pointwise, so we have the lower bound,}
&\geq{} 2c_{\sigma}\En_{x}\brk*{
(\sigma(\tri*{w,x}-\sigma(\tri*{w^{\star},x}))\tri*{w-w^{\star},x}
}.
\end{align*}
Consider a particular draw of $x$ and assume $\tri*{w,x}\geq{}\tri*{w^{\star},x}$ without loss of generality. Using the mean value theorem, there is some $s\in\brk*{\tri*{w^{\star},x}, \tri*{w,x}}$ such that 
\[
(\sigma(\tri*{w,x}-\sigma(\tri*{w^{\star},x}))\tri*{w-w^{\star},x} = \tri*{w-w^{\star},x}^{2}\sigma'(s) \geq{} = \tri*{w-w^{\star},x}^{2}c_{\sigma}.
\]
Grouping terms, we have shown
\begin{align*}
\tri*{P_{\cX}\grad{}\poprisk(w),P_{\cX}\prn*{w-w^{\star}}} = \tri*{\grad{}\poprisk(w),w-w^{\star}} &\geq{} 2c_{\sigma}^{2}\En\tri*{w-w^{\star},x}^{2} \\
&= 2c_{\sigma}^{2}\tri*{w-w^{\star},\En\brk*{xx^{\trn}}(w-w^{\star})} \numberthis\label{eq:glm_quadratic}\\
&\geq{} 2c_{\sigma}^{2}\eigmin\prn*{\En\brk*{xx^{\trn}}}\nrm*{P_{\cX}(w-w^{\star})}_{2}^{2}.
\end{align*}
In other words, by rearranging and applying Cauchy-Schwarz we have
\[
\nrm*{P_{\cX}(w-w^{\star})}_{2} \leq{} \frac{1}{2c_{\sigma}^{2}\eigmin\prn*{\En\brk*{xx^{\trn}}}}\cdot\nrm*{\grad{}\cL_{\cD}(w)}_{2}.
\]
Combining this inequality with \pref{eq:glm_norm}, we have
\[
\poprisk(w) - \poprisk(w^{\star}) \leq{} \frac{C_{\sigma}}{4c_{\sigma}^{3}\eigmin\prn*{\En\brk*{xx^{\trn}}}}\cdot\nrm*{\grad{}\cL_{\cD}(w)}_{2}^{2}.
\]
\item $\prn*{2,\frac{C_{\sigma}s}{c_{\sigma}^{3}\eigminr\prn*{\En\brk*{xx^{\trn}}}}}$-\klshort{}:\\~\\
Using the inequality \pref{eq:glm_quadratic} from the preceeding \klshort{} proof, we have
\[
 \tri*{\grad{}\poprisk(w),w-w^{\star}} \geq{} 2c_{\sigma}^{2}\tri*{w-w^{\star},\En\brk*{xx^{\trn}}(w-w^{\star})}.
\]
By the assumption that $\nrm*{w}_{1}\leq{}\nrm*{w^{\star}}_{1}$, we apply \pref{lem:covariance_restricted_eigenvalue} to conclude that 1) $w-w^{\star}\in\cC(S(w^{\star}), 1)$ and 2) $\nrm*{w-w^{\star}}_{1}\leq{}2\sqrt{s}\nrm*{w-w^{\star}}_{2}$. The first fact implies that
\[
\tri*{w-w^{\star},\En\brk*{xx^{\trn}}(w-w^{\star})} \geq{} \eigminr(\En\brk*{xx^{\trn}})\nrm*{w-w^{\star}}_{2}^{2}.
\]
Rearranging, we have
\begin{align*}
\nrm*{w-w^{\star}}_{2} &\leq{} \frac{1}{2c_{\sigma}^{2}\eigminr(\En\brk*{xx^{\trn}})}\frac{ \tri*{\grad{}\poprisk(w),w-w^{\star}}}{\nrm*{w-w^{\star}}_{2}} \\
&\leq{} \frac{1}{2c_{\sigma}^{2}\eigminr(\En\brk*{xx^{\trn}})}\frac{ \nrm*{\grad{}\poprisk(w)}_{\infty}\nrm*{w-w^{\star}}_1}{\nrm*{w-w^{\star}}_{2}} \\
&\leq{} \frac{\sqrt{s}}{c_{\sigma}^{2}\eigminr(\En\brk*{xx^{\trn}})}\nrm*{\grad{}\poprisk(w)}_{\infty}.
\end{align*}

On the other hand, from \pref{eq:glm_kl_main} we have
\begin{align*}
\poprisk(w) - \poprisk(w^{\star}) &\leq{} \frac{C_{\sigma}}{2c_{\sigma}}\tri*{\grad{}\poprisk(w),w-w^{\star}} \\
&\leq{} \frac{C_{\sigma}}{2c_{\sigma}}\nrm*{\grad{}\poprisk(w)}_{\infty}\nrm*{w-w^{\star}}_{1} \\
&\leq{} \frac{C_{\sigma}\sqrt{s}}{c_{\sigma}}\nrm*{\grad{}\poprisk(w)}_{\infty}\nrm*{w-w^{\star}}_{2}.
\end{align*}
Combining this with the preceding inequality yields the result.
\end{itemize}

\end{proof}

The following utility lemma is a standard result in high-dimensional statistics (e.g. \cite{tibshirani2015statistical}).
\begin{lemma}
\label{lem:covariance_restricted_eigenvalue}
Let $w,w^{\star}\in\bbR^{d}$. If $\nrm*{w}_{1}\leq{}\nrm*{w^{\star}}_{1}$ then $w-w^{\star}\rdef\nu\in\cC(S(w^{\star}),1)$. Furthermore, $\nrm*{\nu}_{1}\leq{}2\sqrt{\abs*{S(w^{\star})}}\nrm*{\nu}_{2}$.
\end{lemma}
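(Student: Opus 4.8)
The plan is to run the standard ``cone condition'' argument from high-dimensional statistics, decomposing the $\ls_1$ norm according to the support $S\ldef S(w^{\star})$ of the optimum. Writing $\nu = w - w^{\star}$, I would first record the defining feature of $S$: since $w^{\star}$ vanishes off its support, we have $\nu_{S^{C}} = w_{S^{C}}$, whereas on $S$ we have $w_{S} = w^{\star}_{S} + \nu_{S}$. This is the only place the structure of $w^{\star}$ enters.

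To establish membership in the cone $\cC(S,1)$, I would split $\nrm*{w}_{1} = \nrm*{w_{S}}_{1} + \nrm*{w_{S^{C}}}_{1}$ and apply the reverse triangle inequality to the first block, giving $\nrm*{w_{S}}_{1} \geq \nrm*{w^{\star}_{S}}_{1} - \nrm*{\nu_{S}}_{1} = \nrm*{w^{\star}}_{1} - \nrm*{\nu_{S}}_{1}$, where the equality uses that $w^{\star}$ is supported on $S$. Combining this with $\nrm*{w_{S^{C}}}_{1} = \nrm*{\nu_{S^{C}}}_{1}$ yields $\nrm*{w}_{1} \geq \nrm*{w^{\star}}_{1} - \nrm*{\nu_{S}}_{1} + \nrm*{\nu_{S^{C}}}_{1}$. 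Feeding in the hypothesis $\nrm*{w}_{1} \leq \nrm*{w^{\star}}_{1}$ and cancelling the common term $\nrm*{w^{\star}}_{1}$ leaves $\nrm*{\nu_{S^{C}}}_{1} \leq \nrm*{\nu_{S}}_{1}$, which is precisely the statement $\nu \in \cC(S,1)$.

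For the second claim, I would invoke the cone condition just derived to bound $\nrm*{\nu}_{1} = \nrm*{\nu_{S}}_{1} + \nrm*{\nu_{S^{C}}}_{1} \leq 2\nrm*{\nu_{S}}_{1}$, and then pass from $\ls_1$ to $\ls_2$ on the $\abs*{S}$-sparse vector $\nu_{S}$ via Cauchy--Schwarz: $\nrm*{\nu_{S}}_{1} \leq \sqrt{\abs*{S}}\,\nrm*{\nu_{S}}_{2} \leq \sqrt{\abs*{S(w^{\star})}}\,\nrm*{\nu}_{2}$. Chaining the two bounds gives $\nrm*{\nu}_{1} \leq 2\sqrt{\abs*{S(w^{\star})}}\,\nrm*{\nu}_{2}$, as desired.

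The argument is entirely elementary and self-contained; the only step requiring any care is the sign bookkeeping in the reverse triangle inequality, where one must track which block of $\nu$ carries a $+$ and which carries a $-$ when comparing $\nrm*{w}_{1}$ against $\nrm*{w^{\star}}_{1}$. I do not anticipate a genuine obstacle here.
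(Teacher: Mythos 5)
Your proposal is correct and follows essentially the same argument as the paper's proof: the same support decomposition with the reverse triangle inequality on the $S$-block to derive the cone condition, followed by Cauchy--Schwarz on the $\abs*{S}$-sparse vector $\nu_S$ for the second claim. No issues.
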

\begin{proof}[\pfref{lem:covariance_restricted_eigenvalue}]
Let $S\ldef{}S(w^{\star})$. Then the constraint that $\nrm*{w}_1\leq{}\nrm*{w^{\star}}$ implies
\begin{align*}
\nrm*{w^{\star}}_1\geq{}\nrm*{w}_{1} 
=\nrm*{w^{\star} + \nu}_{1} 
=\nrm*{w^{\star} + \nu_{S}}_{1} + \nrm*{\nu_{S^{C}}}_{1} 
\geq\nrm*{w^{\star}}_{1} - \nrm*{\nu_{S}}_{1} + \nrm*{\nu_{S^{C}}}_{1}. 
\end{align*}
Rearranging, this implies $\nrm*{\nu_{S^{C}}}_{1}\leq{}\nrm*{\nu_{S}}_{1}$, so the first result is established.

For the second result, $\nu\in\cC(S,1)$ implies $\nrm*{\nu}_1=\nrm*{\nu_{S}}_{1} + \nrm*{\nu_{S^C}}_{1}\leq{}2\nrm*{\nu_{S}}_{1}\leq{}2\sqrt{\abs*{S}}\nrm*{\nu_{S}}_{2}\leq{}2\sqrt{\abs*{S}}\nrm*{\nu}_{2}$.
\end{proof}

\begin{lemma}
\label{lem:glm_gradient}
Let the norm $\nrm*{\cdot}$ satisfy the smoothness property of \pref{thm:smooth_type} with constant $\beta$.
Then the empirical loss gradient for the generalized linear model setting enjoys the normed Rademacher complexity bound,
\begin{equation}
\En_{\eps}\sup_{w\in\cW}\nrm*{\sum_{t=1}^{n}\eps_{t}\grad{}\ls(w\midsem{}x_t,y_t)}
\leq{} O\prn*{BR^{2}C_{\sigma}^{2}\sqrt{\beta{}n}}.
\end{equation}
\end{lemma}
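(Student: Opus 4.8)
The plan is to recognize the GLM loss as a composition $\ls(w\midsem{}x_t,y_t) = G_t(F_t(w))$ with inner map $F_t(w) = \tri*{w,x_t}$ (so $K=1$) and outer map $G_t(a) = \prn*{\sigma(a)-y_t}^2$, and then invoke the chain rule of \pref{thm:chain_rule}. First I would record the two Lipschitz constants the chain rule requires. Since $\grad{}F_t(w) = x_t$ and $\nrm*{x_t}\leq{}R$, I may take $L_F = R$. Since $G_t'(a) = 2\prn*{\sigma(a)-y_t}\sigma'(a)$ and on $\cS=\brk*{-BR,BR}$ we have $\abs*{\sigma(a)-y_t}\leq{}1$ (using $\sigma\in\brk*{0,1}$, $y_t\in\crl*{0,1}$) and $\sigma'(a)\leq{}C_\sigma$ (\pref{ass:glm}(a)), I may take $L_G = 2C_\sigma$. \Holder{}'s inequality guarantees $\tri*{w,x_t}\in\cS$ for all $w\in\cW$ and $x_t\in\cX$, so these bounds apply along the whole trajectory.

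With $K=1$ the chain rule splits the target into a scalar term and a vector term,
\[
\tfrac{1}{2}\En_\eps \sup_{w\in\cW}\nrm*{\sum_{t=1}^{n}\eps_t\grad{}\ls(w\midsem{}x_t,y_t)} \leq{} R\,\En_\eps\sup_{w\in\cW}\sum_{t=1}^{n}\eps_t G_t'(\tri*{w,x_t}) + 2C_\sigma\,\En_\eps\nrm*{\sum_{t=1}^{n}\eps_t x_t}.
\]
The vector term is handled directly by the smooth-norm bound of \pref{thm:smooth_type}: since $\nrm*{\cdot}$ is $\beta$-smooth and $\nrm*{x_t}\leq{}R$, we get $\En_\eps\nrm*{\sum_t\eps_t x_t}\leq{}\sqrt{\beta\sum_t\nrm*{x_t}^2}\leq{}R\sqrt{\beta n}$, contributing $O(C_\sigma R\sqrt{\beta n})$.

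For the scalar term I would peel off $G_t'$ via the scalar contraction principle (\pref{lem:scalar_contraction}). This requires a Lipschitz bound on $G_t'$ over $\cS$, which is the main computation: differentiating once more gives $G_t''(a) = 2\brk*{(\sigma'(a))^2 + \prn*{\sigma(a)-y_t}\sigma''(a)}$, and bounding each piece on $\cS$ by $C_\sigma^2$ (using $C_\sigma\geq{}1$ and \pref{ass:glm}(a)) shows $G_t'$ is $4C_\sigma^2$-Lipschitz. Contraction then reduces the term to $4C_\sigma^2\,\En_\eps\sup_{w\in\cW}\sum_t\eps_t\tri*{w,x_t}$. Collecting $\sum_t\eps_t\tri*{w,x_t} = \tri*{w,\sum_t\eps_t x_t}$ and taking the supremum over the dual ball $\cW=\crl*{\nrm*{w}_\star\leq{}B}$ turns this into $B\,\En_\eps\nrm*{\sum_t\eps_t x_t}$, to which I apply \pref{thm:smooth_type} once more to obtain $BR\sqrt{\beta n}$. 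Multiplying by $L_F=R$ yields $4BR^2C_\sigma^2\sqrt{\beta n}$ for the scalar term.

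Combining the two terms and doubling gives $\En_\eps\sup_w\nrm*{\sum_t\eps_t\grad{}\ls}\leq{}8BR^2C_\sigma^2\sqrt{\beta n} + 4C_\sigma R\sqrt{\beta n} = O(BR^2C_\sigma^2\sqrt{\beta n})$, as claimed (the vector term is absorbed since $C_\sigma\geq{}1$). The main obstacle is the second-derivative calculation establishing the $4C_\sigma^2$ Lipschitz constant of $G_t'$; everything else is bookkeeping around the chain rule, the scalar contraction, and the two applications of the smooth-norm bound.
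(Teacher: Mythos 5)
Your proposal is correct and follows essentially the same route as the paper's proof: the same decomposition $\ls = G_t\circ F_t$ with $L_F=R$, $L_G=2C_\sigma$, the chain rule of \pref{thm:chain_rule}, the $4C_\sigma^2$ Lipschitz bound on $G_t'$ fed into scalar contraction (\pref{lem:scalar_contraction}), and two applications of \pref{thm:smooth_type}. The only differences are immaterial constant factors.
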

\begin{proof}[\pfref{lem:glm_gradient}]

Let $G_{t}(s) = \prn*{\sigma(s)-y_t}^{2}$ and $F_{t}(w)=\tri*{w,x_t}$, so that $\ls(w\midsem{}x_t,y_t)=G_t(F_t(w))$.

Observe that $G'_t(s)=2\prn*{\sigma(s)-y_t}\sigma'(s)$ and $\grad{}F_{t}(w)=x_t$, so our assumptions imply that that $\abs*{G'_t(s)}\leq{}2C_{\sigma}$ and $\nrm*{\grad{}F_t(w)}\leq{}R$. We can thus apply \pref{thm:chain_rule} to conclude
\[
\En_{\eps}\sup_{w\in\cW}\nrm*{\sum_{t=1}^{n}\eps_{t}\grad{}\ls(w\midsem{}x_t,y_t)}
\leq{} 2R\En_{\eps}\sup_{w\in\cW}\sum_{t=1}^{n}\eps_{t}G'_{t}(\tri*{w,x_t})
 + 4C_{\sigma}\En_{\eps}\nrm*{\sum_{t=1}^{n}\eps_tx_t}.
 \]
 For the first term on the left-hand side, observe that for any $s$, $\abs*{G''_t(s)}\leq{} 2\abs*{\sigma''(s)} + 2\abs*{\sigma'(s)}^{2}\leq{} 4C_{\sigma}^{2}$, so $G'_t$ is $4C_{\sigma}^{2}$-Lipschitz. The classical scalar Lipschitz contraction inequality for Rademacher complexity (\pref{lem:scalar_contraction}) therefore implies
 \[
 \En_{\eps}\sup_{w\in\cW}\sum_{t=1}^{n}\eps_{t}G'_{t}(\tri*{w,x_t}) \leq{} 
  4C_{\sigma}^{2}\En_{\eps}\sup_{w\in\cW}\sum_{t=1}^{n}\eps_{t}\tri*{w,x_t}
  = 4C_{\sigma}^{2}B\En_{\eps}\nrm*{\sum_{t=1}^{n}\eps_{t}x_t}.
 \]
 
 Finally, by our smoothness assumption on the norm, \pref{thm:smooth_type} implies
 \[
 \En_{\eps}\nrm*{\sum_{t=1}^{n}\eps_tx_t}\leq{}\sqrt{2\beta{}R^{2}n}.
 \]
\end{proof}

\subsection{Robust Regression}

\begin{proof}[\pfref{thm:rr_main}]
This proof follows the same template as \pref{thm:glm_main}. We use \pref{prop:kl_unif} and \pref{prop:kl_rademacher} to conclude that whenever $(\alpha,\mu)$-PL holds, with probability at least $1-\delta$ over the examples $\crl*{(x_t,y_t)}_{t=1}^{n}$, any learning algorithm $\walg$ satisfies
\begin{equation}
\label{eq:rr_uniform}
\poprisk(\walg) - \optrisk \leq{} c\cdot{}\mu\prn*{\nrm*{\grad\emprisk(\walg)}^{\alpha} + \prn*{\frac{\rad_{\nrm*{\cdot}}(\grad{}\ls\circ\cW\midsem{}\xr[n],\yr[n])}{n} + C_{\rho}R\sqrt{\frac{\log(1/\delta)}{n}}}^{\alpha}},
\end{equation}
where $c>0$ is an absolute constant and we have used that $\nrm*{\grad{}\ls(w\midsem{}x_t,y_t)}\leq{}C_{\rho}R$ with probability $1$.

\paragraph{Smooth high-dimensional setup} For the general smooth norm pair setup in \pref{eq:rr_uniform},  \pref{lem:rr_kl} and \pref{lem:rr_gradient} imply

\begin{align*}
\poprisk(\walg) - \optrisk &\leq{} c\cdot{}\frac{BC_{\rho}}{c_{\rho}}\prn*{\nrm*{\grad\emprisk(\walg)} + \prn*{BR^{2}C_{\rho}\sqrt{\frac{\beta{}}{n}} + C_{\rho}R\sqrt{\frac{\log(1/\delta)}{n}}}} \\
&=  \muhigh\cdot\nrm*{\grad\emprisk(\walg)} + \frac{\Chigh}{\sqrt{n}}.
\end{align*}
Where we recall $\Chigh=c\cdot\frac{B^{2}R^{2}C_{\rho}^{2}\sqrt{\beta}+C_{\rho}^{2}BR\sqrt{\log(1/\delta)}}{c_{\rho}}$ and $\muhigh=c\cdot{}\frac{BC_{\rho}}{c_{\rho}}$.

\paragraph{Low-dimensional $\ls_2/\ls_2$ setup} For the low-dimension $\ls_2/\ls_2$ pair setup in \pref{eq:rr_uniform},  \pref{lem:rr_kl} and \pref{lem:rr_gradient} imply

\begin{align*}
\poprisk(\walg) - \optrisk &\leq{} c\cdot{}\frac{C_{\rho}}{2c_{\rho}^{2}\eigmin\prn*{\Sigma}}\prn*{\nrm*{\grad\emprisk(\walg)}^{2} + \prn*{BR^{2}C_{\rho}\sqrt{\frac{1}{n}} + C_{\rho}R\sqrt{\frac{\log(1/\delta)}{n}}}^2} \\
&=  \frac{\mulow}{\eigmin(\Sigma)}\cdot\nrm*{\grad\emprisk(\walg)}^{2} + \frac{\Clow}{n\cdot\eigmin(\Sigma)},
\end{align*}
where we have used that the $\ls_2$ norm is $1$-smooth in \pref{lem:glm_gradient}. Recall that
$\Clow = c\cdot{}\frac{C_{\rho}^{3}R^{4}B^{2} + C_{\rho}^{3}R^{2}\log(1/\delta)}{c_{\rho}^{2}}$ and $\mulow=c\cdot{}\frac{C_{\rho}}{2c_{\rho}^{2}}$.

\paragraph{Sparse $\ls_{\infty}/\ls_1$ setup} For the sparse $\ls_{\infty}/\ls_1$ pair setup in \pref{eq:rr_uniform},  \pref{lem:rr_kl} and \pref{lem:rr_gradient} imply
\begin{align*}
\poprisk(\walg) - \optrisk &\leq{} c\cdot{}\frac{2C_{\rho}s}{c_{\rho}^{2}\eigminr\prn*{\Sigma}}\prn*{\nrm*{\grad\emprisk(\walg)}^{2} + \prn*{BR^{2}C_{\rho}\sqrt{\frac{\log{}d}{n}} + C_{\rho}R\sqrt{\frac{\log(1/\delta)}{n}}}^2} \\
&=  \frac{\musparse\cdot{}s}{\eigminr(\Sigma)}\cdot\nrm*{\grad\emprisk(\walg)}^{2} + \frac{s}{n}\cdot\frac{\Csparse}{\eigminr(\Sigma)},
\end{align*}
where we have used that the $\ls_{\infty}$ norm has the smoothness property with $\beta=O(\log(d))$ in \pref{lem:glm_gradient}. Recall that
$\Csparse = c\cdot{}\frac{4C_{\rho}^{3}R^{4}B^{2}\log(d) + 4C_{\rho}^{3}R^{2}\log(1/\delta)}{c_{\rho}^{2}}$ and $\musparse=c\cdot{}\frac{2C_{\rho}}{c_{\rho}^{2}}$.

\end{proof}

\begin{lemma}[\klshort{} condition for robust regression]
\label{lem:rr_kl}
Consider the robust regression setup of \pref{sec:smooth}.
\begin{itemize}[leftmargin=*]
\item When $\nrm*{\cdot}/\nrm*{\cdot}_{\star}$ are any dual norm pair, we have $\prn*{1, \frac{BC_{\rho}}{c_{\rho}}}$-\klshort{}:
\begin{equation}
\poprisk(w) - \poprisk(w^{\star}) \leq{} \frac{BC_{\rho}}{c_{\rho}}\nrm*{\grad{}\poprisk(w)}\quad\forall{}w\in\cW.
\end{equation}
\item In the $\ls_{2}/\ls_{2}$ setup, we have $\prn*{2,\frac{C_{\rho}}{2c_{\rho}^{2}\eigmin\prn*{\Sigma}}}$-\klshort{}:
\begin{equation}
\poprisk(w) - \poprisk(w^{\star}) \leq{} \frac{C_{\rho}}{2c_{\rho}^{2}\eigmin\prn*{\Sigma}}\nrm*{\grad{}\poprisk(w)}_{2}^{2}\quad\forall{}w\in\cW.
\end{equation}
\item In the sparse $\ls_{\infty}/\ls_{1}$ setup, where $\nrm*{w^{\star}}_{0}\leq{}s$, we have $\prn*{2,\frac{2C_{\rho}s}{c_{\rho}^{2}\eigminr\prn*{\Sigma}}}$-\klshort{}:
\begin{equation}
\poprisk(w) - \poprisk(w^{\star}) \leq{} \frac{2C_{\rho}s}{c_{\rho}^{2}\eigminr\prn*{\Sigma}}\nrm*{\grad{}\poprisk(w)}_{\infty}^{2}\quad\forall{}w\in\cW.
\end{equation}

\end{itemize}

\end{lemma}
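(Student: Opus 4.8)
The plan is to mirror the generalized linear model argument (\pref{lem:glm_kl}), reducing everything to a one–dimensional analysis of a \emph{smoothed} version of $\rho$. First I would compute $\grad\ls(w\midsem x,y)=\rho'(\tri*{w,x}-y)\,x$ and invoke the well-specified assumption \pref{ass:rr}(c) to write $\tri*{w,x}-y=\tri*{w-w^\star,x}-\zeta=u-\zeta$, where $u\ldef\tri*{w-w^\star,x}$. Using that $\zeta$ is symmetric given $x$ (so $\En_\zeta\rho'(u-\zeta)=\En_\zeta\rho'(u+\zeta)=h(u)$, and similarly for $\rho$ itself), both quantities of interest become one-dimensional: with the smoothed loss $H(s)\ldef\En_\zeta\rho(s+\zeta)$ and $h=H'$, we get $\poprisk(w)-\poprisk(w^\star)=\En_x[H(u)-H(0)]$ and $\tri*{\grad\poprisk(w),w-w^\star}=\En_x[h(u)\,u]$.

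The crux is a one–dimensional structural lemma controlling $H$ on the effective range $\abs*{s}\le 2BR$ (the range of $u$, since $\nrm*{w-w^\star}_\star\le 2B$ and $\nrm*{x}\le R$). Since $\rho'$ is odd and $\zeta$ is symmetric, $h$ is odd with $h(0)=0$ and $H$ is even with $H'(0)=0$; assumption \pref{ass:rr}(a) gives the upper bound $\abs*{H''(s)}=\abs*{\En_\zeta\rho''(s+\zeta)}\le C_\rho$ (the arguments stay in $\cS$ after relabeling $\zeta\mapsto-\zeta$), while assumption \pref{ass:rr}(b) is the source of the matching lower bound $H''(s)=h'(s)\ge c_\rho$. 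Granting $c_\rho\le h'(s)\le C_\rho$, integrating from $0$ yields the two pointwise inequalities $h(u)\,u\ge c_\rho u^{2}$ and $H(u)-H(0)\le \tfrac{C_\rho}{2}u^{2}\le\tfrac{C_\rho}{2c_\rho}h(u)\,u$. Taking $\En_x$ of the second gives the intermediate bound
\begin{equation*}
\poprisk(w)-\poprisk(w^\star)\le \frac{C_\rho}{2c_\rho}\tri*{\grad\poprisk(w),w-w^\star},
\end{equation*}
the exact analogue of \pref{eq:glm_kl_main}, while taking $\En_x$ of the first gives the quadratic lower bound $\tri*{\grad\poprisk(w),w-w^\star}\ge c_\rho\,\tri*{w-w^\star,\Sigma(w-w^\star)}$.

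With these two facts the three \klshort{} conditions follow exactly as in \pref{lem:glm_kl}. For an arbitrary dual pair, \Holder{}'s inequality gives $\tri*{\grad\poprisk(w),w-w^\star}\le 2B\nrm*{\grad\poprisk(w)}$, which combined with the intermediate bound yields the $(1,\tfrac{BC_\rho}{c_\rho})$ condition. For the $\ls_2/\ls_2$ case I would use that $\grad\poprisk(w)$ is parallel to the data, hence lies in $\mathrm{span}(\Sigma)$, to insert the projection $P_{\cX}$, apply Cauchy--Schwarz to get $\poprisk(w)-\poprisk(w^\star)\le\tfrac{C_\rho}{2c_\rho}\nrm*{\grad\poprisk(w)}_2\nrm*{P_{\cX}(w-w^\star)}_2$, and combine the quadratic lower bound with $\tri*{w-w^\star,\Sigma(w-w^\star)}\ge\eigmin(\Sigma)\nrm*{P_{\cX}(w-w^\star)}_2^{2}$ to solve for $\nrm*{P_{\cX}(w-w^\star)}_2\le\tfrac{1}{c_\rho\eigmin(\Sigma)}\nrm*{\grad\poprisk(w)}_2$, giving the $(2,\tfrac{C_\rho}{2c_\rho^{2}\eigmin(\Sigma)})$ condition. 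For the sparse $\ls_\infty/\ls_1$ case I would instead invoke \pref{lem:covariance_restricted_eigenvalue} (using $\nrm*{w}_1\le\nrm*{w^\star}_1$) to replace $\eigmin(\Sigma)$ with the restricted eigenvalue $\eigminr(\Sigma)$ and pick up the factor $s=\nrm*{w^\star}_0$ via $\nrm*{w-w^\star}_1\le 2\sqrt{s}\,\nrm*{w-w^\star}_2$, producing the $(2,\tfrac{2C_\rho s}{c_\rho^{2}\eigminr(\Sigma)})$ condition.

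The main obstacle is the structural lower bound $h'(s)\ge c_\rho$ over the whole range $\abs*{s}\le 2BR$: assumption \pref{ass:rr}(b) as stated only pins down $h'(0)$, whereas the argument above (like the global use of $\sigma'\ge c_\sigma$ in the GLM proof) needs this quantitative strong convexity of the smoothed loss throughout the effective range. Establishing it is genuinely robust-regression specific and is precisely where the smoothing by the symmetric noise $\zeta$ and the oddness/positivity of $\rho'$ must be exploited; I expect it to require either reading \pref{ass:rr}(b) as holding on all of $\cS$, or a dedicated argument showing the convolution $h=\rho'\ast(\text{density of }\zeta)$ inherits a uniform lower bound on its slope over $[-2BR,2BR]$.
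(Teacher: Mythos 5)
Your proposal is correct and follows essentially the same route as the paper's proof: the paper likewise establishes the intermediate bound $\poprisk(w)-\poprisk(w^{\star})\le\frac{C_{\rho}}{2c_{\rho}}\tri*{\grad\poprisk(w),w-w^{\star}}$ (via pointwise smoothness of $\rho$ together with $\grad\poprisk(w^{\star})=0$, which is the same Taylor computation as your bound $H(u)-H(0)\le\frac{C_{\rho}}{2}u^{2}$) and the quadratic lower bound $\tri*{\grad\poprisk(w),w-w^{\star}}\ge c_{\rho}\tri*{w-w^{\star},\Sigma(w-w^{\star})}$, then runs the three bullet arguments exactly as in the GLM lemma. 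The ``main obstacle'' you flag is real but is shared by the paper's own proof: the paper gets the pointwise bound $h(u)\,u\ge c_{\rho}u^{2}$ by writing $h(u)/u\ge c_{\rho}$ and citing only $h(0)=0$ and $h'(0)>c_{\rho}$, i.e.\ it implicitly reads \pref{ass:rr}(b) as a secant-slope bound over the whole effective range; note that this weaker secant bound is all you need as well, so you can drop the requirement $h'(s)\ge c_{\rho}$ for all $s$ and match the paper exactly.
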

\begin{proof}[\pfref{lem:rr_kl}]~\\
\textbf{Excess risk upper bound.}
To begin, smoothness of $\rho$ implies that for any $s,s^{\star}\in\cS$ we have
\[
\rho(s) - \rho(s^{\star}) \leq{} 
\leq{} \rho'(s^{\star})(s-s^{\star}) + \frac{C_{\rho}}{2}\prn*{s-s^{\star}}^{2}.
\]
Since this holds point-wise, we use it to derive the following in-expectation bound
\begin{align*}
\poprisk(w) - \poprisk(w^{\star}) 
&\leq{} \En_{x,y}\brk*{\rho(\tri*{w^{\star},x}-y)\tri*{w-w^{\star},x}} + \frac{C_{\rho}}{2}\En\tri*{w-w^{\star},x}^{2} \\
&= \tri*{\grad\poprisk(w^{\star}),w-w^{\star}} + \frac{C_{\rho}}{2}\En\tri*{w-w^{\star},x}^{2}.
\end{align*}
Note however that
\[
\grad\poprisk(w^{\star}) = \En_{x,\zeta}\brk*{\rho'(-\zeta)x} = 0,
\]
since $\zeta$ is conditionally symmetric and $\rho'$ is odd. We therefore have
\[
\poprisk(w) - \poprisk(w^{\star}) \leq{} \frac{C_{\rho}}{2}\En\tri*{w-w^{\star},x}^{2}.
\]
On the other hand, using the form of the gradient we have
\begin{align*}
\tri*{\poprisk(w),w-w^{\star}} &= \En_{x}\brk*{\En_{\zeta}\rho'(\tri*{w-w^{\star},x}-\zeta)\tri*{w-w^{\star},x}} \\
&= \En_{x}\brk*{h(\tri*{w-w^{\star},x})\tri*{w-w^{\star},x}}.
\end{align*}
To lower bound the term inside the expectation, consider a particular draw of $x$ and assume $\tri*{w-w^{\star},x}\geq{}0$; this is admissible because $h$, like $\rho'$, is odd. Then we have
\[
h(\tri*{w-w^{\star},x})\tri*{w-w^{\star},x} = \frac{h(\tri*{w-w^{\star},x})}{\tri*{w-w^{\star},x}}\tri*{w-w^{\star},x}^{2}
\geq{}  c_{\rho}\tri*{w-w^{\star},x}^{2},
\]
where the last line follows because $h(0)=0$ and $h'(0)>c_{\rho}$. Since this holds pointwise, we simply take the expectation to show that
\[
\tri*{\grad\poprisk(w),w-w^{\star}} \geq{} c_{\rho}\En_{x}\tri*{w-w^{\star},x}^{2},\numberthis\label{eq:rr_data_norm}
\]
and consequently the excess risk is bounded by
\[
\poprisk(w) - \poprisk(w^{\star}) \leq{} \frac{C_{\rho}}{2c_{\rho}}\tri*{\grad{}\poprisk(w),w-w^{\star}}.
\numberthis\label{eq:kl_rr}
\]

\textbf{Proving the \klshort{} conditions.}
We now use \pref{eq:kl_rr} to establish the \klshort{} condition variants.
\begin{itemize}[leftmargin=*]
\item $\prn*{1, \frac{BC_{\rho}}{c_{\rho}}}$-\klshort{}: \\~\\
Use \Holder{}'s inequality to obtain the upper bound,
\[
\tri*{\grad{}\poprisk(w),w-w^{\star}} \leq{} 2B\nrm*{\grad{}\poprisk(w)}.
\]
\item $\prn*{2,\frac{C_{\rho}}{2c_{\rho}^{2}\eigmin\prn*{\Sigma}}}$-\klshort{}:\\~\\
Begin with
\begin{align*}
\poprisk(w) - \poprisk(w^{\star}) &\leq{} \frac{C_{\rho}}{2c_{\rho}}\tri*{\grad{}\poprisk(w),w-w^{\star}}.
\intertext{Using the same reasoning as in \pref{lem:glm_kl}, this is upper bounded by}
&\leq{} \frac{C_{\rho}}{2c_{\rho}}\nrm*{\grad{}\poprisk(w)}_{2}\cdot\nrm*{P_{\cX}\prn*{w-w^{\star}}}_{2}, \numberthis\label{eq:rr_norm}
\end{align*}
where $P_{\cX}$ denotes the orthogonal projection onto $\mathrm{span}(\Sigma)$.

Recalling \pref{eq:rr_data_norm}, it also holds that 
\begin{align*}
\tri*{P_{\cX}\grad{}\poprisk(w),P_{\cX}\prn*{w-w^{\star}}} = \tri*{\grad{}\poprisk(w),w-w^{\star}} &\geq{} c_{\rho}\En_x\tri*{w-w^{\star},x}^{2} \\
&= c_{\rho}\tri*{w-w^{\star}, \En_x \brk*{xx^T} \prn*{w - w^*}} \\
&= c_{\rho}\tri*{w-w^{\star},\Sigma(w-w^{\star})} \numberthis\label{eq:rr_quadratic}\\
&\geq{} c_{\rho}\eigmin\prn*{\Sigma}\nrm*{P_{\cX}(w-w^{\star})}_{2}^{2}.
\end{align*}
Rearranging and applying Cauchy-Schwarz, we have
\[
\nrm*{P_{\cX}(w-w^{\star})}_{2} \leq{} \frac{1}{c_{\rho}\eigmin\prn*{\Sigma}}\cdot\nrm*{\grad{}\cL_{\cD}(w)}_{2}.
\]
Combining this inequality with \pref{eq:rr_norm}, we have
\[
\poprisk(w) - \poprisk(w^{\star}) \leq{} \frac{C_{\rho}}{2c_{\rho}^{2}\eigmin\prn*{\Sigma}}\cdot\nrm*{\grad{}\cL_{\cD}(w)}_{2}^{2}.
\]
\item $\prn*{2,\frac{2C_{\rho}s}{c_{\rho}^{2}\eigminr\prn*{\Sigma}}}$-\klshort{}:\\~\\
Using the inequality \pref{eq:rr_quadratic} from the $\ls_2/\ls_2$ \klshort condition proof above
\[
 \tri*{\grad{}\poprisk(w),w-w^{\star}} \geq{} c_{\rho}\tri*{w-w^{\star},\Sigma(w-w^{\star})}.
\]
By the assumption that $\nrm*{w}_{1}\leq{}\nrm*{w^{\star}}_{1}$, we apply \pref{lem:covariance_restricted_eigenvalue} to conclude that 1) $w-w^{\star}\in\cC(S(w^{\star}), 1)$ and 2) $\nrm*{w-w^{\star}}_{1}\leq{}2\sqrt{s}\nrm*{w-w^{\star}}_{2}$, and so
\[
\tri*{w-w^{\star},\Sigma(w-w^{\star})} \geq{} \eigminr(\Sigma)\nrm*{w-w^{\star}}_{2}^{2}.
\]
Rearranging, and applying the $\nrm*{w-w^{\star}}_{1}\leq{}2\sqrt{s}\nrm*{w-w^{\star}}_{2}$ inequality:
\begin{align*}
\nrm*{w-w^{\star}}_{2} &\leq{} \frac{1}{c_{\rho}\eigminr(\Sigma)}\frac{ \tri*{\grad{}\poprisk(w),w-w^{\star}}}{\nrm*{w-w^{\star}}_{2}} \\
&\leq{} \frac{1}{c_{\rho}\eigminr(\Sigma)}\frac{ \nrm*{\grad{}\poprisk(w)}_{\infty}\nrm*{w-w^{\star}}_1}{\nrm*{w-w^{\star}}_{2}} \\
&\leq{} \frac{2\sqrt{s}}{c_{\rho}\eigminr(\Sigma)}\nrm*{\grad{}\poprisk(w)}_{\infty}.
\end{align*}

Finally, from \pref{eq:kl_rr} we have
\begin{align*}
\poprisk(w) - \poprisk(w^{\star}) &\leq{} \frac{C_{\rho}}{2c_{\rho}}\tri*{\grad{}\poprisk(w),w-w^{\star}} \\
&\leq{} \frac{C_{\rho}}{2c_{\rho}}\nrm*{\grad{}\poprisk(w)}_{\infty}\nrm*{w-w^{\star}}_{1} \\
&\leq{} \frac{C_{\rho}\sqrt{s}}{c_{\rho}}\nrm*{\grad{}\poprisk(w)}_{\infty}\nrm*{w-w^{\star}}_{2}.
\end{align*}
Combining the two inequalities gives the final result.
\end{itemize}

\end{proof}

\begin{lemma}
\label{lem:rr_gradient}
Let the norm $\nrm*{\cdot}$ satisfy the smoothness property (see \pref{thm:smooth_type}) with constant $\beta$.
Then the gradient for robust regression satisfies the following normed Rademacher complexity bound:
\begin{equation}
\En_{\eps}\sup_{w\in\cW}\nrm*{\sum_{t=1}^{n}\eps_{t}\grad{}\ls(w\midsem{}x_t,y_t)}
\leq{} O\prn*{BR^{2}C_{\rho}\sqrt{\beta{}n}}.
\end{equation}
\end{lemma}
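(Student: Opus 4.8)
The plan is to mirror the proof of \pref{lem:glm_gradient} essentially verbatim, exploiting that the robust regression loss has the same compositional structure: a scalar nonlinearity applied to a linear predictor. First I would write $\ls(w\midsem{}x_t,y_t) = G_t(F_t(w))$ with inner function $F_t(w)=\tri*{w,x_t}$ (so that $\grad{}F_t(w)=x_t$ and $\nrm*{\grad{}F_t(w)}\leq{}R$) and outer function $G_t(s)=\rho(s-y_t)$. Differentiating gives $G'_t(s)=\rho'(s-y_t)$ and $G''_t(s)=\rho''(s-y_t)$, so \pref{ass:rr}(a) immediately yields both $\abs*{G'_t(s)}\leq{}C_{\rho}$ and the fact that $G'_t$ is $C_{\rho}$-Lipschitz on $\cS$. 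Applying the chain rule \pref{thm:chain_rule} with $K=1$, $L_F=R$, and $L_G=C_{\rho}$ (noting the matrix of Rademacher variables collapses to a scalar sequence) then bounds the left-hand side, up to the constant $2$, by the sum of a real-valued term $R\cdot\En_{\eps}\sup_{w\in\cW}\sum_{t=1}^{n}\eps_{t}G'_t(\tri*{w,x_t})$ and a vector-valued term $C_{\rho}\cdot\En_{\eps}\nrm*{\sum_{t=1}^{n}\eps_tx_t}$.

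For the real-valued term I would invoke the scalar Lipschitz contraction inequality \pref{lem:scalar_contraction}, using that $G'_t$ is $C_{\rho}$-Lipschitz, to peel off $G'_t$ and leave $\En_{\eps}\sup_{w\in\cW}\sum_{t=1}^{n}\eps_{t}\tri*{w,x_t}$. Since $\cW=\crl*{w:\nrm*{w}_{\star}\leq{}B}$ is the dual ball of radius $B$, the supremum equals $B\nrm*{\sum_{t=1}^{n}\eps_tx_t}$ by definition of the dual norm, so this term contributes at most $C_{\rho}B\cdot\En_{\eps}\nrm*{\sum_{t=1}^{n}\eps_tx_t}$.

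Both surviving pieces thus reduce to controlling the single vector-valued Rademacher average $\En_{\eps}\nrm*{\sum_{t=1}^{n}\eps_tx_t}$, which is exactly the quantity governed by \pref{thm:smooth_type}: under the $\beta$-smoothness hypothesis on the norm and the bound $\nrm*{x_t}\leq{}R$, it is $O(R\sqrt{\beta{}n})$. Collecting terms, the overall bound is $O\prn*{(C_{\rho}BR + C_{\rho})\cdot R\sqrt{\beta{}n}}$, whose leading summand gives the claimed $O\prn*{BR^{2}C_{\rho}\sqrt{\beta{}n}}$ (the lower-order $C_{\rho}R\sqrt{\beta{}n}$ term is absorbed into the constant, consistently with the convention used in \pref{lem:glm_gradient}).

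There is no genuine obstacle here beyond bookkeeping; the argument is a transcription of \pref{lem:glm_gradient} with the squared loss $\prn*{\sigma(s)-y_t}^{2}$ replaced by $\rho(s-y_t)$. The only point meriting care is that here both $\abs*{G'_t}$ and the Lipschitz constant of $G'_t$ are controlled by the \emph{single} constant $C_{\rho}$ of \pref{ass:rr}(a), in contrast to the $2C_{\sigma}$ and $4C_{\sigma}^{2}$ that arise for the squared GLM loss; consequently the final constant depends only linearly on $C_{\rho}$ rather than on a higher power.
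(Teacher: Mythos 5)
Your proposal is correct and follows essentially the same route as the paper's own proof: the same decomposition $G_t(s)=\rho(s-y_t)$, $F_t(w)=\tri*{w,x_t}$, the chain rule of \pref{thm:chain_rule}, scalar contraction via \pref{lem:scalar_contraction}, and \pref{thm:smooth_type} for the vector-valued average. The only difference is cosmetic: you compute $G''_t(s)=\rho''(s-y_t)$ cleanly (the paper carries a spurious factor of $2$ there), which does not affect the $O(\cdot)$ conclusion.
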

\begin{proof}[\pfref{lem:rr_gradient}]

Let $G_{t}(s) = \rho(s-y_t)$ and $F_{t}(w)=\tri*{w,x_t}$, so that $\ls(w\midsem{}x_t,y_t)=G_t(F_t(w))$.

Then  $G'_t(s)=\rho'(s-y_t)$ and $\grad{}F_{t}(w)=x_t$, so our assumptions imply that that $\abs*{G'_t(s)}\leq{}C_{\rho}$ and $\nrm*{\grad{}F_t(w)}\leq{}R$. We apply \pref{thm:chain_rule} to conclude
\[
\En_{\eps}\sup_{w\in\cW}\nrm*{\sum_{t=1}^{n}\eps_{t}\grad{}\ls(w\midsem{}x_t,y_t)}
\leq{} 2R\En_{\eps}\sup_{w\in\cW}\sum_{t=1}^{n}\eps_{t}G'_{t}(\tri*{w,x_t})
 + 2C_{\rho}\En_{\eps}\nrm*{\sum_{t=1}^{n}\eps_tx_t}.
 \]
 For the first term on the left-hand side, we have that for any $s$, $\abs*{G''_t(s)}=2\abs*{\rho''(s-y_t)}\leq{}2C_{\rho}$, so $G'_t$ is $2C_{\sigma}$-Lipschitz. Then the by scalar contraction for Rademacher complexity (\pref{lem:scalar_contraction}),
 \[
 \En_{\eps}\sup_{w\in\cW}\sum_{t=1}^{n}\eps_{t}G'_{t}(\tri*{w,x_t}) \leq{} 
2C_{\rho}\En_{\eps}\sup_{w\in\cW}\sum_{t=1}^{n}\eps_{t}\tri*{w,x_t}
  = 2C_{\rho}B\En_{\eps}\nrm*{\sum_{t=1}^{n}\eps_{t}x_t}.
 \]
 
 Finally, the smoothness assumption on the norm (via \pref{thm:smooth_type}) implies
 \[
 \En_{\eps}\nrm*{\sum_{t=1}^{n}\eps_tx_t}\leq{}\sqrt{2\beta{}R^{2}n}.
 \]
\end{proof}

\begin{proof}[\pfref{prop:glm_opt}]
Observe that \pref{ass:glm} and \pref{ass:rr} respectively imply that $\nrm*{\grad\poprisk(\wopt)}=0$ for the GLM and RR settings.
Begin by invoking  \pref{thm:glm_main}. It is immediate that any algorithm that guarantees $\En\nrm*{\grad{}\emprisk(\walg)}\leq{}1/\sqrt{n}$ will obtain the claimed sample complexity bound (the high-probability statement \pref{thm:glm_main} immediately yields an in-expectation statement due to boundedness), so all we must do is verify that such a point exists. \pref{prop:kl_rademacher} along with \pref{lem:glm_gradient} and \pref{lem:rr_gradient} respectively indeed imply that $\nrm*{\grad\emprisk(w^{\star})}_{2}\leq{}C/\sqrt{n}$ for both settings.

For completeness, we show below that both models indeed have Lipschitz gradients, and so standard smooth optimizers can be applied to the empirical loss.

\emph{Generalized Linear Model.}~~ Observe that for any $(x,y)$ pair we have
\begin{align*}
\nrm*{\grad{}\ls(w\midsem{}x,y)-\grad{}\ls(w'\midsem{}x,y)}_{2} &= 
2\nrm*{x}_{2}\abs*{(\sigma(\tri*{w,x})-y)\sigma'(\tri*{w,x}) - (\sigma(\tri{w',x})-y)\sigma'(\tri{w',x})}.
\end{align*}
Letting $f(s)=(\sigma(s)-y)\sigma'(s)$, we see that the assumption on the loss guarantees $\abs*{f'(s)}\leq{}3C_{\sigma}^{2}$, so we have
\[
\nrm*{\grad{}\ls(w\midsem{}x,y)-\grad{}\ls(w'\midsem{}x,y)}_{2}
\leq{} 6C_{\sigma}^{2}R\abs*{\tri*{w-w',x}}
\leq{} \leq{} 6C_{\sigma}^{2}R^{2}\nrm*{w-w'}_{2},
\]
so smoothness is established.

\emph{Robust Regression.}~~ Following a similar calculation to the GLM case, we have
\begin{align*}
\nrm*{\grad{}\ls(w\midsem{}x,y)-\grad{}\ls(w'\midsem{}x,y)}_{2} &= 
\nrm*{x}_{2}\abs*{\rho'(\tri*{w,x}-y) - \rho'(\tri{w',x}-y)} \\
&\leq{} 
C_{\rho}\nrm*{x}_{2}\abs*{\tri*{w-w^{\star},x}} \\
&\leq{} 
C_{\rho}\nrm*{x}_{2}^{2}\nrm*{w-w^{\star}}_{2} \\
&\leq{} 
C_{\rho}R^{2}\nrm*{w-w^{\star}}_{2}.
\end{align*}
Now let $f(s)=(\sigma(s)-y)\sigma'(s)$, and observe that $\abs*{f'(s)}\leq{}3C_{\sigma}^{2}$, so we have
\[
\nrm*{\grad{}\ls(w\midsem{}x,y)-\grad{}\ls(w'\midsem{}x,y)}_{2}
\leq{} 6C_{\sigma}^{2}R\abs*{\tri*{w-w',x}}
\leq{} \leq{} 6C_{\sigma}^{2}R^{2}\nrm*{w-w'}_{2}.
\]
\end{proof}

\subsection{Further Discussion}
\label{app:further_discussion}

\paragraph{Detailed comparison with \cite{mei2016landscape}}
We now sketch in more detail the relation between the rates of \pref{thm:glm_main} and \pref{thm:rr_main} and those of \cite{mei2016landscape}. We focus on the fast rate regime, and on the case $R=\sqrt{d}$ (e.g., when $x\sim{}\cN(0,I_{d\times{}d})$).
\begin{itemize}
\item \emph{Uniform convergence.} Their uniform convergence bounds scale as $O\prn{\tau\sqrt{d/n}}$, where $\tau$ is the subgaussian parameter for the data $x$, whereas our uniform convergence bounds scale as $O\prn{R^{2}\sqrt{1/n}}$. When $R=\sqrt{d}$ both bounds scale as $O\prn{d\sqrt{1/n}}$, but our bounds do not depend on $d$ when $R$ is constant, whereas their bound always pays $\sqrt{d}$.

\item \emph{Parameter convergence.} The final result of \cite{mei2016landscape} is a parameter convergence bound of the form $\| \walg - w^* \|_2 \leq O\prn*{\frac{\tau}{\underline{\gamma} \tau^2}  \sqrt{\frac{d}{n}}}$ (see Theorem 4/6; Eqs. (106) and (96)). Our main result for the ``low-dimensional'' setup in \pref{thm:glm_main} and \pref{thm:rr_main} is an excess risk bound of the form $ \poprisk(\walg) - \poprisk(w^*) \leq O\prn*{\frac{R^4}{\lambda_{\mathrm{min}}(\Sigma)n}}$ which implies a parameter convergence bound of $\nrm*{\walg - w^*}_2 \leq \frac{R^2}{\lambda_{\mathrm{min}}(\Sigma) \sqrt{n}}$ (using similar reasoning as in the proof of \pref{lem:glm_kl} and \pref{lem:rr_kl}). With $\tau = R = \sqrt{d}$ and Assumptions 6 and 9 in \cite{mei2016landscape}, we have $\lambda_{\mathrm{min}}(\Sigma) = \underline{\gamma} \tau^2$, and so again both the bounds resolve to $O\prn*{\frac{d}{\lambda_{\mathrm{min}}(\Sigma) \sqrt{n}}}$.
\end{itemize}

\paragraph{Analysis of regularized stationary point finding for high-dimensional setting}
Here we show that any algorithm that finds a stationary point of the regularized empirical loss generically succeeds obtains optimal sample complexity in the high-dimensional/norm-based setting. We focus on the generalized linear model in the Euclidean setting.

Let $r(w) = \frac{\lambda}{2}\nrm*{w}^{2}_{2}$. Define $\poprisk^{\lambda}(w) = \poprisk(w) + r(w)$ and $\emprisk^{\lambda}(w) = \emprisk(w) + r(w)$. We consider any algorithm that returns a point $\wh{w}$ with $\grad{}\emprisk^{\lambda}(\wh{w})=0$, i.e. any stationary point of the regularized empirical risk.

\begin{theorem}
\label{thm:regularized_glm}
Consider the generalized linear model setting. Let $\wh{w}$ be any point with $\grad{}\emprisk^{\lambda}(\wh{w})=0$. Suppose that $\nrm*{w^{\star}}_{2}=1$ and $C_{\sigma},R>1$.
Then there is some absolute constant $c>0$ such that for any fixed $\delta>0$,  if the regularization parameter $\lambda$ satisfies
\[
\lambda>c\cdot\sqrt{\frac{R^{4}C_{\sigma}^{6}}{c_{\sigma}^{2}}\cdot\frac{\log\prn*{\log{}\prn*{C_{\sigma}Rn}/\delta}}{n}},
\] then with probability at least $1-\delta$,
\[
\poprisk(\wh{w}) - \poprisk(w^{\star}) \leq{} O\prn*{
\frac{R^{2}C_{\sigma}^{4}}{c_{\sigma}^{2}}\cdot\sqrt{\frac{\log\prn*{\log{}\prn*{C_{\sigma}Rn}/\delta}}{n}}
}.
\]
\end{theorem}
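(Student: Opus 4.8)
The plan is to show that any stationary point $\wh{w}$ of $\emprisk^{\lambda}$ is, on a high-probability event, an approximate stationary point of the \emph{population} objective, and then to invoke the $(1,\muhigh)$-\klshort{} condition for $\poprisk$. The first-order condition $\grad\emprisk^{\lambda}(\wh{w})=0$ reads $\grad\emprisk(\wh{w})=-\lambda\wh{w}$, so in particular $\nrm*{\grad\emprisk(\wh{w})}_{2}=\lambda\nrm*{\wh{w}}_{2}$, and after transferring to the population gradient this says that $\nrm*{\grad\poprisk(\wh{w})+\lambda\wh{w}}_{2}$ is controlled by the gradient uniform-convergence error at $\wh{w}$. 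The difficulty relative to \pref{thm:glm_main} is that $\wh{w}$ is \emph{unconstrained}: it need not lie in the fixed ball $\cW$, so neither the complexity bound of \pref{lem:glm_gradient} nor the \klshort{} condition of \pref{lem:glm_kl} applies off the shelf. The two things I must do are (i) certify that $\wh{w}$ in fact has $O(1)$ norm, and (ii) carry out uniform convergence at a scale that adapts to $\nrm*{\wh{w}}_{2}$.

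For (ii) I would first record the crude, global gradient bound $\nrm*{\grad\emprisk(w)}_{2}\le 2C_{\sigma}R$, valid for every $w$ since $\sigma\in[0,1]$ and $\sigma'\le C_{\sigma}$; through stationarity this gives the a priori estimate $\nrm*{\wh{w}}_{2}\le 2C_{\sigma}R/\lambda$. I would then slice the ball of this radius into dyadic shells $\crl*{2^{j-1}\le\nrm*{w}_{2}\le 2^{j}}$ for $j=0,\ldots,J$ with $J=O(\log(C_{\sigma}Rn))$, apply \pref{prop:kl_rademacher} together with \pref{lem:glm_gradient} on each shell (whose gradient complexity scales linearly in the shell radius) at confidence $\delta/J$, and union bound. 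This yields, uniformly over the whole ball and with probability $1-\delta$, a norm-adaptive bound of the shape $\nrm*{\grad\poprisk(w)-\grad\emprisk(w)}_{2}\le\kappa\prn*{\nrm*{w}_{2}+1}$ with $\kappa=O\prn*{R^{2}C_{\sigma}^{2}\sqrt{\log(J/\delta)/n}}$; the union bound over the $J=O(\log(C_{\sigma}Rn))$ shells is exactly what produces the $\log\log(C_{\sigma}Rn)$ factor in the statement.

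The heart of the argument is the self-bounding step (i). Testing the stationarity identity against $\wh{w}-\wopt$ gives $\tri*{\grad\emprisk(\wh{w}),\wh{w}-\wopt}+\lambda\tri*{\wh{w},\wh{w}-\wopt}=0$. Here I would use the \emph{global} monotonicity fact $\tri*{\grad\poprisk(\wh{w}),\wh{w}-\wopt}\ge 0$, which---unlike the quantitative \klshort{} bound---needs only that the link $\sigma$ is increasing and that the model is well specified, so it holds at arbitrary norm. Combining this with the norm-adaptive bound from (ii) and the elementary estimate $\lambda\tri*{\wh{w},\wh{w}-\wopt}\ge\lambda\nrm*{\wh{w}}_{2}\prn*{\nrm*{\wh{w}}_{2}-1}$ (using $\nrm*{\wopt}_{2}=1$) yields the quadratic inequality
\[
\lambda\nrm*{\wh{w}}_{2}\prn*{\nrm*{\wh{w}}_{2}-1}\le\kappa\prn*{\nrm*{\wh{w}}_{2}+1}^{2}.
\]
Because the imposed lower bound on $\lambda$ is precisely of order $\kappa$ up to a large absolute constant, this parabola opens upward in $\nrm*{\wh{w}}_{2}$ and forces $\nrm*{\wh{w}}_{2}=O(1)$. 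This is the step where the specific structure of the problem is indispensable and where I expect the main obstacle to lie, since a stationary point of a non-convex objective could a priori escape to arbitrarily large norm.

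With $\nrm*{\wh{w}}_{2}=O(1)$ in hand, $\tri*{\wh{w},x}$ ranges over a bounded interval, so the regularity assumptions (in particular $\sigma'\ge c_{\sigma}$) apply and $\poprisk$ obeys the $(1,\muhigh)$-\klshort{} condition of \pref{lem:glm_kl} with $\muhigh=O(C_{\sigma}/c_{\sigma})$ on the relevant ball. Finally I would chain
\[
\poprisk(\wh{w})-\poprisk(\wopt)\le\muhigh\nrm*{\grad\poprisk(\wh{w})}_{2}\le\muhigh\prn*{\lambda\nrm*{\wh{w}}_{2}+\kappa\prn*{\nrm*{\wh{w}}_{2}+1}},
\]
and substitute the chosen $\lambda=\Theta\prn*{\tfrac{R^{2}C_{\sigma}^{3}}{c_{\sigma}}\sqrt{\log(\log(C_{\sigma}Rn)/\delta)/n}}$ together with $\nrm*{\wh{w}}_{2}=O(1)$ and $\kappa=O\prn*{R^{2}C_{\sigma}^{2}\sqrt{\log(\log(C_{\sigma}Rn)/\delta)/n}}$. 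Both the regularization bias $\lambda\nrm*{\wh{w}}_{2}$ and the statistical term $\kappa$ are then of the target order, and the prefactor $\muhigh=O(C_{\sigma}/c_{\sigma})$ produces the stated $\tfrac{R^{2}C_{\sigma}^{4}}{c_{\sigma}^{2}}$ dependence.
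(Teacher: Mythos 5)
Your proof is correct and reaches the stated bound, but it routes around the central difficulty differently from the paper. Both arguments share the same two ingredients: the dyadic-peeling, norm-adaptive uniform convergence bound (your $\kappa(\nrm*{w}_2+1)$ bound is exactly the paper's \pref{lem:glm_forall}, with the union bound over $O(\log(C_\sigma R/\lambda))$ shells producing the $\log\log$ factor in both cases), and the global monotonicity fact $\tri*{\grad\poprisk(w),w-w^{\star}}\geq 0$ at arbitrary norm. Where you diverge is in how the unbounded norm of $\wh{w}$ is neutralized. You prove an a-posteriori norm certificate: testing stationarity against $\wh{w}-w^{\star}$ and combining monotonicity with the adaptive deviation bound yields $\lambda\nrm*{\wh{w}}_2(\nrm*{\wh{w}}_2-1)\leq\kappa(\nrm*{\wh{w}}_2+1)^2$, which for $\lambda\gtrsim\kappa$ (guaranteed by the stated threshold, since $C_\sigma/c_\sigma\geq 1$) forces $\nrm*{\wh{w}}_2=O(1)$; you then fall back on the constrained $(1,\muhigh)$-\klshort{} analysis on a constant-radius ball. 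The paper never establishes that $\nrm*{\wh{w}}_2$ is bounded. Instead it shows that the \emph{regularized} population risk satisfies a $(2,O(1/\lambda))$-\klshort{}-type inequality on $\tW=\crl*{w:\nrm*{w}_2\geq 1}$ (via strong convexity of the regularizer, Cauchy--Schwarz, and AM--GM), bounds $\nrm*{\grad\poprisk^{\lambda}(\wh{w})}_2^2$ by the squared deviation, and then absorbs the resulting $\nrm*{\wh{w}}_2^2$ term into the $-\frac{\lambda}{2}\nrm*{\wh{w}}_2^2$ contribution of the regularizer by taking $\lambda$ above the threshold; a separate easy case handles $\nrm*{\wh{w}}_2\leq 1$. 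Your version buys a cleaner one-case argument and the geometrically informative conclusion that regularized stationary points cannot escape to large norm; the paper's version avoids ever needing that conclusion. One caveat applies equally to both: the monotonicity step and the Lipschitz/lower-bound properties of $\sigma$ are invoked at arguments $\tri*{w,x}$ that may lie outside the nominal interval $\cS=\brk*{-BR,BR}$ of \pref{ass:glm}, so both proofs implicitly extend the regularity of $\sigma$ beyond that interval (harmless for logistic/probit links).
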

\pref{thm:regularized_glm} easily extends to the robust regression setting by replacing invocations of \pref{lem:glm_gradient} with \pref{lem:rr_gradient} and use of \pref{eq:glm_kl_main} with \pref{eq:kl_rr}.
\begin{proof}[\pfref{thm:regularized_glm}]
Recall that  $w^{\star}$ minimizes the \emph{unregularized} population risk, and that $\nrm*{w^{\star}}_{2}=1$.  The technical challenge is to apply \pref{lem:glm_gradient} even though we lack a good a-priori upper bound on the norm of $\wh{w}$. We proceed by splitting the analysis into two cases. The idea is that if $\nrm*{\wh{w}}_{2}\leq{}\nrm*{w^{\star}}_2$ we can apply \pref{lem:glm_gradient} directly with no additional difficulty. On other hand, when $\nrm*{\wh{w}}_{2}\geq{}\nrm*{w^{\star}}_2$ the regularized population risk satisfies the $(2,O(1/\lambda))$-\klshort{} inequality, which is enough to show that excess risk is small even though $\nrm*{\wh{w}}_2$ could be larger than $\nrm*{w^{\star}}_{2}$.
\\~\\
\textbf{Case 1: $\nrm*{\wh{w}}_{2}\geq{}\nrm*{w^{\star}}_2$.}~\\~\\ Let $\tW=\crl*{w\in\bbR^{d}\mid\nrm*{w}_2\geq{}\nrm*{w^{\star}}_2}$, so that $\wh{w}\in\tW$. Observe that since $r(w)$ is $\lambda$-strongly convex it satisfies $r(w) - r(w^{\star})\leq{}\tri*{\grad{}r(w),w-w^{\star}}-\frac{\lambda}{2}\nrm*{w-w^{\star}}_2^{2}$ for all $w$. Moreover, if $w\in\tW$, we have
\[
\tri*{\grad{}r(w),w-w^{\star}}\geq{}r(w)-r(w^{\star}) +\frac{\lambda}{2}\nrm*{w-w^{\star}}_{2}^{2} \geq{} 0.
\]
Using \pref{eq:glm_kl_main} and the definition of $w^{\star}$, along with the strong convexity of $r$, we get
\[
\poprisk^{\lambda}(w) - \poprisk^{\lambda}(w^{\star}) \leq{} \frac{C_{\sigma}}{2c_{\sigma}}\tri*{\grad{}\poprisk(w),w-w^{\star}} + \tri*{\grad{}r(w),w-w^{\star}} - \frac{\lambda}{2}\nrm*{w-w^{\star}}_2^{2}.
\]
Since $\tri*{\grad\poprisk(w),w-w^{\star}}\geq{}0$, this is upper bounded by
\[
\poprisk^{\lambda}(w) - \poprisk^{\lambda}(w^{\star}) \leq{} \frac{C_{\sigma}}{c_{\sigma}}\tri*{\grad{}\poprisk(w),w-w^{\star}} + \tri*{\grad{}r(w),w-w^{\star}} - \frac{\lambda}{2}\nrm*{w-w^{\star}}_2^{2}.
\]
Using the non-negativity of $\tri*{\grad{}r(w),w-w^{\star}}$ over $\tW$, and that $C_{\sigma}/c_{\sigma}>1$, this implies
\begin{align}
\label{eq:regularized}
\poprisk^{\lambda}(w) - \poprisk^{\lambda}(w^{\star}) &\leq{} \frac{C_{\sigma}}{c_{\sigma}}\tri*{\grad{}\poprisk^{\lambda}(w),w-w^{\star}} - \frac{\lambda}{2}\nrm*{w-w^{\star}}_2^{2}\quad\forall{}w\in\tW.\notag
\intertext{Applying Cauchy-Schwarz:}
&\leq{} \frac{C_{\sigma}}{c_{\sigma}}\nrm*{\grad{}\poprisk^{\lambda}(w)}_2\nrm*{w-w^{\star}}_2 - \frac{\lambda}{2}\nrm*{w-w^{\star}}_2^{2}\quad\forall{}w\in\tW.\notag
\intertext{Using the AM-GM inequality:}
&\leq{} \frac{C_{\sigma}^{2}}{c_{\sigma}^{2}\lambda}\nrm*{\grad{}\poprisk^{\lambda}(w)}_2^{2}\quad\forall{}w\in\tW.\notag
\end{align}
Using that $\wh{w}\in\tW$, and that $\grad\emprisk^{\lambda}(\wh{w})=0$, we have
\begin{equation}
\label{eq:what_amgm}
\poprisk^{\lambda}(\wh{w}) - \poprisk^{\lambda}(w^{\star}) \leq{} \frac{C_{\sigma}^{2}}{c_{\sigma}^{2}\lambda}\nrm*{\grad{}\poprisk^{\lambda}(\wh{w})-\grad{}\emprisk^{\lambda}(\wh{w})}_2^{2}.
\end{equation}
Observe that since $\wh{w}$ is a stationary point of the empirical risk, $\grad\emprisk(\wh{w})=-\lambda\wh{w}$, and so $\nrm*{\wh{w}}_{2}\leq{}\frac{1}{\lambda}\nrm*{\grad\emprisk(\wh{w})}_{2}\leq{}\frac{2C_{\sigma}R}{\lambda}$ with probability $1$. Thus, if we apply \pref{lem:glm_forall} with $B_{\mathrm{max}}=\frac{2C_{\sigma}R}{\lambda}$, we get that with probability at least $1-\delta$,
\[
\nrm*{\grad\poprisk^{\lambda}(\wh{w})-\grad\emprisk^{\lambda}(\wh{w})}_{2} \leq{} O\prn*{
\nrm*{\wh{w}}_{2}R^{2}C_{\sigma}^{2}\sqrt{\frac{1}{n}}
+C_{\sigma}R\sqrt{\frac{\log\prn*{\log(C_{\sigma}R/\lambda)/\delta}}{n}}
},
\]
where we have used additionally that the regularization term does not depend on data. Combining this bound with \pref{eq:what_amgm}, and using that $\wh{w}\in\tW$ and the elementary inequality $(a+b)^{2}\leq{}2(a^{2}+b^{2})$, we see that there exist constants $c,c'>0$ such that
\[
\poprisk^{\lambda}(\wh{w}) - \poprisk^{\lambda}(w^{\star}) \leq{} 
c\cdot\nrm*{\wh{w}}_{2}^{2}\cdot\frac{R^{4}C_{\sigma}^{6}}{\lambda{}c_{\sigma}^{2}}\cdot\frac{1}{n}
+ 
c'\cdot\frac{R^{2}C_{\sigma}^{4}}{\lambda{}c_{\sigma}^{2}}\cdot\frac{\log\prn*{\log(C_{\sigma}R/\lambda)/\delta}}{n}.
\]

Expanding the definition of the regularized excess risk, this is equivalent to
\[
\poprisk(\wh{w}) - \poprisk(w^{\star}) \leq{} \lambda + 
\nrm*{\wh{w}}_{2}^{2}\cdot\prn*{c\cdot\frac{R^{4}C_{\sigma}^{6}}{\lambda{}c_{\sigma}^{2}}\cdot\frac{1}{n}-\lambda}
+ 
c'\cdot\frac{R^{2}C_{\sigma}^{4}}{\lambda{}c_{\sigma}^{2}}\cdot\frac{\log\prn*{\log(C_{\sigma}R/\lambda)/\delta}}{n}.
\]
Observe that if $\lambda>\sqrt{c\cdot\frac{R^{4}C_{\sigma}^{6}}{c_{\sigma}^{2}}\cdot\frac{1}{n}}$ the middle term in this expression is at most zero. We choose
\[
\lambda>\sqrt{c\cdot\frac{R^{4}C_{\sigma}^{6}}{c_{\sigma}^{2}}\cdot\frac{\log\prn*{\log{}\prn*{C_{\sigma}Rn}/\delta}}{n}}.
\]
Substituting choice this into the expression above leads to a final bound of
\[
\poprisk(\wh{w}) - \poprisk(w^{\star}) \leq{} O\prn*{
\frac{R^{2}C_{\sigma}^{3}}{c_{\sigma}}\cdot\sqrt{\frac{\log\prn*{\log{}\prn*{C_{\sigma}Rn}/\delta}}{n}}
}.
\]

\paragraph{Case 2: $\nrm*{\wh{w}}_{2}\leq\nrm*{w^{\star}}_2$.}~\\~\\
Recall that $\grad{}\emprisk^{\lambda}(\wh{w})=0$. This implies $\grad\emprisk(\wh{w})=-\lambda\wh{w}$, and so $\nrm*{\grad{}\emprisk(\wh{w})}_2\leq{}\lambda\nrm*{\wh{w}}_2\leq{}\lambda$. Using \pref{eq:glm_kl_main} we have
\[
\poprisk(\wh{w}) - \poprisk(w^{\star}) \leq{} \frac{C_{\sigma}}{2c_{\sigma}}\tri*{\grad{}\poprisk(\wh{w}),\wh{w}-w^{\star}}
\leq{} \frac{C_{\sigma}}{c_{\sigma}}\nrm*{\grad{}\poprisk(\wh{w})}_2.
\]
Using the bound on the empirical gradient above, we get
\[
\nrm*{\grad{}\poprisk(\wh{w})}_2 \leq{} \lambda + \nrm*{\grad{}\poprisk(\wh{w})-\grad\emprisk(\wh{w})}_2.
\]
Using \pref{eq:uniform_convergence}, \pref{eq:mcdiarmid}, and \pref{lem:glm_gradient}, applied with $B=1$, we have that with probability at least $1-\delta$,
\[
\nrm*{\grad{}\poprisk(\wh{w})-\grad\emprisk(\wh{w})}_2
\leq{} O\prn*{
R^{2}C_{\sigma}^{2}\sqrt{\frac{\log(1/\delta)}{n}
}},
\]
and so
\begin{align*}
\poprisk(\wh{w}) - \poprisk(w^{\star}) 
&\leq{} O\prn*{\lambda\frac{C_{\sigma}}{c_{\sigma}} + 
\frac{R^{2}C_{\sigma}^{3}}{c_{\sigma}}\sqrt{\frac{\log(1/\delta)}{n}
}}.
\intertext{Substituting in the choice for $\lambda$:}
&\leq{} O\prn*{
\frac{R^{2}C_{\sigma}^{4}}{c_{\sigma}^{2}}\cdot\sqrt{\frac{\log\prn*{\log{}\prn*{C_{\sigma}Rn}/\delta}}{n}}
}.
\end{align*}
\end{proof}
\begin{lemma}
\label{lem:glm_forall}
Let $\poprisk$ and $\emprisk$ be the population and empirical risk for the generalized linear model setting. Let a parameter $B_{\mathrm{max}}\geq{}1$ be given. Then with probability at least $1-\delta$, for all $w\in\bbR^{d}$ with $1\leq{}\nrm*{w}_{2}\leq{}B_{\mathrm{max}}$,
\[
\nrm*{\grad\poprisk(w)-\grad\emprisk(w)}_{2} \leq{} O\prn*{
\nrm*{w}_{2}R^{2}C_{\sigma}^{2}\sqrt{\frac{1}{n}}
+C_{\sigma}R\sqrt{\frac{\log\prn*{\log(B_{\mathrm{max}})/\delta}}{n}}
},
\]
where all constants are as in \pref{ass:glm}.
\end{lemma}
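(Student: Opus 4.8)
The plan is to reduce everything to the fixed-radius gradient uniform convergence bound that is already available for a ball, and then run a dyadic peeling argument over $\nrm*{w}_2$. First, for a \emph{fixed} radius $B$, combining the symmetrization step \pref{prop:kl_rademacher} with the concentration step \pref{prop:kl_unif} and the normed Rademacher complexity estimate of \pref{lem:glm_gradient} — specialized to the $\ls_2$ setting, where the norm is $1$-smooth so $\beta=1$ — together with the pointwise bound $\nrm*{\grad\ls(w\midsem x,y)}_2\leq 2C_\sigma R$ from \pref{ass:glm}, yields that with probability at least $1-\delta'$,
\[
\sup_{\nrm*{w}_2\leq B}\nrm*{\grad\poprisk(w)-\grad\emprisk(w)}_2 \leq O\prn*{B R^2 C_\sigma^2\sqrt{\tfrac1n} + C_\sigma R\sqrt{\tfrac{\log(1/\delta')}{n}}}.
\]
The only mismatch with the target is that the leading term here scales with the \emph{radius} $B$ of the ball rather than with the \emph{actual} norm $\nrm*{w}_2$ of the evaluation point; the peeling argument is designed precisely to repair this.

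Next I would partition the annulus $\crl*{w:1\leq\nrm*{w}_2\leq B_{\mathrm{max}}}$ into dyadic shells $\cW_j=\crl*{w:2^{j-1}\leq\nrm*{w}_2\leq 2^{j}}$ for $j=1,\ldots,J$ with $J=\ceil*{\log_2 B_{\mathrm{max}}}$, so that the shells cover the annulus. Since $\cW_j$ sits inside the ball of radius $2^j$, I would apply the fixed-radius bound above on that ball with $B=2^j$ and confidence $\delta'=\delta/J$, and then union bound over the $J=O(\log B_{\mathrm{max}})$ shells. The crucial accounting is that every $w\in\cW_j$ satisfies $\nrm*{w}_2\geq 2^{j-1}$, hence $2^{j}\leq 2\nrm*{w}_2$, so the radius-dependent term $2^{j}R^2 C_\sigma^2\sqrt{1/n}$ is at most $2\nrm*{w}_2 R^2 C_\sigma^2\sqrt{1/n}$ — converting the ambient-radius dependence into the pointwise $\nrm*{w}_2$ dependence demanded by the statement. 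The union-bound penalty enters only through $\log(J/\delta)=O\prn*{\log\prn*{\log(B_{\mathrm{max}})/\delta}}$, which is exactly the logarithmic factor in the conclusion.

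The main idea worth getting right is this geometric discretization in the norm of $w$; everything else is a routine substitution of the constants from \pref{ass:glm}. A naive application of the fixed-ball bound with radius $B_{\mathrm{max}}$ would force the leading term to scale as $B_{\mathrm{max}} R^2 C_\sigma^2/\sqrt n$, which is useless in the intended application \pref{thm:regularized_glm}, where $B_{\mathrm{max}}=2C_\sigma R/\lambda$ blows up as $\lambda\to 0$. Peeling geometrically guarantees both that each point is charged only the bound for its smallest enclosing shell (recovering the linear-in-$\nrm*{w}_2$ term) and that the number of shells is only logarithmic in $B_{\mathrm{max}}$ (so the confidence term degrades by a mere $\log\log B_{\mathrm{max}}$ factor rather than anything polynomial in $B_{\mathrm{max}}$ or the dimension).
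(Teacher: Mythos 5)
Your proposal is correct and takes essentially the same route as the paper: the paper also first obtains the fixed-radius bound $\sup_{\nrm*{w}_2\leq B}\nrm*{\grad\poprisk(w)-\grad\emprisk(w)}_2\leq O\prn*{BR^2C_\sigma^2\sqrt{1/n}+C_\sigma R\sqrt{\log(1/\delta)/n}}$ from symmetrization, McDiarmid, and \pref{lem:glm_gradient}, and then peels over geometrically spaced radii with a union bound over $O(\log B_{\mathrm{max}})$ scales, charging each $w$ the bound at its smallest enclosing radius. The only cosmetic difference is that the paper uses base-$e$ radii $B_i=e^{i-1}$ (so $B_i\leq e\nrm*{w}_2$) where you use dyadic shells.
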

\begin{proof}
\pref{eq:uniform_convergence}, \pref{eq:mcdiarmid}, and \pref{lem:glm_gradient} imply that for any fixed $B$, with probability at least $1-\delta$,
\[
\sup_{w:\nrm*{w}_{2}\leq{}B}\nrm*{\grad\poprisk(w)-\grad\emprisk(w)} \leq{} O\prn*{
BR^{2}C_{\sigma}^{2}\sqrt{\frac{1}{n}}
+C_{\sigma}R\sqrt{\frac{\log\prn*{\frac{1}{\delta}}}{n}}
}.
\]
Define $B_{i}=e^{i-1}$ for $1\leq{}i\leq{}\ceil{\log(B_{\mathrm{max}})}+1$. The via a union bound, we have that for all $i$ simultaneously,
\[
\sup_{w:\nrm*{w}_{2}\leq{}B_i}\nrm*{\grad\poprisk(w)-\grad\emprisk(w)} \leq{} O\prn*{
B_{i}R^{2}C_{\sigma}^{2}\sqrt{\frac{1}{n}}
+C_{\sigma}R\sqrt{\frac{\log\prn*{\log(B_{\mathrm{max}})/\delta}}{n}}
}.
\]
In particular, for any fixed $w$ with $1\leq{}\nrm*{w}_2\leq{}B_{\mathrm{max}}$, if we take $i$ to be the smallest index for which $\nrm*{w}_{2}\leq{}B_i$, the expression above implies
\[
\nrm*{\grad\poprisk(w)-\grad\emprisk(w)}_{2} \leq{} O\prn*{
\nrm*{w}_{2}R^{2}C_{\sigma}^{2}\sqrt{\frac{1}{n}}
+C_{\sigma}R\sqrt{\frac{\log\prn*{\log(B_{\mathrm{max}})/\delta}}{n}}
},
\]
since $B_{i}\leq{}e\nrm*{w}_{2}$.
\end{proof}

\paragraph{Analysis of mirror descent for high-dimensional setting.} Here we show that mirror descent obtains optimal excess risk for the norm-based/high-dimensional regime in \pref{thm:glm_main} and \pref{thm:rr_main}.

Our approach is to run mirror descent with $\Psi^{\star}$ as the regularizer. Observe that $\Psi^{\star}$ is $\frac{1}{\beta}$-strongly convex with respect to the dual norm $\nrm*{\cdot}_{\star}$, and that we have $\nrm*{\grad{}\ls(w\midsem{}x,y)}\leq{}2C_{\sigma}R$ for the GLM setting and $\nrm*{\grad{}\ls(w\midsem{}x,y)}\leq{}C_{\rho}R$ for the RR setting. 

Focusing on the GLM, if we take a single pass over the entire dataset $\crl*{(x_t,y_t)}_{t=1}^{n}$ in order, the standard analysis for mirror descent starting at $w_1=0$ with optimal learning rate tuning \citep{hazan2016introduction} guarantees that the following inequality holds deterministically:
\[
\frac{1}{n}\sum_{t=1}^{n}\tri*{\grad{}\ls(w_t\midsem{}x_t,y_t),w_t-w^{\star}}
\leq{} O\prn*{
RBC_{\sigma}\sqrt{\frac{\beta}{n}}
}.
\]
Since each point is visited a single time, this leads to the following guarantee on the population loss in expectation
\[
\En\brk*{\frac{1}{n}\sum_{t=1}^{n}\tri*{\grad{}\poprisk(w_t),w_t-w^{\star}}}
\leq{} O\prn*{
RBC_{\sigma}\sqrt{\frac{\beta}{n}}
}.
\]
Consequently, if we define $\wh{w}$ to be the result of choosing a single time $t\in\brk*{n}$ uniformly at random and returning $w_t$, this implies that
\[
\En\brk*{\tri*{\grad{}\poprisk(\wh{w}),\wh{w}-w^{\star}}}
\leq{} O\prn*{
RBC_{\sigma}\sqrt{\frac{\beta}{n}}
}.
\]
Combining this inequality with \pref{eq:glm_kl_main}, we have
\[
\En\brk*{\poprisk(\wh{w}) - \poprisk(w^{\star})} \leq{} O\prn*{RB
\frac{C^{2}_{\sigma}}{c_{\sigma}}\sqrt{\frac{\beta}{n}}
}.
\]

Likewise, combining the mirror descent upper bound with \pref{eq:kl_rr} leads to a rate of $O\prn*{RB\frac{C_{\rho}^{2}}{c_{\rho}}\sqrt{\frac{\beta}{n}}}$ for robust regression. Thus, when all parameters involved are constant, it suffices to take $n=\frac{1}{\veps^{2}}$ to obtain $O(\veps)$ excess risk in both settings.


\section{Proofs from \pref{sec:nonsmooth}}
\label{app:nonsmooth}

\begin{proof}[\pfref{thm:relu_lb_l2_l2}]

Let $B\in\bbR^{d\times{}d}$ be a matrix for which the $i$th row $B_{i}$ is given by $B_{i}=\frac{1}{\sqrt{d}}(\ones-e_i)$.

We first focus on the more technical case where $n\geq{}d$.

Let $n = N\cdot{}d$ for some odd $N\in\bbN$. We partition time into $d$ consecutive segments: $S_{1}=\crl*{1,\ldots,N}$, $S_{2}=\crl*{N+1,\ldots,2N}$ and on. The sequence of instances $\xr[n]$ we will use will be to set $x_{t}=B_{i}$ for $t\in{}S_i$. Note that $\nrm*{B_i}_2\leq{}1$, so this choice indeed satisfies the boundedness constraint.

For simplicity, assume that $y_t=-1$ for all $t\in\brk*{n}$. 
Then it holds that
\begin{align*}
\En_{\eps}\sup_{w\in\cW}\nrm*{\sum_{t=1}^{n}\eps_{t}\grad{}\ls(w\midsem{}x_t,y_t)}_{2}
&=\En_{\eps}\sup_{w\in\cW}\nrm*{\sum_{t=1}^{n}\eps_{t}\ind\crl*{\tri*{w,x_t}\geq{}0}x_t}_{2} \\
&=\En_{\eps}\sup_{w\in\cW}\nrm*{\sum_{i=1}^{d}\ind\crl*{\tri*{w,B_i}\geq{}0}\sum_{t\in{}S_i}\eps_{t}x_t}_{2}
\intertext{We introduce the notation $\vphi_i=\sum_{t\in{}S_i}\eps_t$.}
&=\En_{\vphi}\sup_{w\in\cW}\nrm*{\sum_{i=1}^{d}\ind\crl*{\tri*{w,B_i}\geq{}0}\vphi_i{}B_i}_{2}\\
&=\En_{\vphi}\sup_{w\in\cW}\nrm*{\sum_{i=1}^{d}\ind\crl*{\tri*{w,B_i}\geq{}0}\vphi_i\frac{1}{\sqrt{d}}(\ones-e_i)}_{2}
\intertext{Using triangle inequality:}
&\geq\En_{\vphi}\sup_{w\in\cW}\nrm*{\sum_{i=1}^{d}\ind\crl*{\tri*{w,B_i}\geq{}0}\vphi_i\frac{1}{\sqrt{d}}\ones}_{2} - \frac{1}{\sqrt{d}}\En_{\vphi}\sum_{i=1}^{d}\abs*{\vphi_i} \\
&=\En_{\vphi}\sup_{w\in\cW}\abs*{\sum_{i=1}^{d}\ind\crl*{\tri*{w,B_i}\geq{}0}\vphi_i} - \frac{1}{\sqrt{d}}\En_{\vphi}\sum_{i=1}^{d}\abs*{\vphi_i}\\
&\geq\En_{\vphi}\sup_{w\in\cW}\abs*{\sum_{i=1}^{d}\ind\crl*{\tri*{w,B_i}\geq{}0}\vphi_i} - O(\sqrt{n}).
\end{align*}
Now, for a given draw of $\vphi$, we choose $w\in\cW$ such that $\sgn(\tri*{w,B_i})=\sgn(\vphi_i)$. The key trick here is that $B$ is invertible, so for a given sign pattern, say $\sigma\in\pmo^{d}$, we can set $\wt{w}=B^{-1}\sigma$ and then $w=\wt{w}/\nrm*{\wt{w}}_{2}$ to achieve this pattern. To see that $B$ is invertible, observe that we can write it as $B=\frac{1}{\sqrt{d}}\prn*{\ones\ones^{\trn} - I}$. The identity matrix can itself be written as $\frac{1}{d}\ones\ones^{\trn} + A_{\perp}$, where $\ones\notin\mathrm{span}(A_{\perp})$, so it can be seen that $B=\frac{1}{\sqrt{d}}\prn*{(1-\frac{1}{d})\ones\ones^{\trn} - A_{\perp}}$, and that the $\ones\ones^{\trn}$ component is preserved by this addition.

We have now arrived at a lower bound of $\En_{\vphi}\abs*{\sum_{i=1}^{d}\ind\crl*{\sgn(\vphi_i)\geq{}0}\vphi_i}$. This value is lower bounded by
\begin{align*}
&\En_{\vphi}\abs*{\sum_{i=1}^{d}\ind\crl*{\sgn(\vphi_i)\geq{}0}\vphi_i} \\
&= \En_{\vphi}\sum_{i=1}^{d}\ind\crl*{\sgn(\vphi_i)\geq{}0}\abs*{\vphi_i}
\intertext{Now, observe that since $N$ is odd we have $\sgn(\vphi_i)\in\pmo$, and so $\ind\crl*{\sgn(\vphi_i)\geq{}0}=(1+\sgn(\vphi_i))/2$. Furthermore, since $\vphi_i$ is symmetric, we may replace $\sgn(\vphi_i)$ with an independent Rademacher random variable $\sigma_i$}
&= \En_{\vphi}\En_{\sigma}\frac{1}{2}\sum_{i=1}^{d}(1+\sigma_i)\abs*{\vphi_i} \\
&= \En_{\vphi}\frac{1}{2}\sum_{i=1}^{d}\abs*{\vphi_i}.
\end{align*}
Lastly, the Khintchine inequality implies that $\En_{\vphi_i}\abs*{\vphi_i}\geq{}\sqrt{N/2}$, so the final lower bound is $\Omega(d\sqrt{N}) = \Omega(\sqrt{dn})$.

In the case where $d\geq{}n$, the argument above easily yields that $\En_{\eps}\sup_{w\in\cW}\nrm*{\sum_{t=1}^{n}\eps_{t}\grad{}\ls(w\midsem{}x_t,y_t)}_{2} = \Omega(n)$.

\end{proof}

\subsection{Proof of \pref{thm:relu_margin}}

\newcommand{\empxi}{\wh{\xi}_{n}}
\newcommand{\popxi}{\xi_{\cD}}

Before proceeding to the proof, let us introduce some auxiliary definitions and results. The following functions will be used throughout the proof. They are related by \pref{lem:phi_convergence}.
\[
\popxi(w,\gamma) = \En_{x \sim \cD}\ind\crl*{\tfrac{\abs*{\tri*{w,x}}}{\nrm*{w}\nrm*{x}}\leq{}\gamma},
\]
\[
\empxi(w,\gamma) = \frac{1}{n}\sum_{t=1}^{n}\ind\crl*{\tfrac{\abs*{\tri*{w,x_t}}}{\nrm*{w}\nrm*{x_t}}\leq{}\gamma}.
\]
\begin{lemma}
\label{lem:phi_convergence}
With probability at least $1-\delta$, simultaneously for all $w\in\cW$ and all $\gamma>0$,
\[
\popxi(w,\gamma) \leq{} \empxi(w,2\gamma) + \frac{4}{\gamma\sqrt{n}} + \sqrt{\frac{2\log\prn*{\log_2(4/\gamma)/\delta}}{n}},
\]
\[
\empxi(w,\gamma) \leq{} \popxi(w,2\gamma) + \frac{4}{\gamma\sqrt{n}} + \sqrt{\frac{2\log\prn*{\log_2(4/\gamma)/\delta}}{n}}.
\]

\end{lemma}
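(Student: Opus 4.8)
The plan is to replace the discontinuous threshold indicators by a Lipschitz ramp, control the resulting Rademacher average by scalar contraction, and then handle the continuum of margin parameters $\gamma$ by a dyadic peeling argument that costs only a $\log\log(1/\gamma)$ factor. Throughout, write $m(w,x)=\abs{\tri{w,x}}/(\nrm{w}\nrm{x})$ for the normalized margin, so that $\popxi(w,\gamma)=\En_x\ind\crl{m(w,x)\leq\gamma}$ and $\empxi(w,\gamma)=\tfrac1n\sum_t\ind\crl{m(w,x_t)\leq\gamma}$.

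First, for each fixed $\gamma$ I would introduce the ramp $\psi_\gamma:\R_{\geq 0}\to[0,1]$ that equals $1$ on $[0,\gamma]$, equals $0$ on $[2\gamma,\infty)$, and interpolates linearly in between; it is $1/\gamma$-Lipschitz and sandwiches the indicators, $\ind\crl{z\leq\gamma}\leq\psi_\gamma(z)\leq\ind\crl{z\leq 2\gamma}$. The sandwich gives $\popxi(w,\gamma)\leq\En_x\psi_\gamma(m(w,x))$ and $\tfrac1n\sum_t\psi_\gamma(m(w,x_t))\leq\empxi(w,2\gamma)$, so both target inequalities reduce to a two-sided uniform deviation bound for the single real-valued class $\crl{x\mapsto\psi_\gamma(m(w,x)):w\in\cW}$, with the factor of $2$ coming entirely from the ramp width.

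Second, for this fixed $\gamma$ I would bound the deviation by symmetrization (\pref{thm:uniform_convergence_general}) together with a bounded-differences step (each $\psi_\gamma$ has range $[0,1]$, so perturbing one sample moves the empirical average and the supremum by at most $1/n$), which contributes the $\sqrt{2\log(1/\delta)/n}$ term. The expected Rademacher complexity is then handled by contraction: since $m$ is invariant to the scale of $w$, I normalize $\nrm{w}_2=1$ and set $\tilde x_t=x_t/\nrm{x_t}_2$, so $m(w,x_t)=\tri{w,\tilde x_t}$ with $\nrm{\tilde x_t}_2=1$; as $z\mapsto\psi_\gamma(\abs z)$ is $1/\gamma$-Lipschitz, \pref{lem:scalar_contraction} yields
\[
\En_\eps\sup_{\nrm{w}_2\leq 1}\sum_{t=1}^n\eps_t\psi_\gamma(\abs{\tri{w,\tilde x_t}})\leq\tfrac1\gamma\En_\eps\sup_{\nrm{w}_2\leq 1}\sum_{t=1}^n\eps_t\tri{w,\tilde x_t}=\tfrac1\gamma\En_\eps\nrm{\sum_{t=1}^n\eps_t\tilde x_t}_2\leq\tfrac{\sqrt n}{\gamma},
\]
where the final step uses \pref{thm:smooth_type} with $\beta=1$ for the $\ls_2$ norm. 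Dividing by $n$ and combining with the factor $4$ from \pref{thm:uniform_convergence_general} produces the $\tfrac{4}{\gamma\sqrt n}$ term, giving the full bound for a single, fixed $\gamma$.

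Third --- and this is the step I expect to be the main obstacle --- I would upgrade the fixed-$\gamma$ bound to one holding simultaneously for all $\gamma\in(0,1]$. I would apply the previous estimate only at the dyadic scales $\gamma_j=2^{-j}$, allocating failure probability $\delta_j\propto\delta/j^2$ so that $\sum_j\delta_j\leq\delta$; since $\log(1/\delta_j)=\log(1/\delta)+2\log j+O(1)$ and $j=\log_2(1/\gamma_j)$, this is precisely what surfaces the $\log_2(4/\gamma)$ inside the logarithm (a $\log\log(1/\gamma)$ overhead). To transfer the dyadic bounds to an arbitrary $\gamma\in(\gamma_{j+1},\gamma_j]$ without extra factors, I would exploit that $\gamma\mapsto\popxi(w,\gamma)$ and $\gamma\mapsto\empxi(w,\gamma)$ are nondecreasing and the ramps are nested: e.g.\ $\popxi(w,\gamma)\leq\popxi(w,\gamma_j)\leq\empxi(w,2\gamma_j)+(\text{errors})$, and $2\gamma_j\leq 4\gamma$, so monotonicity lets the single factor of $2$ in $\empxi(w,2\gamma)$ absorb the dyadic rounding. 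The delicate points are pinning down the allocation $\delta_j$ and the rounding of $\gamma$ to $\gamma_j$ so the error is expressed in terms of the actual $\gamma$ with the stated constants. The second inequality follows by the identical argument with the roles of $\popxi$ and $\empxi$ interchanged.
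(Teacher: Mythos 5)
Your proposal matches the paper's own (sketched) proof essentially step for step: sandwich the threshold indicators between a $1/\gamma$-Lipschitz ramp, apply scalar contraction to reduce to the linear class whose Rademacher complexity is $\sqrt{n}$, add the McDiarmid term, and union-bound over a dyadic grid of $\gamma$ values to get the $\log\log(1/\gamma)$ overhead. You actually supply more detail than the paper does on the dyadic peeling and the rounding of $\gamma$ to grid points (which is where the factors of $2$ and $4$ in the statement come from), so no further comparison is needed.
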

\begin{proof}[Proof sketch for \pref{lem:phi_convergence}]
We only sketch the proof here as it follows standard analysis (see Theorem 5 of \cite{kakade2009complexity}). The key technique is to introduce a Lipschitz function $\zeta_{\gamma}(t)$: 
$$
\zeta_{\gamma}(t) = \begin{cases} 
1 & |t| \leq \gamma \\
2 - |t|/\gamma & \gamma < |t| < 2\gamma \\
0 & |t| \geq 2\gamma
\end{cases}.
$$
Observe that $\zeta_{\gamma}$ satisfies $\ind\crl*{|t| > \gamma}\leq{}\zeta_{\gamma}(t)\leq{}\ind\crl*{|t| > 2\gamma}$ for all $t$. This sandwiching allows us to bound $\sup_{w\in\cW}\crl*{\popxi(w,\gamma) - \empxi(w,2\gamma)}$ (and $\sup_{w\in\cW}\crl*{\empxi(w,\gamma) - \popxi(w,2\gamma)}$ ) by instead bounding the difference between the empirical and population averages of the surrogate $\zeta_{\gamma}$. This is achieved easily using the Lipschitz contraction lemma for Rademacher complexity, and by noting that the Rademacher complexity of the class $\crl*{x\mapsto\tri*{w,x}\mid{}\nrm*{w}_{2}\leq{}1}$ is at most $\sqrt{n}$ whenever data satisfies $\nrm*{x_t}_{2}\leq{}1$ for all $t$. Finally, a union bound over values of $\gamma$ in the range $\brk*{0,1}$ yields the statement.
\end{proof}

\begin{proof}[\pfref{thm:relu_margin}] Let the margin function $\phi$ and $\delta>0$ be fixed. Define functions $\psi(\cdot)$, $\phi_1(\cdot)$, and $\phi_2(\cdot)$ as follows:
\begin{align*}
 \psi(\gamma) &= \frac{4}{\gamma\sqrt{n}} + \sqrt{\frac{2\log\prn*{\log_2(4/\gamma)/\delta}}{n}} \\
 \phi_1(\gamma) &= \phi(2\gamma) + \psi(\gamma) \\
 \phi_2(\gamma) &= \phi(4\gamma) + 2\psi(2\gamma).
\end{align*}
Now, conditioning on the events of \pref{lem:phi_convergence}, we have that with probability at least $1-\delta$,
\begin{align*}
\cW(\phi, \empdist) \subseteq{}\cW(\phi_1, \cD) \subseteq{}\cW(\phi_2, \empdist). \numberthis \label{eq:class_hierarchy}
\end{align*}

Consequently, we have the upper bound
\begin{align*}
\sup_{w\in\cW(\phi,\empdist)}\nrm*{ \nabla \poprisk(w) - \nabla \emprisk(w) }_2 
&\leq{} \sup_{w\in\cW(\phi_1,\cD)}\nrm*{ \nabla \poprisk(w) - \nabla \emprisk(w) }_2 \\
&\leq{} 
4\En_{\eps}\sup_{w\in\cW(\phi_1,\cD)}\nrm*{\frac{1}{n}\sum_{t=1}^{n}\eps_t\grad\ls(w\midsem{}x_t,y_t)} + 4\sqrt{\frac{\log(2/\delta)}{n}} \\
&\leq{} 
4\En_{\eps}\sup_{w\in\cW(\phi_2,\empdist)}\nrm*{\frac{1}{n}\sum_{t=1}^{n}\eps_t\grad\ls(w\midsem{}x_t,y_t)} + 4\sqrt{\frac{\log(2/\delta)}{n}}, \numberthis \label{eq:uniform_convergence_bound}
\end{align*}
where the second inequality holds with probability at least $1- \delta$ using \pref{lem:uniform_convergence}. They key here is that we are able to apply the standard symmetrization result because we have replaced $\cW(\phi,\empdist)$ with a set that does not depend on data. Next, invoking the chain rule (\pref{thm:chain_rule}), we split the Rademacher complexity term above as:
\begin{align}
\En_{\eps}\sup_{w\in\cW(\phi_2,\empdist)}\nrm*{\frac{1}{n}\sum_{t=1}^{n}\eps_t\grad\ls(w\midsem{}x_t,y_t)}
\leq{} 2 \underbrace{\En_{\eps}\sup_{w\in\cW(\phi_2,\empdist)}\frac{1}{n}\sum_{t=1}^{n}\eps_t\ind\crl*{y_t\tri*{w,x_t}\leq{}0}}_{(\blacklozenge)}
+\frac{2}{n} \En_{\eps}\nrm*{\sum_{t=1}^{n}\eps_tx_t}_{2}. \label{eq:rademacher_split}
\end{align}

The second term is controlled by \pref{thm:smooth_type}, which gives $\frac{1}{n} \En_{\eps}\nrm*{\sum_{t=1}^{n}\eps_tx_t}_{2}\leq{} \frac{1}{\sqrt{n}}$.  For the first term, we appeal to the fat-shattering dimension and the $\phi_2$-soft-margin assumption. 

\paragraph{Controlling $(\blacklozenge)$.}
Observe that for any fixed $\tgamma >0$, we can split $(\blacklozenge)$ as
\begin{align*}
\En_{\eps}\sup_{w\in\cW(\phi_2,\empdist)}\frac{1}{n}\sum_{t=1}^{n}\eps_t\ind\crl*{y_t\tri*{w,x_t}\leq{}0}
&\leq{}\underbrace{\En_{\eps}\sup_{w\in\cW(\phi_2,\empdist)}\frac{1}{n}\sum_{t=1}^{n}\eps_t\ind\crl*{y_t\tri*{w,x_t}\leq{}0\wedge\tfrac{\abs*{\tri*{w,x_t}}}{\nrm*{w}_2\nrm*{x_t}_2}\geq{}\tgamma}}_{(\star)} \\
&~~~~+\underbrace{\En_{\eps}\sup_{w\in\cW(\phi_2,\empdist)}\frac{1}{n}\sum_{t=1}^{n}\eps_t\ind\crl*{y_t\tri*{w,x_t}\leq{}0\wedge\tfrac{\abs*{\tri*{w,x_t}}}{\nrm*{w}_2\nrm*{x_t}_2}<\tgamma}}_{(\star\star)}.
\end{align*}
For $(\star\star)$, the definition of $\cW(\phi_2,\empdist)$ implies
\begin{align}
\En_{\eps}\sup_{w\in\cW(\phi_2,\empdist)}\frac{1}{n}\sum_{t=1}^{n}\eps_t\ind\crl*{y_t\tri*{w,x_t}\leq{}0\wedge\tfrac{\abs*{\tri*{w,x_t}}}{\nrm*{w}_2\nrm*{x_t}_2}<\tgamma}
\leq{} 
\sup_{w\in\cW(\phi_2,\empdist)}\frac{1}{n}\sum_{t=1}^{n}\ind\crl*{\tfrac{\abs*{\tri*{w,x_t}}}{\nrm*{w}_2\nrm*{x_t}_2}<\tgamma}
 \leq{} \phi_2(\tgamma). \label{eq:star_bound}
\end{align}

The quantity $(\star)$ can be bounded by writing it as
\[
\En_{\eps}\sup_{v\in{}V}\frac{1}{n}\sum_{t=1}^{n}\eps_tv_t,
\]
where V is a boolean concept class defined as $
V=\crl*{
\prn*{v_1(w), \ldots, v_n(w)}\in\crl*{0,1}^{n}\mid{}w\in\cW(\phi_2,\empdist)
}
$, where $v_i(w) \ldef \ind\crl*{y_i\tfrac{\tri*{w,x_i}}{\nrm*{w}_2\nrm*{x_i}_2}\leq{}0}\cdot \ind\crl*{\tfrac{\abs*{\tri*{w,x_i}}}{\nrm*{w}_2\nrm*{x_i}_2}\geq{}\tgamma}$. The standard Massart finite class lemma (e.g. \citep{mohri2012foundations}) implies
\[
\En_{\eps}\sup_{v\in{}V}\frac{1}{n}\sum_{t=1}^{n}\eps_tv_t \leq{} \sqrt{\frac{2\log\abs*{V}}{n}}.
\]
All that remains is to bound the cardinality of $V$. To this end, note that we can bound $\abs*{V}$ by first counting the number of realizations of  $\prn*{\ind\crl*{\tfrac{\abs*{\tri*{w,x_1}}}{\nrm*{w}_2\nrm*{x_1}_2}\geq{}\tgamma},\ldots,\ind\crl*{\tfrac{\abs*{\tri*{w,x_n}}}{\nrm*{w}_2\nrm*{x_n}_2}\geq{}\tgamma}}$ as we vary $w \in \cW(\phi_2,\empdist)$. This is at most ${n \choose n \phi_2(\tgamma)} \leq n^{n \phi_2(\tgamma)}$, since the number of points with margin smaller than $\tgamma$ is bounded by  $n \phi_2(\tgamma)$ via  \pref{eq:class_hierarchy}. 

Next, we consider only the points $x_t$ for which $\ind\crl*{\tfrac{\abs*{\tri*{w,x_t}}}{\nrm*{w}_2\nrm*{x_1}_2}\geq{}\tgamma} = 1$.
On these points, on which we are guaranteed to have a margin at least $\tgamma$, we count the number of realizations of $\ind\crl*{y_t\tfrac{\tri*{w,x_t}}{\nrm*{w}_2\nrm*{x_t}_2}\leq{}0}$. This is bounded by $n^{O\prn*{\frac{1}{\tgamma^2}}}$ due to the Sauer-Shelah lemma (e.g. \cite{shalev2014understanding}). The fat-shattering dimension at margin $\tgamma$ coincides with the notion of shattering on these points, and \cite{bartlett1999generalization} bound the fat-shattering dimension at scale $\tgamma$ by $O\prn*{\frac{1}{\tgamma^2}}$. Hence, the cardinality of $V$ is bounded by  
\begin{align}
|V| \le n^{n \phi_2(\tgamma)} n^{O\prn*{\frac{1}{\tgamma^2}}}.  \label{eq:V_bound}
\end{align}

\paragraph{Final bound.} Assembling equations \pref{eq:uniform_convergence_bound}, \pref{eq:rademacher_split}, \pref{eq:star_bound}, and \pref{eq:V_bound}  yields
\begin{align*}
\sup_{w\in\cW(\phi,\empdist)}\nrm*{ \nabla \poprisk(w) - \nabla \emprisk(w) }_2 & \le O\left(\phi_2(\tgamma) + \sqrt{\frac{\log|V|}{n}}+ \sqrt{\frac{\log(1/\delta)}{n}}\right)\\
& \le O\left(\phi_2(\tgamma) + \sqrt{ \prn*{\phi_2(\tgamma) + \frac{1}{\tgamma^2n}} \log(n)}+ \sqrt{\frac{\log(1/\delta)}{n}}\right)\\
& \le \tilde{O}\left(\sqrt{ {\phi_2(\tgamma)}} + \frac{1}{\tgamma \sqrt{n}}+ \sqrt{\frac{\log(1/\delta)}{n}}\right)\\
&\leq{} \tilde{O}\left(\sqrt{\phi(4\tgamma)} + \frac{1}{\tgamma}\sqrt{\frac{\log\prn*{1/\delta}}{n}} 
+ \frac{1}{\sqrt{\tgamma} n^{1/4}}\right).
\end{align*}
The chain of inequalities above follows by observing that $\phi_2(\tgamma) = \phi(4\tgamma) + 2 \psi(2\tgamma)$ is bounded and thus  $\phi_2(\gamma) \leq c \sqrt{\phi_2(\gamma)}$ for some constant $c$ independent of $\tgamma$. We get the desired result by optimizing over $\tgamma$. 
\end{proof}

\section{Additional Results}
\label{app:structural}

\begin{theorem}[Second-order chain rule for Rademacher complexity]
\label{thm:chain_rule_hessian}
Let two sequences of twice-differentiable functions $G_{t}:\bbR^{K}\to\bbR$ and $F_t:\bbR^{d}\to\bbR^{K}$ be given, and let $F_{t,i}(w)$ denote the $i$th of coordinate of $F_{t}(w)$. Suppose there are constants  $L_{F,1}$, $L_{F,2}$, $L_{G,1}$, $L_{G,2}$ such that for all $ 1 \leq t \leq n$, $\nrm*{\grad{}G_t}_{2}\leq{}L_{G,1}$, $\sqrt{\sum_{i,j}\nrm*{(\grad{}F_{t,i})(\grad{}F_{t,j})^{\trn}}_{\sigma}^{2}}\leq{}L_{F,1}$, $\nrm*{\grad^{2}G_t}_{2}\leq{}L_{G,2}$ and $\sqrt{\sum_{k=1}^{K}\nrm*{\grad^{2}F_{t,k}}_{\sigma}^{2}}\leq{}L_{F,2}$. Then, 
 
{\small 
\begin{align*}
\frac{1}{2}\En_{\eps}\sup_{w\in\cW}\nrm*{\sum_{t=1}^{n}\eps_{t}\grad{}^{2}(G_t(F_t(w)))}_{\sigma} &\leq L_{F,1}\En_{\beps}\sup_{w\in\cW}\sum_{t=1}^{n}\tri*{\beps_t,\grad{}G_t(F_t(w))}\\
& ~~~~+ L_{G,1}\En_{\beps}\sup_{w\in\cW}\nrm*{\sum_{t=1}^{n}\sum_{i=1}^{K}\beps_{t,k}\grad^2F_{t,i}(w)}_{\sigma} \\ 
& ~~~~+  L_{F,2}\En_{\tilde{\beps}}\sup_{w\in\cW}\sum_{t=1}^{n}\tri*{\tilde{\beps}_t,\grad^{2}G_t(F_t(w))} \\  
& ~~~~+  L_{G,2} \En_{\tilde{\beps}}\sup_{w\in\cW}\nrm*{\sum_{t=1}^{n}\sum_{i=1,j=1}^{K}\tilde{\beps}_{t,i,j}\grad{}F_{t,i}(w){\grad{}F_{t,j}(w)}^{\trn}}_{\sigma},
\end{align*}
}

where for all $i\in\brk*{K}$, $\nabla F_{t, i}(w)$ denotes the $i^{th}$ column of the Jacobian matrix $\nabla F_t \in \bbR^{d \times K}$,  $\nabla^2 F_{t, i}\in\bbR^{d\times{}d}$  denotes the $i$th slice of the Hessian operator $\nabla^2 F_t \in \bbR^{d \times d \times K}$, and $\beps \in \crl*{\pm 1}^{n, k}$ and $\tilde{\beps} \in \crl*{\pm 1}^{n \times K \times K}$ are matrices of  Rademacher random variables. 
\end{theorem}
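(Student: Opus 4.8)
The plan is to lift the proof of the first-order chain rule (\pref{thm:chain_rule}) one order up, replacing gradients (vectors, measured in $\nrm*{\cdot}_\star$ on $\bbR^d$) by Hessians (matrices, measured in $\nrm*{\cdot}_\sigma$ on $\bbR^{d\times d}$). The engine is the second-order chain rule for differentiation: for $g_t(w)=G_t(F_t(w))$,
\[
\grad^2\prn*{G_t(F_t(w))} = \underbrace{\sum_{i=1}^{K}(\grad G_t)_i(F_t(w))\,\grad^2 F_{t,i}(w)}_{A_t(w)} \;+\; \underbrace{\sum_{i,j=1}^{K}(\grad^2 G_t)_{ij}(F_t(w))\,\grad F_{t,i}(w)\grad F_{t,j}(w)^\trn}_{B_t(w)}.
\]
The matrix $A_t$ pairs the gradient of $G_t$ with the coordinate Hessians of $F_t$, while $B_t$ pairs the Hessian of $G_t$ with the rank-one outer products of the Jacobian columns of $F_t$. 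By the triangle inequality for $\nrm*{\cdot}_\sigma$ and subadditivity of $\sup_w$, it suffices to bound $\tfrac12\En_\eps\sup_w\nrm*{\sum_t\eps_t A_t(w)}_\sigma$ and $\tfrac12\En_\eps\sup_w\nrm*{\sum_t\eps_t B_t(w)}_\sigma$ separately; the first yields the two terms built from $\grad G_t\circ F_t$ and the Hessians $\grad^2 F_{t,i}$, the second the two terms built from $\grad^2 G_t\circ F_t$ and the outer products.

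For each piece I would linearize the spectral norm by its dual, $\nrm*{M}_\sigma=\sup_{\nrm*{V}_\star\le 1}\tri*{M,V}$, where $\nrm*{\cdot}_\star$ is the nuclear norm and $\tri*{M,V}=\Tr(M^\trn V)$. Writing $\tri*{A_t(w),V}=\tri*{\grad G_t(F_t(w)),\,\prn*{\tri*{\grad^2 F_{t,i}(w),V}}_{i}}$ exhibits the summand as the bilinear form $h(a,b)=\tri*{a,b}$ on two $\bbR^K$-valued blocks: $f_1(w)=\grad G_t(F_t(w))$, which does not involve the auxiliary variable $V$, and $f_2(w,V)=\prn*{\tri*{\grad^2 F_{t,i}(w),V}}_{i\in[K]}$, which does. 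I then apply the block-wise contraction lemma (\pref{lem:block_contraction}) with $M=2$ blocks, whose prefactor $\sqrt{2\cdot 2}=2$ cancels the leading $\tfrac12$. The $B_t$ piece is identical except the two blocks live in $\bbR^{K\times K}\cong\bbR^{K^2}$ — block one is $\grad^2 G_t(F_t(w))$, block two is $\prn*{\tri*{\grad F_{t,i}(w)\grad F_{t,j}(w)^\trn,V}}_{i,j}$ — which is exactly why the accompanying Rademacher tensor $\tilde{\beps}$ is indexed by $[n]\times[K]\times[K]$.

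It then remains to re-identify the four contraction outputs with the right-hand side and read off the constants. The $G$-derivative block of each contraction directly gives $\En_{\beps}\sup_w\sum_t\tri*{\beps_t,\grad G_t(F_t(w))}$ and $\En_{\tilde{\beps}}\sup_w\sum_t\tri*{\tilde{\beps}_t,\grad^2 G_t(F_t(w))}$. The $F$-derivative block gives $\En_{\beps}\sup_{w,V}\tri*{\sum_t\sum_i\beps_{t,i}\grad^2 F_{t,i}(w),\,V}$ and its outer-product analogue; since $f_1$ is independent of $V$ its supremum over $V$ is vacuous, and carrying the supremum over $\nrm*{V}_\star\le1$ back inside collapses these to the spectral-norm complexities $\En_{\beps}\sup_w\nrm*{\sum_t\sum_i\beps_{t,i}\grad^2 F_{t,i}(w)}_\sigma$ and $\En_{\tilde{\beps}}\sup_w\nrm*{\sum_t\sum_{i,j}\tilde{\beps}_{t,i,j}\grad F_{t,i}(w)\grad F_{t,j}(w)^\trn}_\sigma$. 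The prefactors are the block Lipschitz constants of $h(a,b)=\tri*{a,b}$: the $a$-block is $\nrm*{b}_2$-Lipschitz and the $b$-block is $\nrm*{a}_2$-Lipschitz. Via the spectral–nuclear duality $\abs*{\tri*{M,V}}\le\nrm*{M}_\sigma\nrm*{V}_\star$ one gets $\nrm*{\prn*{\tri*{\grad^2 F_{t,i},V}}_i}_2\le\sqrt{\sum_i\nrm*{\grad^2 F_{t,i}}_\sigma^2}=L_{F,2}$ for the $\grad G_t\circ F_t$ block and $\nrm*{\prn*{\tri*{\grad F_{t,i}\grad F_{t,j}^\trn,V}}_{i,j}}_2\le\sqrt{\sum_{i,j}\nrm*{\grad F_{t,i}\grad F_{t,j}^\trn}_\sigma^2}=L_{F,1}$ for the $\grad^2 G_t\circ F_t$ block, while the $G$-derivative blocks contribute $\nrm*{\grad G_t}_2\le L_{G,1}$ and $\nrm*{\grad^2 G_t}_2\le L_{G,2}$.

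The main obstacle is bookkeeping rather than any single sharp inequality: one must thread the auxiliary matrix $V$ through the contraction so that it occupies exactly one block (so its supremum reconstructs the spectral-norm complexity, precisely as the dual vector $v$ did in the first-order proof), and one must match each block Lipschitz constant to its companion complexity through the $\tri*{M,V}\le\nrm*{M}_\sigma\nrm*{V}_\star$ bound — the step that pins each of $L_{F,1},L_{F,2},L_{G,1},L_{G,2}$ to the correct term. A routine preliminary check is that twice-differentiability of $G_t$ and $F_t$ legitimizes the second-order chain rule used at the outset.
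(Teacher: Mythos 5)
Your proof is correct and follows essentially the same route as the paper's: decompose $\grad^{2}(G_t\circ F_t)$ via the second-order chain rule into the $\grad G_t$--$\grad^2F_{t,i}$ pairing and the $\grad^2G_t$--$\grad F_{t,i}\grad F_{t,j}^{\trn}$ pairing, linearize the spectral norm with an auxiliary dual variable occupying one block, and apply the two-block contraction lemma (\pref{lem:block_contraction}) to each piece so that the factor $\sqrt{2\cdot 2}=2$ absorbs the leading $\tfrac12$. The only stylistic difference is that you dualize $\nrm*{\cdot}_{\sigma}$ against the nuclear ball, $\nrm*{M}_{\sigma}=\sup_{\nrm*{V}_{\star}\leq 1}\tri*{M,V}$, whereas the paper writes $\nrm*{M}_{\sigma}=\sup_{\nrm*{u}_{2}\leq 1}u^{\trn}Mu$ and relies on symmetry of the Hessian; the two are interchangeable here (the paper's $u$ is the special case $V=uu^{\trn}$), and your version is if anything the more careful one.

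One substantive remark on bookkeeping: your argument attaches $L_{F,2}$ (the bound on $\sqrt{\sum_{k}\nrm*{\grad^{2}F_{t,k}}_{\sigma}^{2}}$) to the $\tri*{\beps_t,\grad G_t(F_t(w))}$ term and $L_{F,1}$ (the bound on the outer products) to the $\tri*{\tilde\beps_t,\grad^{2}G_t(F_t(w))}$ term, which is the opposite of what the theorem statement displays. Your assignment is the correct one: in the bilinear form $h(a,b)=\tri*{a,b}$ the Lipschitz constant in the $a$-block is the norm of the companion $b$-block, so the complexity of $\grad G_t\circ F_t$ must be weighted by the bound on $\prn*{\tri*{\grad^{2}F_{t,i},V}}_{i}$, namely $L_{F,2}$, and symmetrically for the other piece. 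The theorem statement (and the corresponding lines of the paper's proof) swap the two $L_F$ subscripts; the downstream application in \pref{lem:glm_hessian}, where $L_{F,1}=R^{2}$ multiplies the complexity of $G''_t$ and $L_{F,2}=0$ annihilates the $\grad^{2}F$ term, confirms that your pairing is the intended one. Treat this as a typo in the statement that your derivation silently corrects, not as a defect of your argument.
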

As an application of \pref{thm:chain_rule_hessian}, we give a simple proof of dimension-independent Rademacher bound for the generalized linear model setting.
\begin{lemma}
\label{lem:glm_hessian}
Assume in addition to \pref{ass:glm} assume that $\abs*{\sigma'''(s)}\leq{}C_{\sigma}$ for all $s\in\cS$, and suppose $\nrm*{\cdot}$ is any $\beta$-smooth norm.
Then the empirical loss Hessian for the generalized linear model setting enjoys the normed Rademacher complexity bound,
\begin{equation}
\En_{\eps}\sup_{w\in\cW}\nrm*{\sum_{t=1}^{n}\eps_{t}\grad{}^{2}\ls(w\midsem{}x_t,y_t)}_{\sigma}
\leq{} O\prn*{\prn*{BR^{3}C_{\sigma}^{2}\sqrt{\beta} + C^2_{\sigma}R^2\sqrt{\log(d)}}\sqrt{n}}.
\end{equation}
\end{lemma}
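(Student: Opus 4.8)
The plan is to instantiate the second-order chain rule, \pref{thm:chain_rule_hessian}, with inner dimension $K=1$. Writing $G_t(s)=\prn*{\sigma(s)-y_t}^{2}$ and $F_t(w)=\tri*{w,x_t}$, we have $\ls(w\midsem{}x_t,y_t)=G_t(F_t(w))$, and the ordinary chain rule gives $\grad^{2}\ls(w\midsem{}x_t,y_t)=G_t''(\tri*{w,x_t})\,x_tx_t^{\trn}$, a rank-one matrix modulated by the $w$-dependent scalar $G_t''(\tri*{w,x_t})$. The crucial structural observation is that $F_t$ is \emph{linear}, so $\grad^{2}F_{t}\equiv 0$; hence $L_{F,2}=\sqrt{\sum_{k}\nrm*{\grad^{2}F_{t,k}}_{\sigma}^{2}}=0$, and the $\grad^{2}F_t$-weighted term (the second term of \pref{thm:chain_rule_hessian}) vanishes as well. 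Only the first and fourth terms survive, so after absorbing the factor $\tfrac12$ it suffices to bound
\[
L_{F,1}\,\En_{\beps}\sup_{w\in\cW}\sum_{t=1}^{n}\beps_t\,G_t'(\tri*{w,x_t})
\qquad\text{and}\qquad
L_{G,2}\,\En_{\tilde{\beps}}\nrm*{\sum_{t=1}^{n}\tilde{\beps}_t\,x_tx_t^{\trn}}_{\sigma},
\]
where I have used that, for $K=1$, the matrices $\grad F_{t,1}(w)\grad F_{t,1}(w)^{\trn}=x_tx_t^{\trn}$ are independent of $w$, so the supremum in the fourth term drops out. Here $L_{F,1}=\nrm*{x_tx_t^{\trn}}_{\sigma}=\nrm*{x_t}_{2}^{2}\le R^{2}$ and $L_{G,2}=\sup_{s}\abs*{G_t''(s)}$; using $\sigma',\sigma''\le C_{\sigma}$ (\pref{ass:glm}) together with $\abs*{\sigma(s)-y_t}\le 1$ gives $\abs*{G_t''(s)}\le 2\sigma'(s)^{2}+2\abs*{\sigma(s)-y_t}\abs*{\sigma''(s)}\le 4C_{\sigma}^{2}$, so $L_{G,2}=O(C_{\sigma}^{2})$.

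For the first surviving term I would apply the scalar contraction lemma (\pref{lem:scalar_contraction}): since $G_t'$ has derivative $G_t''$, it is $L_{G,2}$-Lipschitz, so $\En_{\beps}\sup_{w}\sum_t\beps_t G_t'(\tri*{w,x_t})\le L_{G,2}\,\En_{\beps}\sup_{w}\sum_t\beps_t\tri*{w,x_t}=L_{G,2}\,B\,\En_{\beps}\nrm*{\sum_t\beps_t x_t}$, the last equality by \Holder{} and $\nrm*{w}_{\star}\le B$. Since $\nrm*{\cdot}$ is $\beta$-smooth, \pref{thm:smooth_type} gives $\En_{\beps}\nrm*{\sum_t\beps_t x_t}\le\sqrt{\beta\sum_t\nrm*{x_t}^{2}}\le R\sqrt{\beta n}$. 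Multiplying by $L_{F,1}\le R^{2}$ produces a contribution of $O\prn*{BR^{3}C_{\sigma}^{2}\sqrt{\beta n}}$, matching the first term of the claim.

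For the second surviving term, the matrices $x_tx_t^{\trn}$ live in $(\bbR^{d\times d},\nrm*{\cdot}_{\sigma})$, which by \pref{fact:smooth} satisfies the precondition of \pref{thm:smooth_type} with $\beta=O(\log d)$. Hence $\En_{\tilde{\beps}}\nrm*{\sum_t\tilde{\beps}_t x_tx_t^{\trn}}_{\sigma}\le\sqrt{O(\log d)\sum_t\nrm*{x_tx_t^{\trn}}_{\sigma}^{2}}=\sqrt{O(\log d)\sum_t\nrm*{x_t}_{2}^{4}}\le R^{2}\sqrt{O(n\log d)}$, and multiplying by $L_{G,2}=O(C_{\sigma}^{2})$ gives $O\prn*{C_{\sigma}^{2}R^{2}\sqrt{n\log d}}$. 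Summing the two contributions yields the stated bound.

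The only genuinely delicate step is the second term: it is the sole source of dimension dependence, which enters only logarithmically through the spectral-norm smoothness constant $\beta=O(\log d)$, and controlling a Rademacher average of rank-one matrices in spectral norm is exactly what the matrix smoothness fact buys us. Everything else is a substitution of the bounds from \pref{ass:glm} into the four structural constants of \pref{thm:chain_rule_hessian}; the extra hypothesis $\abs*{\sigma'''}\le C_{\sigma}$ serves to make $G_t''=\grad^{2}G_t$ itself Lipschitz, but in this linear-$F_t$ instantiation it is not binding, since the only term whose control would require it is weighted by $L_{F,2}=0$ and therefore disappears.
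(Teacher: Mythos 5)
There is a genuine gap: you have attached the surviving ``scalar class'' term of \pref{thm:chain_rule_hessian} to the wrong derivative of $G_t$. For $K=1$ and linear $F_t$ the Hessian of the composition is $\grad^{2}\ls(w\midsem{}x_t,y_t)=G_t''(\tri*{w,x_t})\,x_tx_t^{\trn}$; it does not involve $G_t'$ at all, and its entire dependence on $w$ is through the scalar $G_t''(\tri*{w,x_t})$. Consequently the term of the chain rule that survives with coefficient $R^{2}$ is the one measuring the Rademacher complexity of the class $w\mapsto \grad^{2}G_t(F_t(w))$, i.e.\ of $G_t''(\tri*{w,\cdot})$ --- not of $G_t'(\tri*{w,\cdot})$ as in your decomposition. (Tracing the proof of \pref{thm:chain_rule_hessian}: the $\grad G_t$ complexity arises from the part of the Hessian paired with $\grad^{2}F_t(w)[u,u]$, whose norm vanishes for linear $F_t$, while the $\grad^{2}G_t$ complexity is paired with $\sup_{u}\nrm*{(u^{\trn}\grad F_{t,i}\grad F_{t,j}^{\trn}u)_{i,j}}_2\le L_{F,1}=R^{2}$; the displayed theorem statement effectively swaps the labels $L_{F,1}$ and $L_{F,2}$, and you have followed the swapped labels literally.) A sanity check that your version cannot hold as a general inequality: if $G_t'$ were tiny but rapidly oscillating, so that $G_t''$ is $O(1)$ but highly variable in $w$, your right-hand side would be $o(n)$ while the left-hand side can be $\Theta(n)$.

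This is exactly why the lemma adds the hypothesis $\abs*{\sigma'''(s)}\le C_\sigma$, which you declare non-binding: to contract the class $w\mapsto G_t''(\tri*{w,x_t})$ down to the linear class via \pref{lem:scalar_contraction} one needs $G_t''$ to be Lipschitz, i.e.\ $\abs*{G_t'''}\le 6C_\sigma^{2}$, and that requires the third-derivative bound on $\sigma$. Once this correction is made the computation proceeds as you describe: the paper bounds $2R^{2}\En_{\eps}\sup_{w}\sum_t\eps_tG_t''(\tri*{w,x_t})\le 12R^{2}C_\sigma^{2}B\,\En_{\eps}\nrm*{\sum_t\eps_tx_t}\le O(BR^{3}C_\sigma^{2}\sqrt{\beta n})$, and your treatment of the spectral-norm term $C_\sigma^{2}\En\nrm*{\sum_t\eps_tx_tx_t^{\trn}}_\sigma=O(C_\sigma^{2}R^{2}\sqrt{n\log d})$ is correct and identical to the paper's. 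Your final bound coincides with the paper's only because $G_t'$ and $G_t''$ happen to have Lipschitz constants of the same order $C_\sigma^{2}$ for this model; the step that produces it is not justified as written.
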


It is easy to see that the same approach leads to a normed Rademacher complexity bound for the Hessian in the robust regression setting as well. We leave the proof as an exercise.
\begin{lemma}
\label{lem:rr_hessian}
Assume in addition to \pref{ass:rr} that $\abs*{\rho'''(s)}\leq{}C_{\rho}$ for all $s\in\cS$, and suppose $\nrm*{\cdot}$ is any $\beta$-smooth norm.
Then the empirical loss Hessian for the robust regression setting enjoys the normed Rademacher complexity bound:
\begin{equation}
\En_{\eps}\sup_{w\in\cW}\nrm*{\sum_{t=1}^{n}\eps_{t}\grad{}^{2}\ls(w\midsem{}x_t,y_t)}_{\sigma}
\leq{} O\prn*{\prn*{BR^{3}C_{\rho}\sqrt{\beta} + C_{\rho}R^2\sqrt{\log(d)}}\sqrt{n}}.
\end{equation}
\end{lemma}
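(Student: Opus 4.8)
The plan is to apply the second-order chain rule (\pref{thm:chain_rule_hessian}) in exactly the same way as for the generalized linear model (\pref{lem:glm_hessian}), exploiting that the robust regression loss is once again a scalar link composed with a linear predictor. Write $\ls(w\midsem{}x_t,y_t)=G_t(F_t(w))$ with $G_t(s)=\rho(s-y_t)$ and $F_t(w)=\tri*{w,x_t}$, so that $K=1$. Then $\grad{}G_t(s)=\rho'(s-y_t)$ and $\grad^{2}G_t(s)=\rho''(s-y_t)$, and \pref{ass:rr}(a) supplies the bounds $L_{G,1}=C_{\rho}$ and $L_{G,2}=C_{\rho}$. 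Since $F_t$ is linear we have $\grad{}F_{t,1}(w)=x_t$ and $\grad^{2}F_{t,1}(w)=0$; hence $L_{F,1}=\nrm*{x_t x_t^{\trn}}_{\sigma}=\nrm*{x_t}_{2}^{2}\le R^{2}$ and, crucially, $L_{F,2}=0$.

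Because $L_{F,2}=0$ and every $\grad^{2}F_{t,i}=0$, the second and third terms on the right-hand side of \pref{thm:chain_rule_hessian} vanish identically, leaving only two terms to bound. (This is also why the stated hypothesis $\abs*{\rho'''(s)}\le C_{\rho}$ plays no essential role here: it would only be needed to contract the vanishing $\grad^{2}G_t$-term, and is retained purely for symmetry with the GLM statement.) The surviving inequality reads
\[
\tfrac{1}{2}\En_{\eps}\sup_{w\in\cW}\nrm*{\sum_{t=1}^{n}\eps_{t}\grad^{2}\ls(w\midsem{}x_t,y_t)}_{\sigma}
\le R^{2}\,\En_{\beps}\sup_{w\in\cW}\sum_{t=1}^{n}\beps_{t}\,\rho'(\tri*{w,x_t}-y_t)
+ C_{\rho}\,\En_{\beps}\nrm*{\sum_{t=1}^{n}\beps_{t}\,x_t x_t^{\trn}}_{\sigma}.
\]

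For the first term I would apply scalar contraction (\pref{lem:scalar_contraction}): since $\abs*{\rho''}\le C_{\rho}$ the map $s\mapsto\rho'(s-y_t)$ is $C_{\rho}$-Lipschitz, so the term is at most $R^{2}C_{\rho}\,\En_{\beps}\sup_{w\in\cW}\sum_t\beps_t\tri*{w,x_t}=R^{2}C_{\rho}B\,\En_{\beps}\nrm*{\sum_t\beps_t x_t}$, passing to the primal norm via $\sup_{\nrm*{w}_{\star}\le B}$. Then \pref{thm:smooth_type} with the $\beta$-smooth norm gives $\En_{\beps}\nrm*{\sum_t\beps_t x_t}\le\sqrt{\beta\sum_t\nrm*{x_t}^2}\le R\sqrt{\beta n}$, for a contribution of order $BR^{3}C_{\rho}\sqrt{\beta n}$. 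For the second (matrix) term I would invoke the $O(\log d)$-smoothness of the spectral norm from \pref{fact:smooth} and apply \pref{thm:smooth_type} to the rank-one summands $x_t x_t^{\trn}$, yielding $\En_{\beps}\nrm*{\sum_t\beps_t x_t x_t^{\trn}}_{\sigma}\le\sqrt{O(\log d)\sum_t\nrm*{x_t x_t^{\trn}}_{\sigma}^2}=\sqrt{O(\log d)\sum_t\nrm*{x_t}_2^4}\le R^{2}\sqrt{\log d\cdot n}$, so this term contributes $C_{\rho}R^{2}\sqrt{\log d\cdot n}$. Summing the two contributions and absorbing the factor $\tfrac12$ into $O(\cdot)$ gives the claimed bound.

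The one genuine bookkeeping point — and the step I would double-check most carefully — is the norm mismatch between $L_{F,1}=\nrm*{x_t x_t^{\trn}}_{\sigma}=\nrm*{x_t}_2^2$, which is intrinsically Euclidean, and the $\beta$-smooth norm in which the data radius $R$ is measured in the first term; both must be reconciled against a single radius bound $\nrm*{x_t}_2,\nrm*{x_t}\le R$, exactly as in the GLM Hessian proof. Everything else is a mechanical instantiation of the template used for \pref{lem:glm_gradient} and \pref{lem:glm_hessian}, with the single simplification that $\rho',\rho''$ enter linearly (constant $C_{\rho}$) rather than quadratically, which is precisely why the final bound carries $C_{\rho}$ in place of the $C_{\sigma}^2$ factors of the GLM case.
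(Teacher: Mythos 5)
Your final bound is correct and your overall skeleton (second-order chain rule, then scalar contraction plus a smooth-norm bound for $\sum_t\eps_tx_tx_t^\trn$) is exactly the template the paper intends, but your specialization of \pref{thm:chain_rule_hessian} contains a genuine error that leads you to a false conclusion about the hypotheses. For linear $F_t$ the Hessian is $\grad^{2}\ls(w\midsem{}x_t,y_t)=\rho''(\tri*{w,x_t}-y_t)\,x_tx_t^{\trn}$: it involves only $\rho''$, so no correct decomposition can leave you with the complexity of the class $w\mapsto\rho'(\tri*{w,x_t}-y_t)$. The surviving scalar term is the one with $\grad^{2}G_t$, multiplied by $L_{F,1}\leq R^{2}$. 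To see this, check the block-contraction step in the proof of \pref{thm:chain_rule_hessian}: in the term $\tri*{\grad{}F_t(w),u}^{\trn}\,\grad^{2}G_t(F_t(w))\,\tri*{\grad{}F_t(w),u}$, the Lipschitz constant in the $\grad^{2}G_t$ block is the bound on $\sqrt{\sum_{i,j}(u^{\trn}\grad{}F_{t,i}\grad{}F_{t,j}^{\trn}u)^{2}}\leq L_{F,1}$, not $L_{F,2}$; the theorem statement's pairing of $L_{F,1}$ with $\grad{}G_t$ and $L_{F,2}$ with $\grad^{2}G_t$ is a label swap, as the paper's own application in \pref{lem:glm_hessian} confirms---there they set $L_{F,2}=0$ and yet retain the term $2R^{2}\En_{\eps}\sup_{w}\sum_t\eps_tG''_t(\tri*{w,x_t})$, which they then contract using $\abs*{G'''_t}\leq 6C_{\sigma}^{2}$. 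You instead read the statement literally, kept a $\rho'$ term, and concluded that the assumption $\abs*{\rho'''(s)}\leq C_{\rho}$ ``plays no essential role.'' That is wrong: the correct surviving inequality is
\[
\frac{1}{2}\En_{\eps}\sup_{w\in\cW}\nrm*{\sum_{t=1}^{n}\eps_{t}\grad^{2}\ls(w\midsem{}x_t,y_t)}_{\sigma}
\leq R^{2}\,\En_{\eps}\sup_{w\in\cW}\sum_{t=1}^{n}\eps_{t}\,\rho''(\tri*{w,x_t}-y_t)
+ C_{\rho}\,\En_{\eps}\nrm*{\sum_{t=1}^{n}\eps_{t}x_tx_t^{\trn}}_{\sigma},
\]
and contracting the first term to the linear class via \pref{lem:scalar_contraction} requires precisely the Lipschitz constant of $\rho''$, i.e.\ $\sup_{s\in\cS}\abs*{\rho'''(s)}\leq C_{\rho}$. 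That is the sole reason the lemma adds the third-derivative hypothesis (exactly as \pref{lem:glm_hessian} adds $\abs*{\sigma'''}\leq C_{\sigma}$); without it, the class $\crl*{w\mapsto\rho''(\tri*{w,x_t}-y_t)}$ with merely bounded $\rho''$ need not have $O(\sqrt{n})$ Rademacher complexity, and the argument breaks.

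Everything downstream in your write-up is fine, and your answer coincidentally lands on the right constants only because $\rho'$ and $\rho''$ happen to share the bound $C_{\rho}$ and (under the correct hypotheses) the same Lipschitz constant $C_{\rho}$: after the repaired contraction the first term gives $2R^{2}C_{\rho}B\,\En_{\eps}\nrm*{\sum_t\eps_tx_t}\leq O(BR^{3}C_{\rho}\sqrt{\beta n})$, and your matrix-term computation via \pref{fact:smooth} and \pref{thm:smooth_type}, yielding $C_{\rho}R^{2}\sqrt{\log d\cdot n}$, is correct as written (your bookkeeping remark about reconciling the Euclidean quantity $\nrm*{x_t}_2$ with the $\beta$-smooth-norm radius is also apt). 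So the fix is local: replace $\rho'$ by $\rho''$ in your displayed inequality, contract it using $\abs*{\rho'''}\leq C_{\rho}$, and delete the parenthetical asserting that the third-derivative assumption is retained ``purely for symmetry''---it is load-bearing.
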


\begin{proof}[\pfref{thm:chain_rule_hessian}]

We start by writing
\begin{equation}
\En_{\eps}\sup_{w\in\cW}\nrm*{\sum_{t=1}^{n}\eps_{t}\grad^2{}(G_t(F_t(w)))}_\sigma
= \En_{\eps}\sup_{w\in\cW}\sup_{\substack{u\in \bbR^d \\ \nrm*{u}_2 \leq{}1}}  
\sum_{t=1}^{n}\eps_{t}u^{\trn{}} \grad^2{}(G_t(F_t(w))) u. \label{eq:hessian_norm_vectorization}
\end{equation}

Using the chain rule for differentiation, we have for any $u \in \bbR^n$
{\small 
\begin{align*}
 u^{\trn{}} \grad^2{}(G_t(F_t(w))) u &= \tri*{\nabla F_t(w), u}^{\trn{}} ~\nabla^2 G_t(F_t(w))~ \tri*{\nabla F_t(w), u} + \tri*{\nabla G_t(F_t(w)), \nabla^2F_t(w)\brk*{u, u}},
\end{align*}
}
where $\nabla G_t(F_t(w))$ and $\nabla^2 G_t(F_t(w))$ denote the gradient and Hessian of $G_t$ at $F_t(w)$, and $\nabla^2F_t(w)\brk*{u, u} \in \bbR^K$ is a vector for which the $i$th coordinate is the evaluation of the Hessian operator for $F_{t,i}$ at  $(u, u)$. Using this identity along with \pref{eq:hessian_norm_vectorization}, we get
\begin{align*}
\En_{\eps}\sup_{w\in\cW}\nrm*{\sum_{t=1}^{n}\eps_{t}\grad^2{}(G_t(F_t(w)))}_\sigma &\leq  \En_{\eps}\sup_{w\in\cW}\sup_{\substack{u\in \bbR^d \\ \nrm*{u}_2 \leq{}1}}  
\sum_{t=1}^{n}\eps_{t} \tri*{\nabla G_t \prn*{F_t(w)}, \nabla^2F_t(w)\brk*{u, u}} \\
&~~~~+ \En_{\eps}\sup_{w\in\cW}\sup_{\substack{u\in \bbR^d \\ \nrm*{u}_2 \leq{}1}}  
\sum_{t=1}^{n}\eps_{t} \tri*{\nabla F_t(w), u}^{\trn{}} ~ \nabla^2 G_t(F_t(w)) ~ \tri*{\nabla F_t(w), u}.
\end{align*}

We bound the two terms separately.
\begin{enumerate}
\item \textit{First Term:}~ We introduce a new function that relabels the quantities in the expression. Let $h_1:  \bbR^{2K} \to \bbR$ be defined as $h_1(a, b) = \tri*{a, b}$, let $f_1:\cW \times \bbR^{d} \to\bbR^{K}$ be given by $f_{1}(w, u) = \grad{}G_t(F_t(w))$ and $f_{2}: \cW \times \bbR^{d} \to \bbR^K$ be given by $f_2(w, u) = (\grad^2{}F_{t,k}(w)\brk*{u, u})_{k\in\brk*{K}}$. We apply the block-wise contraction lemma \pref{lem:block_contraction} with one block for $f_1$ and one block for $f_2$ to conclude
\begin{align*}
&\frac{1}{2} \En_{\eps}\sup_{w\in\cW}\sup_{\substack{u\in \bbR^d \\ \nrm*{u}_2 \leq{}1}}  
\sum_{t=1}^{n}\eps_{t}h_1(f_1(w, u),f_2(w, u)) \\
&\hspace{0.5in} \leq{}   L_{F, 1}  \En_{\eps} \sup_{w\in\cW}\sup_{\substack{u\in \bbR^d \\ \nrm*{u}_2 \leq{}1}}  
\sum_{t=1}^{n}\tri*{\beps_{t}, f_1(w, u)}  +  L_{G, 1} \En_{\eps}\sup_{w\in\cW}\sup_{\substack{u\in \bbR^d \\ \nrm*{u}_2 \leq{}1}}  
\sum_{t=1}^{n}\tri*{\beps_{t}, f_2(w, u)} \\
&\hspace{0.5in} \leq{}  L_{F, 1}  \En_{\eps} \sup_{w\in\cW}\sup_{\substack{u\in \bbR^d \\ \nrm*{u}_2 \leq{}1}}  
\sum_{t=1}^{n}\tri*{\beps_{t}, \grad{}G_t(F_t(w))}  +  L_{G, 1} \En_{\eps}\sup_{w\in\cW}\sup_{\substack{u\in \bbR^d \\ \nrm*{u}_2 \leq{}1}}  
\sum_{t=1}^{n} \tri*{\beps_{t},  \grad^2 F_t(w) \brk*{u, u}  }\\
&\hspace{0.5in} \leq{}  L_{F, 1}  \En_{\eps} \sup_{w\in\cW} 
\sum_{t=1}^{n}\tri*{\beps_{t}, \grad{}G_t(F_t(w))}  +  L_{G, 1} \En_{\eps}\sup_{w\in\cW}\sup_{\substack{u\in \bbR^d \\ \nrm*{u}_2 \leq{}1}}  
\prn*{\sum_{t=1}^{n} \grad^2 F_t(w) \beps_t  } \brk*{u, u}\\
&\hspace{0.5in} =  L_{F, 1}  \En_{\eps} \sup_{w\in\cW}
\sum_{t=1}^{n}\tri*{\beps_{t}, \grad{}G_t(F_t(w))}  + L_{G, 1} \En_{\eps}\sup_{w\in\cW}
\nrm*{\sum_{t=1}^{n} \grad^2 F_t(w) \beps_t  }_\sigma.
\end{align*}
\item \textit{Second Term:}~Let us first simplify as
\begin{align*}
\tri*{\nabla F_t(w), u}^{\trn{}} ~ \nabla^2 G_t(F_t(w)) ~ \tri*{\nabla F_t(w), u} &= \sum_{i,j=1}^K \tri*{\nabla F_t(w), u}_i \nabla^2 G_t(F_t(w))_{i, j} \tri*{\nabla F_t(w), u}_j \\
&= \sum_{i,j=1}^K \prn*{u^{\trn{}} \nabla F_{t, i}(w)} \times \frac{\partial^2 G_t}{\partial z_i \partial z_j} \times  \prn*{\nabla F_{t, j}(w)^{\trn{}} u} \\
&= \sum_{i,j=1}^K \frac{\partial^2 G_t}{\partial z_i \partial z_j}  \prn*{ u^{\trn{}}  { \nabla F_{t, i}(w)\nabla F_{t, j}(w)^{\trn{}}} u}, 
\end{align*}

where $\nabla F_{t, j}(w) \ldef{} \nabla F_{t}(w)[:, j] \in \bbR^d$, and the last equality follows by observing that $\frac{\partial^2 G_t}{\partial z_i \partial z_j}$ is scalar. We thus have
\begin{align*}
& \En_{\eps}\sup_{w\in\cW}\sup_{\substack{u\in \bbR^d \\ \nrm*{u}_2 \leq{}1}}  
\sum_{t=1}^{n}\eps_{t} \tri*{\nabla F_t(w), u}^{\trn{}} ~ \nabla^2 G_t(F_t(w))~ \tri*{\nabla F_t(w), u} \\ & \hspace{1in}= \En_{\eps}\sup_{w\in\cW}\sup_{\substack{u\in \bbR^d \\ \nrm*{u}_2 \leq{}1}}  
\sum_{t=1}^{n}\eps_{t} \sum_{i,j=1}^K \frac{\partial^2 G_t}{\partial z_i \partial z_j}  \prn*{ u^{\trn{}}  { \nabla F_{t, i}(w)\nabla F_{t, j}(w)^{\trn{}}} u}.
\end{align*}

Similar to the first part, we introduce a new function that relabels the quantities in the expression. Let $h_2:  \bbR^{2 K^2} \to \bbR$ be defined as $h_2(a, b) = \sum_{i, j=1}^K a_{i, j} b_{i, j} $. Let $f_1:\cW \times \bbR^{d} \to\bbR^{K^2}$ be given by $f_{1}(w, u) = (\grad^2 G_t)(F_t(w))$ and $f_{2}:\cW \times \bbR^{d} \to\bbR^{K^2}$ be given by $f_2(w, u) =  \prn*{ u^{\trn{}}  \nabla F_{t, i}(w)\nabla F_{t, j}(w)^{\trn{}} u}_{\substack{i, j \in [K]}}$. We apply block-wise contraction (\pref{lem:block_contraction}) with one block for $f_1$ and one block for $f_2$ to conclude
\begin{align*}
&\frac{1}{2} \En_{\eps}\sup_{w\in\cW}\sup_{\substack{u\in \bbR^d \\ \nrm*{u}_2 \leq{}1}}  
\sum_{t=1}^{n}\eps_{t}h_2(f_1(w, u),f_2(w, u)) \\
&\hspace{0.5in} \leq{}   L_{F, 2}  \En_{\eps} \sup_{w\in\cW}\sup_{\substack{u\in \bbR^d \\ \nrm*{u}_2 \leq{}1}}  
\sum_{t=1}^{n}\tri*{\beps_{t}, f_1(w, u)}  +  L_{G, 2} \En_{\eps}\sup_{w\in\cW}\sup_{\substack{u\in \bbR^d \\ \nrm*{u}_2 \leq{}1}}  
\sum_{t=1}^{n}\tri*{\beps_{t}, f_2(w, u)} \\
 &\hspace{0.5in} \leq L_{F, 2}\En_{\eps}\sup_{w\in\cW}
\sum_{t=1}^{n} \tri*{\beps_{t} , \nabla^2 G_t(F_t(w))} +  L_{G,2} \En_{\eps}\sup_{w\in\cW} \sup_{\substack{u\in \bbR^d \\ \nrm*{u}_2 \leq{}1}}  
\sum_{t=1}^{n} \sum_{i, j=1}^K \beps_{t, i, j} u^{\trn{}}  \nabla F_{t, i}(w)\nabla F_{t, j}(w)^{\trn{}} u  \\
 &\hspace{0.5in} = L_{F, 2}\En_{\eps}\sup_{w\in\cW}
\sum_{t=1}^{n} \tri*{\beps_{t} , \nabla^2 G_t(F_t(w))} +  L_{G,2} \En_{\eps}\sup_{w\in\cW}
\nrm*{\sum_{t=1}^{n} \sum_{i, j=1}^K \beps_{t, i, j}   \nabla F_{t, i}(w)\nabla F_{t, j}(w)^{\trn{}} }_\sigma. 
\end{align*}
\end{enumerate}

Combining the two terms gives the desired chain rule. 
\end{proof}

\begin{proof}[\pfref{lem:glm_hessian}]
As in \pref{lem:glm_gradient}, let $G_{t}(s) = \prn*{\sigma(s)-y_t}^{2}$ and $F_{t}(w)=\tri*{w,x_t}$, so that $\ls(w\midsem{}x_t,y_t)=G_t(F_t(w))$.

Observe that $G'_t(s)=2\prn*{\sigma(s)-y_t}\sigma'(s)$, $\grad{}F_{t}(w)=x_t$, $\grad^{2}F_{t}=\mb{0}$, $G''_{t}(s)(s)=2(\sigma'(s))^{2} + 2y_t\sigma''(s)$, and $G'''(s)=4\sigma'(s)\sigma''(s) + 2y_t\sigma'''(s)$, which implies that $\abs*{G'''_t(s)}\leq{}6C_{\sigma}^{2}$. Using \pref{thm:chain_rule_hessian} with constants $L_{F,1 } = R^2, ~ L_{F, 2} = 0,~ L_{G, 1} = 2C_\sigma^2~ \text{and} ~ L_{G, 2} = 4 C_\sigma^2 $, we get
\begin{align*}
\En_{\eps}\sup_{w\in\cW}\nrm*{\sum_{t=1}^{n}\eps_{t}\grad{}^{2}\ls(w\midsem{}x_t,y_t)}_{\sigma}
&\leq{}  2 R^2 \En_{\beps}\sup_{w\in\cW}\sum_{t=1}^{n}\eps_t G''_t(\tri*{w, x_t}) +  8C_\sigma^2 \En_{{\eps}}\sup_{w\in\cW}\nrm*{\sum_{t=1}^{n}{\eps}_{t}x_t x_t^{\trn{}}}_{\sigma} , 
\intertext{applying \pref{lem:block_contraction}, }
&\leq{}  12 R^2 C_\sigma^2 \En_{\eps}\sup_{w\in\cW}\sum_{t=1}^{n}\eps_t \tri*{w, x_t} +  8C_\sigma^2 \En_{{\eps}}\sup_{w\in\cW}\nrm*{\sum_{t=1}^{n}{\eps}_{t}x_t x_t^{\trn{}}}_{\sigma}  \\
&= 12R^{2}C_{\sigma}^{2}B\En_{\eps}\nrm*{\sum_{t=1}^{n}\eps_tx_t} + 8C^2_{\sigma}\En_{\eps}\nrm*{\sum_{t=1}^{n}\eps_tx_tx_t^{\trn}}_{\sigma}.
\end{align*}
Invoking \pref{thm:smooth_type} and \pref{fact:smooth}, we have $\En_{\eps}\nrm*{\sum_{t=1}^{n}\eps_tx_t}\leq{}\sqrt{2\beta{}R^{2}n}$ and $\En_{\eps}\nrm*{\sum_{t=1}^{n}\eps_tx_tx_t^{\trn}}_{\sigma}\leq{}\sqrt{2\log(d)R^{4}n}$.
\end{proof}

\end{document}